\def\eqref#1{equation~\ref{#1}}
\def\1{\bm{1}}
\newcommand{\train}{\mathcal{D}}
\def\vx{{\bm{x}}}
\DeclareMathAlphabet{\mathsfit}{\encodingdefault}{\sfdefault}{m}{sl}
\SetMathAlphabet{\mathsfit}{bold}{\encodingdefault}{\sfdefault}{bx}{n}
\def\gN{{\mathcal{N}}}
\def\gX{{\mathcal{X}}}
\def\sR{{\mathbb{R}}}
\DeclareMathOperator*{\argmax}{arg\,max}
\DeclareMathOperator*{\argmin}{arg\,min}
\newcommand{\clr}[1]{{\color{black}#1}}
\definecolor{darkgreen}{RGB}{47, 135, 91} 
\definecolor{commentcolor}{RGB}{110,154,155}
\definecolor{LightCyan}{rgb}{0.88,1,1}
\definecolor{tabPurple}{rgb}{0.87, 0.87, 0.87}
\definecolor{tabPurple2}{rgb}{0.933, 0.937, 1}
\definecolor{tabOrange}{rgb}{0.99, 0.89, 0.79}
\definecolor{tabBlue}{rgb}{0.90, 0.96, 0.97}
\definecolor{tabBlue2}{rgb}{0.92, 0.95, 1}
\definecolor{tabGreen}{rgb}{0.86, 0.90, 0.85}
\definecolor{myyellow}{rgb}{0.98, 0.98, 0.82}
\definecolor{mygreen}{rgb}{0.94, 1.0, 0.94}
\definecolor{darkcyan}{RGB}{67, 117, 148}
\theoremstyle{plain}
\newtheorem{theorem}{Theorem}[section]
\newtheorem{lemma}[theorem]{Lemma}
\theoremstyle{definition}
\theoremstyle{remark}
\newtheorem*{assumption*}{\assumptionnumber}
\providecommand{\assumptionnumber}{}
\newcommand{\ourmethod}{\texttt{CoDe}}
\title{\texttt{CoDe}: Blockwise Control for Denoising Diffusion Models}
\author{
    Anuj Singh\textsuperscript{\rm 1 2} \quad  \quad Sayak Mukherjee\textsuperscript{\rm 1}
    \quad \quad
    Ahmad Beirami\textsuperscript{\rm 3}\quad \quad Hadi Jamali-Rad\textsuperscript{\rm 1 2}\\~\\
    \textsuperscript{\rm 1}Delft University of Technology, The Netherlands\\
    \textsuperscript{\rm 2}Shell Global Solutions International B.V., Amsterdam, The Netherlands\\
    \textsuperscript{\rm 3}Massachusetts Institute of Technology, Cambridge MA, USA\vspace{0.05in}\\
    {\small \texttt{\{a.r.singh, h.jamalirad\}@tudelft.nl}}
}
\begin{document}

\maketitle

\begin{abstract}
Aligning diffusion models to downstream tasks often requires finetuning new models or gradient-based guidance at inference time to enable sampling from the reward-tilted posterior. In this work, we explore a simple inference-time gradient-free guidance approach, called controlled denoising (\ourmethod{}), that circumvents the need for differentiable guidance functions and model finetuning. \ourmethod{} is a blockwise sampling method applied during intermediate denoising steps, allowing for alignment with downstream rewards. Our experiments demonstrate that, despite its simplicity, \ourmethod{} offers a favorable trade-off between reward alignment, prompt instruction following, and inference cost, achieving a competitive performance against the state-of-the-art baselines. Our code is available at: \href{https://github.com/anujinho/code}{https://github.com/anujinho/code}
\end{abstract}

\vspace{-0.1in}
\section{Introduction}
\vspace{-0.2cm}
\begin{wrapfigure}{r}{0.6\textwidth}
    \vspace{-1em}
    \begin{minipage}{0.6\textwidth}
    \centering
    \includegraphics[width=\linewidth]{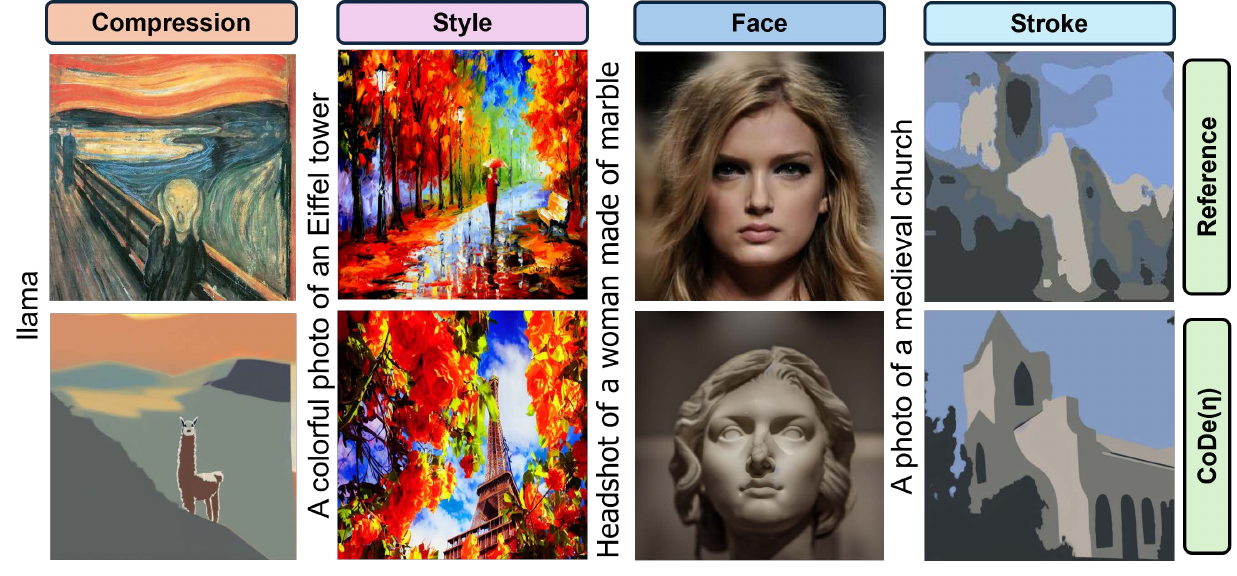}
    \vspace{-0.8cm}
    \caption{\ourmethod{} generates high quality compression (non-differentiable reward), style, face and stroke (differentiable rewards)  guided images.}
    \vspace{-1em}
    \end{minipage}
\end{wrapfigure}

Diffusion models have emerged as a powerful tool for generating high-fidelity realistic images, videos, natural language content and even molecular data \citep{Ho2020DenoisingModels,Song2020DenoisingModels, bar2024lumiere, wu2022diffusion}. While diffusion models have proved to be effective at modeling complex and realistic data distributions, their successful application often hinges on following user-specific instructions in the form of images, text, bounding-boxes or other {\em rewards}. A common approach to the \clr{\emph{alignment}} of diffusion models to user preferences involves finetuning them on preference data, which is typically done through reinforcement learning (RL), to generate samples with a higher reward while maintaining a low KL divergence from the base diffusion model \citep{Fan2023DPOK:Models, Uehara2024UnderstandingReview}. 

Guidance-based approaches keep the base diffusion model frozen and control its output by aligning its generative process to a reward function at inference time. 
{\em Gradient-based guidance} methods utilize gradients of the reward at each denoising step to align the generated samples with the downstream task~\citep{Chung2023DiffusionProblems,yu2023freedom, Bansal2024UniversalModels, he2024manifold}. In addition to requiring access to a {\em differentiable} reward signal, these approaches require memory-intensive gradient computations. On the other hand, {\em gradient-free guidance} methods such as Best-of-$N$~\citep{Beirami2024TheoreticalPolicy} circumvent the need for differentiable rewards but can potentially be computationally intractable as they sometimes need a large number of samples, $N$,  to satisfy the alignment goal.

In this paper, we consider a simple gradient-free guidance approach that aims at remedying the intractability of best-of-$N$. Drawing inspiration from blockwise controlled decoding in language models~\citep{Mudgal2024ControlledModels}, we propose controlled denoising (\ourmethod{}), which exerts best-of-$N$ control over $N$ blocks of $B$ denoising steps rather than waiting for the fully denoised images. Our \emph{key contributions} can be summarized as follows:
\begin{enumerate}[label=\Roman*.]

\item We propose \ourmethod{}  ---  an inference-time blockwise guidance approach which samples from an optimal KL-regularized objective. We study the interplay between the sample size ($N$) and block-size ($B$) and demonstrate that \clr{\ourmethod{} is} effective at improving the reward at the cost of the least amount of KL divergence from the base model. 

\item We assess the performance of the aligned diffusion models structurally for two case studies (Gaussian Mixture Model (GMM), and image generation), in four scenarios under image generation: style, face, stroke, and compressibility guidance.
Our extensive (qualitative and quantitative) experimental results demonstrate that \ourmethod{} achieves competitive performance against the state-of-the-art baselines, while offering a
balanced trade-off between reward alignment, prompt instruction following, and inference cost.
\end{enumerate}

\section{Preliminaries}
\vspace{-0.2cm}
\subsection{Diffusion Models}
\vspace{-0.1cm}
A diffusion model provides an efficient procedure to sample from a probability density \(q(x)\) by learning to invert a forward diffusion process. The forward process is a Markov chain iteratively adding a small amount of random noise to a ``clean'' data point \(x_0 \in \gX\) over \(T\) steps. The noisy sample at step $t$ is given by \(x_t = \sqrt{\Bar{\alpha}_t}x_{0} + \sqrt{1 - \Bar{\alpha}_t}\varepsilon_t\), where \(\varepsilon_t \sim \gN(0, 1)\), \(\alpha_t = 1- \beta_t\),  \(\Bar{\alpha}_t = \prod_{t=1}^T \alpha_t\), and \(\{\beta_t \}_{t \in [T]}\) is a variance schedule \citep{Ho2020DenoisingModels, Nichol2021ImprovedModels}. The forward process can then be expressed as:
\begin{equation}
    q(x_{1:T}|x_0) = \prod_{t=1}^T q(x_t|x_{t-1}),\quad q(x_t|x_{t-1}) = \mathcal{N}(x_t; \sqrt{1-\beta_t}x_{t-1}, \beta_t\mathbf{I}).
\end{equation}
Now, to estimate \(q(x)\), the diffusion model \(p_\theta\) learns the conditional probabilities \(q(x_{t-1}|x_t)\) to reverse the diffusion process starting from a fully noisy sample \(x_T\sim \gN(0,1)\) as:
\begin{equation}
    p_{\theta}(x_0) = p(x_T)\prod_{t=1}^T p_{\theta}(x_{t-1}|x_t),\quad p_{\theta}(x_{t-1}|x_t) = \mathcal{N}(x_{t-1}; \mu_{\theta}(x_t, t), \beta_t\mathbf{I}), 
\end{equation}
where the variance is fixed at $\beta_t\mathbf{I}$, and only  $\mu_{\theta}(x_t, t)$ is learned as:
\begin{equation}
    \mu_{\theta}(x_t, t) = \frac{1}{\sqrt{\alpha_t}} \left( x_t - \frac{1-\alpha_t}{\sqrt{1 - \Bar{\alpha}_t}}\varepsilon_\theta(x_t, t)\right).
\end{equation}
Here, \(\varepsilon_\theta\) is a neural network which attempts to predict the noise added to \(x_{t-1}\) in the forward process as:
\begin{equation}
    \varepsilon_\theta(x_t, t) \approx \varepsilon_t = \frac{x_t - \sqrt{\Bar{\alpha}_t}x_{0}}{\sqrt{1 - \Bar{\alpha}_t}}.
\end{equation}
Furthermore, using a conditioning signal \(c\), diffusion models can be extended to sample from \(p_\theta(x|c)\). The conditioning signal, $c$, can take diverse forms, from text prompts and categorical information to semantic maps \citep{Zhang2023AddingModels, Mo2023FreeControl:Condition}. Our work focuses on a text-conditioned model, Stable Diffusion \citep{Rombach2021High-ResolutionModels}, which has been trained on a large corpus consisting of \(M\) image-text pairs \(\train = \{(x^i, c^i)\}_{i=1}^M\) using a reweighed version of the variational lower bound \citep{Ho2020DenoisingModels} as optimization loss function
\begin{equation}
    \label{eq:obj_noise}
    \hat{\theta} = \argmin_{\theta} \, \mathbb{E}_{t\sim [1,T],\; x_0, \varepsilon_t} \left[ \|\varepsilon_t - \varepsilon_{\theta}(\sqrt{\Bar{\alpha}_t}x_{0} + \sqrt{1 - \Bar{\alpha}_t}\varepsilon_t,\; c,\; t)\|^2 \right].
\end{equation}

\subsection{KL-Regularized Objective}
\label{sec:klobj}

Consider we have access to a reference diffusion model \(p(\cdot)\), which we refer to as the \emph{base} model. Note that here we drop $\theta$ (from $p_{\theta}$) for the ease of notation, also because base diffusion model parameters are kept intact throughout the inference-time guidance. Our goal is to obtain samples from the base model that optimize a downstream reward function \(r(\cdot): \gX \rightarrow \sR\), while ensuring that the sampled data points do not deviate significantly from $p$ to prevent degeneration in terms of image fidelity and diversity of the output samples \citep{Ruiz_2023}. Thus, we aim to sample from a reward \emph{aligned} diffusion model ($\pi$) that optimizes for a KL-regularized objective to satisfy both requirements. Let us start by defining some key concepts. 


\textbf{Value function}. The expected reward when decoding continues from a partially decoded sample \(x_t\):
\begin{equation}
    \label{eq:value}
    V(x_t; p) = \mathbb{E}_{x_0 \sim p(x_0|x_t)} [r(x_0)].
\end{equation}
\textbf{Advantage function}. We can define a one-step advantage of using another diffusion model $\pi$ for optimizing the downstream reward as:
\begin{equation}
    \label{eq:adv}
    A(x_t; \pi) := \mathbb{E}_{x_{t-1} \sim \pi(x_{t-1}|x_t)} \left[V(x_{t-1}; p)\right] - \mathbb{E}_{x_{t-1} \sim p(x_{t-1}|x_t)} \left[V(x_{t-1}; p)\right].
\end{equation}
It is important to note that the advantage of the base model (when \(\pi = p\)) is $0$. Thus, we aim to choose an \emph{aligned} model $\pi$ to achieve a positive advantage over the base model.

\textbf{Divergence}. We further denote the KL divergence ($\text{KL}(.||.)$, also known as relative entropy) between the aligned model $\pi$ and the base model $p$  at each intermediate step $x_t$ as:
\begin{equation}
\begin{split}
    \label{eq:div}
    D(x_t; \pi) := \textit{KL}\big(\pi(x_{t-1}|x_{t})\;\|\;p(x_{t-1}|x_{t})\big)
    = \int \pi(x_{t-1}|x_{t}) \log\frac{\pi(x_{t-1}|x_{t})}{p(x_{t-1}|x_{t})} \, dx_{t-1}.
\end{split}
\end{equation}
%
%
\textbf{Objective}. Using \cref{eq:adv} and \cref{eq:div}, we can now formulate the KL-regularized objective as:
\begin{equation}
    \label{eq:klobj}
    \pi_\lambda^* = \argmax_{\pi} \big[ \lambda A(x_t; \pi) - D(x_t; \pi) \big],
\end{equation}
where $\lambda \in \mathbb{R}^{\geq 0}$ trades off reward for drift from the base diffusion model $p$.

\begin{theorem}
\label{thm:optpol}
The optimal model $\pi_\lambda^*$ for the objective formulated in \cref{eq:klobj} is given by:
\begin{equation}
    \label{eq:optpol}
    \pi_\lambda^*(x_{t-1}|x_{t}) \propto p(x_{t-1}|x_{t}) \, e^{\lambda V(x_{t-1}; p)}.
\end{equation}
\end{theorem}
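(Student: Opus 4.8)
The plan is to recognize \cref{eq:klobj} as a standard KL-regularized reward-maximization problem over the single-step transition distribution $\pi(\cdot|x_t)$, for which the optimizer is the familiar exponential tilting of the base distribution. First I would expand the objective as a functional of $\pi(x_{t-1}|x_t)$: substituting the definitions \eqref{eq:adv} and \eqref{eq:div}, the terms that depend on $\pi$ are $\lambda\,\mathbb{E}_{x_{t-1}\sim\pi(x_{t-1}|x_t)}[V(x_{t-1};p)] - \int \pi(x_{t-1}|x_t)\log\frac{\pi(x_{t-1}|x_t)}{p(x_{t-1}|x_t)}\,dx_{t-1}$, since the $\mathbb{E}_{x_{t-1}\sim p}[V(x_{t-1};p)]$ piece of the advantage is a constant (independent of $\pi$) and drops out of the $\argmax$. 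So the problem reduces to maximizing $\int \pi(x_{t-1}|x_t)\big(\lambda V(x_{t-1};p) - \log\frac{\pi(x_{t-1}|x_t)}{p(x_{t-1}|x_t)}\big)\,dx_{t-1}$ over probability densities $\pi(\cdot|x_t)$.

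Next I would solve this constrained variational problem. The clean route is to rewrite the objective (up to the additive constant $\log Z$, where $Z = Z(x_t) := \int p(x_{t-1}|x_t)e^{\lambda V(x_{t-1};p)}\,dx_{t-1}$) as $-\,\textit{KL}\big(\pi(x_{t-1}|x_t)\,\big\|\,q^*(x_{t-1}|x_t)\big) + \log Z$, where $q^*(x_{t-1}|x_t) := \frac{1}{Z}\,p(x_{t-1}|x_t)e^{\lambda V(x_{t-1};p)}$ is a bona fide probability density. Since KL divergence is nonnegative and vanishes exactly when its two arguments coincide (Gibbs' inequality), the maximum is attained uniquely at $\pi_\lambda^* = q^*$, which is precisely \eqref{eq:optpol}. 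Alternatively, one can obtain the same answer via a Lagrange multiplier for the normalization constraint $\int \pi(x_{t-1}|x_t)\,dx_{t-1} = 1$ and setting the functional derivative with respect to $\pi(x_{t-1}|x_t)$ to zero, which gives $\lambda V(x_{t-1};p) - \log\pi(x_{t-1}|x_t) + \log p(x_{t-1}|x_t) - 1 - \nu = 0$, i.e.\ $\pi(x_{t-1}|x_t)\propto p(x_{t-1}|x_t)e^{\lambda V(x_{t-1};p)}$; I would still invoke concavity of the objective in $\pi$ to conclude this stationary point is the global maximum.

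The main obstacle is not the algebra but the technical care around well-definedness: one must check that $Z(x_t) < \infty$ so that $q^*$ is normalizable (this needs $V$, equivalently $r$, to be bounded above, or at least to have finite exponential moment under $p(\cdot|x_t)$ — presumably a standing assumption on the reward), and one should note that the supremum is over densities absolutely continuous with respect to $p(\cdot|x_t)$ for the KL term to be finite. I would state these as mild regularity assumptions (bounded reward suffices) rather than belaboring them. A secondary point worth a sentence: the identity rewriting the objective as $\log Z - \textit{KL}(\pi\|q^*)$ is just completing the ``log-partition'' square, and it simultaneously gives uniqueness of the optimizer, so this is the cleanest presentation to include in the paper.
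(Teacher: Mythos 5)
Your proposal is correct and follows essentially the same route as the paper's proof: after dropping the $\pi$-independent part of the advantage, both arguments complete the ``log-partition square'' to rewrite the objective as $\log Z_\lambda(x_t) - \textit{KL}\big(\pi \,\|\, p(x_{t-1}|x_t)e^{\lambda V(x_{t-1};p)}/Z_\lambda\big)$ (plus a constant) and invoke nonnegativity of the KL divergence to conclude the unique maximizer is the tilted distribution. Your added remarks on normalizability of $Z$ and absolute continuity are sensible housekeeping that the paper omits, but they do not change the argument.
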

As we shall discuss in Section~\ref{sec:code}, our proposed approach builds on Theorem~\ref{thm:optpol} to approximately sample from this reward aligned model using a Monte Carlo sampling strategy. An extension of the result in a conditional diffusion setting can be found in Appendix~\ref{app:proofs}. Notably, this is a step-wise variant of the more widely known similar objective~\citep{Korbak2022RLInference}, which has been used in the context of language models (\cite{Beirami2024TheoreticalPolicy, Mudgal2024ControlledModels}), and in some learning-based methods \citep{Prabhudesai2023AligningBackpropagation,Fan2023DPOK:Models,Wallace2023DiffusionOptimization,Black2023TrainingLearning,gu2024diffusionrpo,lee2024direct} discussed in Section~\ref{sec:lit} for fine-tuning a diffusion model. However, contrary to the prior art, we use our objective directly for a guidance-based alignment, where as the end-to-end objective would be intractable. We also remark that this advantage is similar to controlled decoding~\citep{Mudgal2024ControlledModels} and how it enables efficient sampling from reward guided distributions in language models.
In Appendix~\ref{app:samld}, we demonstrate that sampling can be achieved using Langevin dynamics \citep{Welling2011BayesianDynamics}, resulting in a generalized form of classifier guidance \citep{Dhariwal2021DiffusionSynthesis}. However, a key limitation of gradient-based approaches is the need for a differentiable reward function. To alleviate this, we explore a sampling-based method for model alignment allowing us to handle both differentiable and non-differentiable downstream rewards.

\vspace{-.1in}
\section{\texttt{CoDe}: Blockwise Controlled Denoising}
\label{sec:code}
\vspace{-.1in}

Inspired by recent RL-based alignment strategies for LLMs through process rewards or value-guided decoding~\citep{Mudgal2024ControlledModels}, we propose a sampling-based guidance method to align a conditional pretrained diffusion model, $p(\cdot | c),$ following the optimal solution, $\pi_\lambda^*$, described in Theorem~\ref{thm:optpol}. In the following, we outline an approach to approximate the value function for intermediate noisy samples followed by introducing our sampling-based alignment strategy. Our proposed approach, coined as \ourmethod{}, is summarized in Algorithm~\ref{algo:code}. 

\textbf{Approximation of the value function.} To compute the value function in \cref{eq:value} for an intermediate noisy sample \(x_t\), it is necessary to compute the expectation over \(x_0 \sim p(x_0|x_t)\). Note that for diffusion models such as DDPMs \citep{Ho2020DenoisingModels}, the predicted clean sample \(\hat{x}_0\) can be estimated given an intermediate sample \(x_t\) using Tweedie's formula \citep{efron2011tweedie} as follows:
\begin{equation}
    \label{eq:predpos}
    \hat{x}_0 = \mathbb{E}[x_0|x_t] =  \frac{x_t - \sqrt{1 - \Bar{\alpha}_t}\varepsilon_\theta(x_t, c, t)}{\sqrt{\Bar{\alpha}_t}}.
\end{equation}
By plugging  \cref{eq:predpos} into \cref{eq:value}, the value function can be approximated as:
\begin{equation}
    \label{eq:approxval}
        V(x_t; p, c) = \mathbb{E}_{x_0 \sim p_{\theta}(x_0|x_t, c)} [r(x_0)] \approx r(\mathbb{E}[x_0|x_t]) = r(\hat{x}_0).
\end{equation}
The benefit of such an approximation is that it circumvents the need for training a separate model to learn the value function, as is for instance adopted by DPS \citep{Chung2023DiffusionProblems} and Universal Guidance \citep{Bansal2024UniversalModels}. \clr{However, in certain scenarios, it is also possible to use a pre-trained detection or segmentation model to extract reward-aligned features and then use Tweedie's formula to obtain $\hat{x}_0$}. According to the Tweedie's formula, the approximation of the conditional expectation, $\mathbb{E}_{x_0} [r(x_0)]$, is tight when the base diffusion model parameters $\theta$ perfectly optimize Eq.~\ref{eq:obj_noise}. For example, this approximation is expected to be more accurate towards the end of the denoising process~\citep{Ye2024TFG:Models}.


%
\begin{wrapfigure}{r}{0.45\textwidth}
\vspace{-14pt}
\begin{minipage}{0.45\textwidth}
\IncMargin{1.6em} 
\begin{algorithm}[H]
    \scriptsize
    \SetAlgoLined
    \DontPrintSemicolon
    \SetNoFillComment
    \Indm
    \KwInput{\(p\), $T$, \(N\), \(B\), $c$}
    \Indp
        {Sample initial noise:} \(x_T \sim \gN(0, I)\)\\
        {Initialize counter:} \(s = 1\)\\
        \For{\(t \in [T-1,\cdots, 0]\)}{
            \If{$\textup{\texttt{mod}}(s,B) = 0$}{
                {Sample \(N\) times over \(B\) steps:}
                \clr{{\quad \(\{x_{t-1}^{(n)}\}_{n=1}^N \overset {i.i.d.} \sim \, \prod_{i=t}^{t+B} p(x_{i-1}|x_i)\)}}\\
                {\clr{Compute values of all N samples:}}
                \clr{{\quad \(\{V(x_{t-1}^{(n)})\}_{n=1}^N = \, \{r(\mathbb{E}[x_0|x_{t-1}^{(n)}])\}_{n=1}^N\)}}\\
                {Select the sample with maximum value:}
                {\quad \(x_{t-1} \gets \underset{\{x_{t-1}^{(n)}\}_{n=1}^N}{\operatorname{argmax}} V(x_{t-1}^{(n)}; p, c)\)}\\
            }
            \(s \gets s + 1\)\\}
        
    \Indm
    \KwOutput{\(x_0\)}
    \Indp
    \caption{\ourmethod{}}
    \label{algo:code}
\end{algorithm}
\DecMargin{1.6em}
\end{minipage}
\vspace{-14pt}
\end{wrapfigure}

Our objective is to achieve an improved alignment vs. divergence trade-off by sampling from the optimal solution presented in Theorem~\ref{thm:optpol}. Therefore, by taking advantage of the approximation in \cref{eq:approxval}, we present a blockwise extension of Best-of-N (BoN) for diffusion models, termed as \textbf{Co}ntrolled \textbf{De}noising (\ourmethod{}) and outlined in Algorithm~\ref{algo:code}. \clr{Note that the timesteps during diffusion denoising are indexed from $T$ to $0$ (in descending order), and not $0$ to $T$ (line 3).} \ourmethod{} integrates BoN sampling into the standard inference procedure of a pretrained diffusion model. Unlike BoN, instead of rolling out the full denoising  \(N\) times and selecting the best resulting sample, we opt for performing blockwise BoN. Specifically, for each block of \(B\) steps, we unroll the diffusion model \(N\) times independently (Algorithm~\ref{algo:code}, line $5$). \clr{Then, based on the value function estimation (line $6$) using \cref{eq:approxval}}, select the best sample (line $7$) to continue the reverse process until we obtain a clean image at \(t=0\). A key advantage of \ourmethod{} is its ability to achieve similar alignment-divergence trade-offs while using a significantly lower value of \(N\), as is demonstrated in Section~\ref{sec:toy}. \clr{A more detailed discussion on the optimal reward-tilted posterior in \cref{eq:optpol} and our blockwise controlled denoising procedure can be found in Appendix~\ref{sec: full_code}}. 

\textbf{Best-of-N (BoN) sampling for diffusion models.} A strong baseline for inference-time alignment is Best-of-N (BoN). Empirical evidence from the realm of large language models (LLMs)~\citep{Gao2022ScalingOveroptimization, Mudgal2024ControlledModels, Gui2024BoNBoNSampling} suggests that BoN closely approximates sampling from the optimal solution presented in Theorem~\ref{thm:optpol}, which is  theoretically corroborated by~\cite{Beirami2024TheoreticalPolicy, yang2024asymptotics}. More recently, BoN has emerged as a strong baseline for scaling inference-time compute~\citep{snell2024scaling, brown2024large}.
In BoN, \(N\) samples are obtained from the diffusion model by completely unrolling it out over \(T\) denoising steps. Then, the most favorable image is selected based on a reward. This renders BoN sampling equivalent to \ourmethod{} with $B=T$. For other intermediate values of $B,$ \ourmethod{} could be seen as a blockwise generalization of BoN. 

\textbf{Soft Value-Based Decoding (SVDD) for diffusion models.} Concurrently to our work, \citet{Li2024Derivative-FreeDecoding} proposed an iterative sampling method to integrate soft value function-based reward guidance into the standard inference procedure of pre-trained diffusion models. The soft value function helps look-ahead into how intermediate noisy states lead to high rewards in the future. Specifically, this method involves first sampling \(N\) samples from the base diffusion model, and then selecting the sample corresponding to the highest reward across the entire set. This highest-reward sample is used for the next denoising step in the reverse-diffusion process. This renders SVDD-PM sampling as a special case of \ourmethod{}, operating specifically on a step block size $B = 1$.

\textbf{Why blockwise BoN is almost optimal.}
\ourmethod's technique for sampling from the reward-tilted posterior and its theoretical optimality in terms of reward vs divergence tradeoffs has also been used in related works such as \citep{Li2024Derivative-FreeDecoding, Beirami2024TheoreticalPolicy, Gui2024BoNBoNSampling, yang2024asymptotics}. Specifically on the optimality of this sampling technique, we would like to mention that several recent works have shown that BoN sampling is almost optimal in terms of reward vs divergence tradeoffs \citep{Beirami2024TheoreticalPolicy, Gui2024BoNBoNSampling, yang2024asymptotics, Mudgal2024ControlledModels}. In particular, Theorem 1 from \citep{yang2024asymptotics} shows that the samples obtained from BoN follow the same distribution as the optimal CD from \cref{eq:optpol}. This is the reason \citep{Mudgal2024ControlledModels} reported the most favorable reward vs divergence tradeoffs using blockwise language model decoding as blockwise decoding is also optimal in terms of reward vs divergence given that it interpolates two (almost) optimal decoding schemes. 


\section{Experimental Setup}
\label{sec:exp}
We assess the performance of \ourmethod{} by comparing it against a suite of existing state-of-the-art guidance methods, in Text-to-Image (T2I) and Text-and-Image-to-Image ((T+I)2I)  scenarios, across both differentiable and non-differentiable reward models. Unless otherwise mentioned, for all experiments, we use a pretrained Stable Diffusion version \(1.5\) \citep{Rombach2021High-ResolutionModels} as our base model, which is trained on the LAION-\(400\)M dataset \citep{schuhmann2021laion400m}. As highlighted earlier, we strive to present meaningful comparative (both qualitative and quantitative) results across a variety of scenarios. For quantitative evaluations, we generate $50$ images per setting (i.e., prompt-reference image pair) with $500$ DDPM steps. To achieve this, we have used NVIDIA A100 GPUs with $80$GB of RAM. Through extensive experiments, we aim to answer the following questions: {\em Does \ourmethod{} offer a comptetitive alignment-divergence trade-off compared to other baselines? How does \ourmethod{} perform across guidance tasks qualitatively and quantitatively?}

\textbf{Baselines.} We select a set of widely adopted baselines from the literature. Recall that our goal is to sample from the optimal value of the KL-regularized objective, as outlined in Theorem~\ref{thm:optpol}. One approach to achieve this, as detailed in Appendix~\ref{app:samld}, is using a gradient-based method with an approximated value function, as in DPS \citep{Chung2023DiffusionProblems}, which serves as our first baseline. Further, Universal Guidance (UG) \citep{Bansal2024UniversalModels}, \clr{MPGD \citep{he2024manifold} and Freedom \citep{yu2023freedom}}, improve upon DPS by offering better gradient estimation. Another way to sample from Theorem~\ref{thm:optpol} is by using a sampling-based approach such as in \ourmethod{}. In this direction, we consider Best-of-N (BoN) \citep{Beirami2024TheoreticalPolicy} and SVDD-PM \citep{Li2024Derivative-FreeDecoding} as our third and fourth baselines, which are also special cases of \ourmethod{} as explained earlier. For the sake of completeness, we also consider SDEdit \citep{Meng2021SDEdit:Equations} as a relevant (T+I)2I approach, for which all baselines could build on.

\textbf{Extensions with Noise Conditioning.} When the reward distribution deviates significantly from the base distribution \(p\), sampling-based approaches would require a relatively larger value of \(N\) to achieve alignment. To tackle this, a reference input sample, e.g. an image with the desired paining stroke or style, denoted as \(x_{\mathrm{ref}}\),  is provided as an additional conditioning input. Next to that, inspired by image editing techniques using diffusion \citep{Meng2021SDEdit:Equations,koohpayegani2023genie}, we add partial noise corresponding to only $\tau = \eta \times T$ (with $\eta \in (0, 1]$) steps of the forward diffusion process, instead of the full noise corresponding to $T$ steps. Then, starting from this noisy version of the reference image $x_\tau$, \ourmethod{} progressively denoises the sample for only $\tau$ steps to generate the clean, reference-aligned image $x_0$. By conditioning the initial noisy sample $x_\tau$ on the reference image $x_{\textup{ref}}$, we can generate images $x_0$ that better incorporate the characteristics and semantics of the reference image while adhering to the text prompt $c$. An extended version of Algorithm~\ref{algo:code} with noise-conditioning, denoted as \ourmethod($\eta$) is discussed in detail in Appendix~\ref{sec: full_code} (see lines $1-3$ in Algorithm~\ref{algo:full_code}). For the sake of fair comparison, we apply this enhancement also to other (T+I)2I baselines denoting them as BoN($\eta$), SVDD-PM($\eta$), UG ($\eta$) and DPS($\eta$). As we demonstrate in our experimentation, threshold $\eta$ provides an \emph{extra knob} built in \ourmethod{} allowing the user to efficiently trade off divergence for reward. Note that the reward-conditioning of the generated image is inversely proportional to the value of $\eta$. Setting $\eta = 1$ results in $\tau = T$ and fully deactivates the input-image conditioning. A byproduct of this conditioning is compute efficiency, as is discussed in Section~\ref{sec:conc}.


\textbf{Evaluation Settings and Metrics.} We consider two case studies. \textbf{Case Study I}: a prototypical $2$D Gaussian Mixture Models (GMMs) in Section~\ref{sec:toy}, as is also studied in \citep{Ho2021Classifier-FreeGuidance,Wu2024TheoreticalModels}; \textbf{Case Study II}: widely adopted T2I and (T+I)2I evaluations using Stable Diffusion in Section~\ref{sec:sdres} across \clr{five} reward-alignment scenarios: (i) style, (ii) face (iii) stroke, (iv) compressibility guidance, \clr{and (v) aesthetic guidance}. For Case Study I, we present trade-off curves for win rate versus KL-divergence for all baselines. For Case Study II, since calculating KL-divergence in high-dimensional image spaces is intractable, we use Frechet Inception Distance (FID) \citep{heusel2017gans}. To ensure we capture alignment w.r.t reference image (and avoid using the guidance reward itself) we borrow an image alignment metric commonly used in style transfer domain \citep{gatys2016image, yeh2020improving}, referred to as I-Gram here. Further, we assess prompt alignment using CLIPScore \citep{Hessel_2021}, referred to as T-CLIP throughout the paper. Additionally, we consider win rate (commonly adopted in the LLM space) as yet another evaluation metric, where it reflects on the number of samples offering larger reward than the base model. To sum up, we consider expected reward, FID, I-Gram, T-CLIP, and win rate.  


%

\section{Case Study I: Gaussian Mixture Models (GMMs)}
\label{sec:toy}
\vspace{-0.1cm}


To establish an in-depth understanding of the impact of the proposed methods, we start with a simple model/reward distribution as shown in Fig.~\ref{fig:toysetup}. For the prior distribution, we consider a $2$D Gaussian mixture model  $p(\vx_0) = \sum_{i=0}^{2} w_i\mathcal{N}(\bm \mu_i, {\bm \sigma}^2{\bm I}_2)$, where $\sigma = 2$, $[{\bm \mu}_1, {\bm \mu}_2,{\bm \mu}_3] = [(5, 3), (3, 7), (7, 7)]$, and ${\bm I}_d$ is an $d$-dimensional identity matrix. 
Additionally, we define the reward distribution as \(p(r|\vx) = \gN({\bm \mu}_r, {\bm \sigma}_r^2{\bm I}_2)\) with \({\bm \mu}_r = [14,3]\) and \({\bm \sigma}_r = 2\). As can be seen in the figure, in this case and by design, reward distribution is far off the peak of the prior. Here, we train a diffusion model with a \(3\)-layer MLP that takes as input \((\vx_t, t)\) and predicts the noise \({\bm \varepsilon}_t\). This model is trained over \(200\) epochs with \(T=1000\) denoising steps. Note that all other discussed baselines can straightforwardly be trained in this setting.
\begin{figure}[t]
    \centering
    \includegraphics[width=0.65\linewidth]{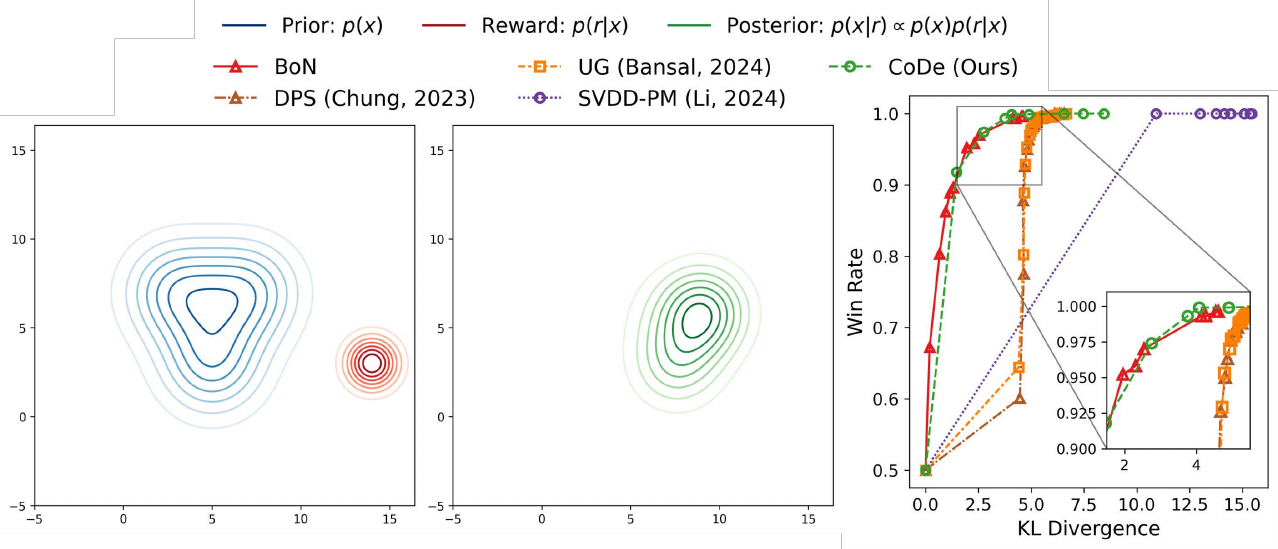}
    \vspace{-0.3cm}
    \caption{Setup (left, middle) and reward vs. divergence trade-off (right) for Case Study I. \ourmethod{} offers highest reward at lowest divergence with much lower $N$ than BoN.}
    \label{fig:toysetup}
\end{figure}
%
%
\begin{figure}[t]
    \centering
    \includegraphics[width=0.95\linewidth]{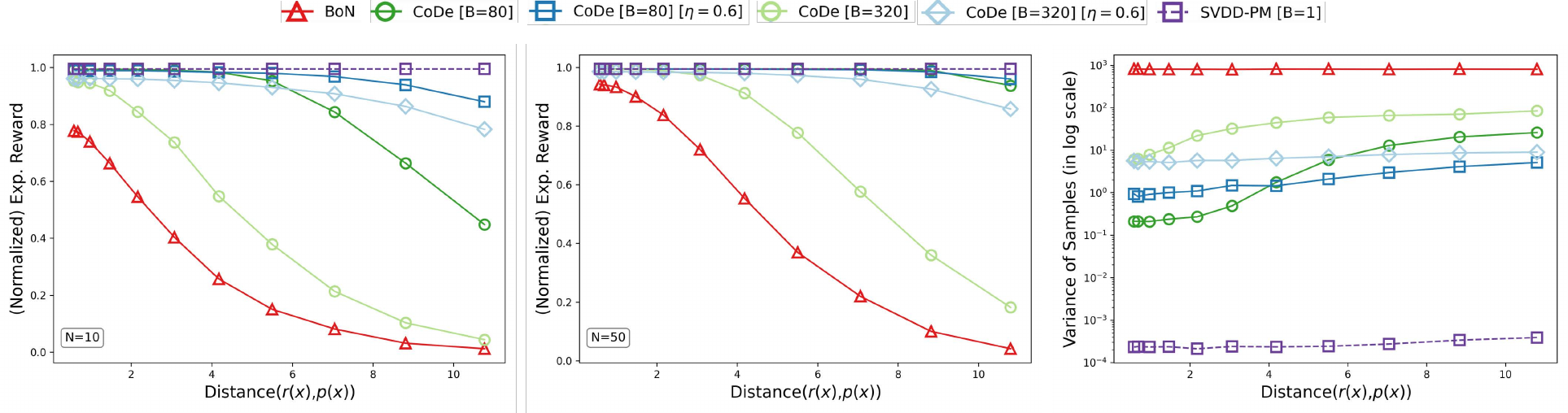}
    \vspace{-0.4cm}
    \caption{In contrast to BoN, SVDD-PM, \ourmethod{} with and without noise-conditioning ($\eta=0.6$, $\eta=1$, resp.) are robust against increased distance between reward and prior distributions. SVDD-PM's generated samples offer almost zero variance indicating reward over-optimization.}
    \label{fig:addrestoy}
    \vspace{-0.5cm}
\end{figure}
%
The results are illustrated in Fig.~\ref{fig:toysetup} where we plot win rate vs. KL-divergence for different values of $N \in [2, 500]$. The details for computing the KL divergence have been provided in the appendix \ref{sec:kl_comp_details}. For the guidance-based methods DPS and UG, the guidance scale is varied between \(1\) and \(50\), whereas for the sampling-based methods, BoN the number of samples \(N\) is varied between \(2\) and \(500\), while for SVDD and \ourmethod{}, the number of samples \(N\) is varied between \(2\) and \(40\). 

As can be seen, BoN offers the upper bound of performance with \ourmethod{} achieving on-par performance trade-offs. This aligns with the observations from the realm of LLMs \citep{Beirami2024TheoreticalPolicy,Gui2024BoNBoNSampling}, where BoN has been theoretically proven to offer the best win-rate vs KL divergence trade-offs. However, it is important to notice that \ourmethod{} achieves an on-par win-rate vs KL divergence trade-off with BoN for a much smaller $N$. Specifically, \ourmethod{} with $N \in [2,10]$ achieves the same win rate vs KL divergence performance as BoN with $N \in [30,500]$, rendering \ourmethod{} roughly $10$-$15\times$ more efficient than BoN. 

In contrast, UG and DPS tend to exhibit higher KL divergence, as they often collapse to the mode of the reward distribution when the guidance scale is increased, leading to a reduction in diversity among the sampled data points, a phenomenon also noted in prior research \citep{Sadat2024CADS:Sampling,Ho2021Classifier-FreeGuidance}. In both scenarios, SVDD achieves a high expected reward (or win rate) but at the expense of significantly higher divergence, even for \(N = 2\). In contrast, \ourmethod{} offers flexibility, allowing users to balance the trade-off by adjusting parameters such as \(N\) and \(B\), as is demonstrated here. For a different scenario (and for providing a more comprehensive picture), where the reward distribution falls within the distribution of the prior see Appendix~\ref{sec:toysc1}.

Let us dive one step deeper into comparing the performance of \ourmethod{}, \ourmethod($\eta$), BoN and SVDD-PM. To this aim, in Fig.~\ref{fig:addrestoy}, we vary the distance between the mean of the reward and prior distributions, gradually shifting the reward further away. To handle this scenario effectively, we use noise conditioning for \ourmethod, denoted by ($\eta = 1$, $0.6$), by sampling from the \emph{known} reward distribution and providing it as an input conditioning sample. We also study the impact of block size $B$ for reward-guidance by varying it between $B=[1,80,320]$, with $B=1$ corresponding to SVDD-PM. This is shown for $N = 10$, $50$ in Fig.~\ref{fig:addrestoy} where the expected reward sharply drops for BoN regardless of choice of $N$, whereas it drops less or remains almost intact for \ourmethod{} with $\eta=0.6$, $B=[80,320]$. In the case of $\eta=1$ for \ourmethod{}, we notice that the reward drops sharply for a larger block size ($B=320$), while almost remaining constant or dropping lesser for a smaller block size ($B=80$). On the other hand, SVDD-PM, imposing token-wise aggressive guidance with $B=1$ offers a high, constant reward for both $N=[10,50]$. However, SVDD-PM's generated samples have a variance that is orders of magnitude lower than BoN or \ourmethod{}, as can be seen in the right most part of Fig.~\ref{fig:addrestoy}. This particularly low variance of SVDD-PM's generated samples (almost $10^{-4}$) indicates their collapse to a single point in the reward distribution. This has been studied extensively in the literature and is referred to as reward over-optimization \citep{Prabhudesai2023AligningBackpropagation}, and corroborates the need for keeping a small KL divergence from the base model, as also empirically and theoretically argued by~\citep{Beirami2024TheoreticalPolicy, Gao2022ScalingOveroptimization}
\vspace{-.1in}
\section{Case Study II: Image Generation with Stable Diffusion}
\label{sec:sdres}

\vspace{-0.1cm}
\begin{figure}[t]
    \centering
    \includegraphics[width=0.85\linewidth]{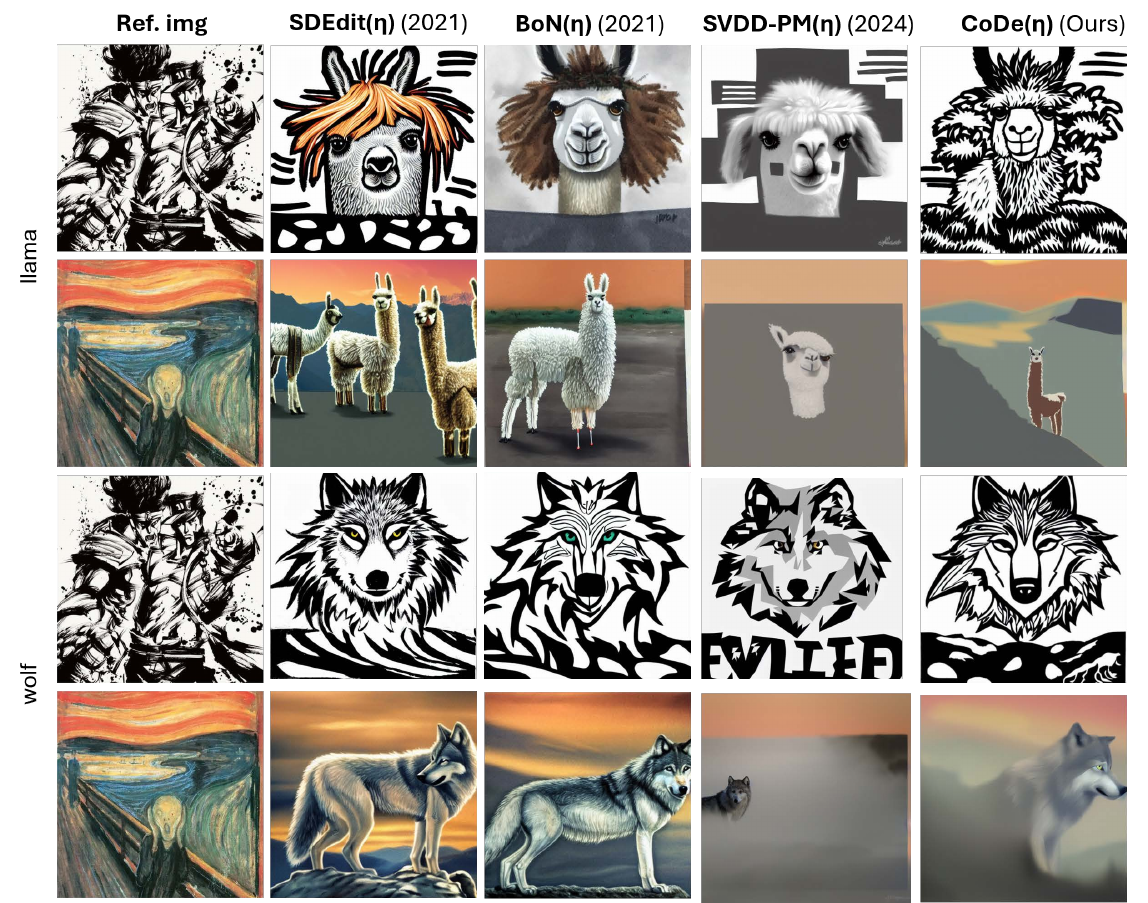}
    \vspace{-0.3cm}
    \caption{\ourmethod($\eta$) demonstrates a superior trade-off between compressibility, image and text alignment as compared to other baselines on the (T+I)2I settings.}
    \label{fig:compress}
    \vspace{-.1in}
\end{figure}

We consider \clr{five} commonly adopted guidance scenarios: \emph{compressibility}, \emph{style}, \emph{stroke}, \emph{face} and \clr{\emph{aesthetics}} guidance. \clr{We consider T2I tasks for compressibility and aesthetics guidance scenarios and (T+I)2I tasks for compressibility, style, face and stroke guidance scenarios.} For each scenario, the reward model is task specific as elaborated in the following. A text prompt as well as a reference image are used as guidance signals. For the first three scenarios, a total of $33$ generation settings (i.e., text prompt - reference image pairs) are used for evaluations. For compressibility guidance, we have $12$ settings. Per setting, we generate $50$ samples and estimate the evaluation metrics accordingly. On the qualitative side, to demonstrate the capacity of \ourmethod($\eta$) compared to other baselines, we illustrate a few generated examples across two reference images for two different text prompts. In favor of space, the qualitative results for face \clr{and tradeoff curves for aesthetics guidance are deferred to the Appendix~\ref{sec:ad}, Figs.~\ref{fig:face_eta},~\ref{fig:face} and Fig.~\ref{fig:aesthetics}, respectively}. On the quantitative side, given the non-differentiable nature of compressibility as guidance signal, we demonstrate the efficacy of \ourmethod{} as compared to only sampling-based baselines in Table~\ref{tab:compress}. For differentiable reward-guidance scenarios (style, face and stroke), we evaluate the performance across all scenarios/settings combined for further statistical significance in Tables~\ref{tab:noise_abl},~\ref{tab:quant}.  

\subsection{Non-Differentiable Reward: Compression}
\begin{wrapfigure}[20]{r}{0.5\textwidth}
    \vspace{-.18in}
    \begin{minipage}{0.5\textwidth}
    \centering
    \includegraphics[width=\linewidth]{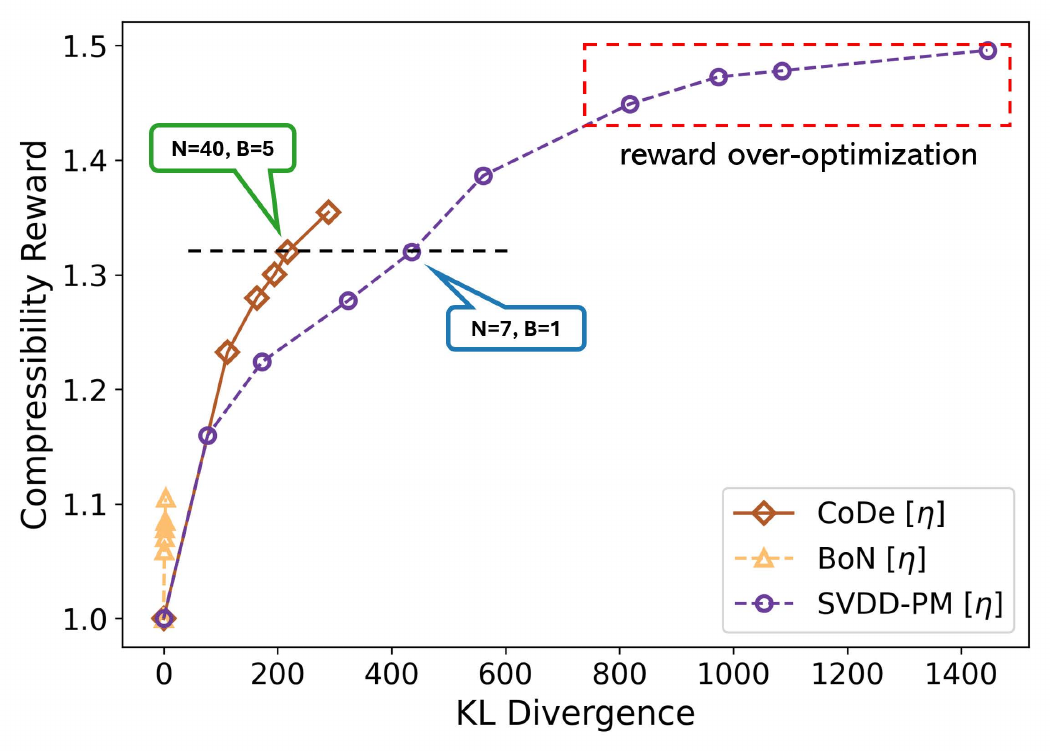}
    \vspace{-0.7cm}
    \caption{\ourmethod($\eta$) offers a better reward vs. KL-divergence trade-off as compared to BoN($\eta$) for all N values. SVDD-PM($\eta$) demonstrates a higher reward beyond $N=7$, but at the cost of a much higher KL-divergence.}
    \label{fig: comp_vs_kl}
    \end{minipage}
\end{wrapfigure}
\label{sec:nondiff}
First, we consider a scenario with non-differentiable reward, where gradient-based guidance does not apply. Following \citet{Fan2023DPOK:Models}, we use image compressibility as the reward score which is measured by the size of the JPEG image in kilobytes. 
This way, we guide the diffusion denoising process to generate memory-light, compressible images.

\textbf{Qualitative Comparisons.} 
A comparative look across baselines and settings is illustrated in Fig.~\ref{fig:compress}. We observe that \ourmethod($\eta$) generates the best results, offering superior compression as well as image and text alignment. SVDD-PM($\eta$), SDEdit($\eta$) and BoN($\eta$) align well with the image and text prompt, but fall short on providing smooth-textured, content-light compressed images. However, this is to be expected in the case of SDEdit($\eta$) since its generative process is not guided by the compression-reward. 

\begin{wraptable}{r}{0.6\textwidth}
\vspace{-1em}
\small
\caption{\small Quantitative metrics for compression reward. }\vspace{-1em}
\label{tab:compress}
\def\arraystretch{1.15}
\resizebox{1\linewidth}{!}{
\begin{tabular}
{l|c|c|c|c|c}
\hline
\multirow{2}{*}{\textbf{Method}} & \multicolumn{4}{c}{\textbf{Compressibility Reward - T2I}}\\ \cline{2-6}  
                                 & \scriptsize \textbf{Rew.} ($\uparrow$) & \scriptsize \textbf{FID} ($\downarrow$) & \scriptsize \textbf{CMMD} ($\downarrow$) & \scriptsize \textbf{T-CLIP} ($\uparrow$)
                                 & \scriptsize \textbf{I-Gram} ($\uparrow$)\\ \hline
Base-SD & 1.0  & 1.0 & 1.0 & 1.0 & - \\ \hline
BoN & 1.23  & 1.10 & 1.70 & 0.99 & - \\
SVDD-PM & 1.83  & 2.86 & 61.75 & 0.88 & - \\
\hline
\rowcolor{tabBlue} \textbf{\ourmethod} & 1.65 &	2.12 &	32.70 &	0.95 &	-
 \\ \hline

 \multirow{1}{*}{} & \multicolumn{4}{c}{\textbf{Compressibility Reward - (T+I)2I}}\\ \cline{2-6}  
\hline
Base-SD & 1.0  & 1.0 & 1.0 & 1.0 & 1.0 \\ 
SDEdit ($\eta=0.8$) & 0.97 & 2.19 &	29.25 &	0.98 &	1.34
 \\ \hline
BoN ($\eta=0.8$) & 1.08  & 2.27 & 31 & 0.98 & 1.32 \\
SVDD-PM ($\eta=0.8$) & 1.48  & 3.54 & 69.5 & 0.89 & 1.15\\ 
\hline
\rowcolor{tabBlue} \textbf{\ourmethod($\eta=0.8$)} & 1.34 &	3.08 &	48.75 &	0.97 &	1.20
 \\ \hline
\end{tabular}
}
\vspace{-12pt}
\end{wraptable}
\textbf{Quantitative Evaluations.} 
Table \ref{tab:compress} illustrates the performance comparison of \ourmethod, \ourmethod($\eta$)  as compared to other baselines. Sampling-based baselines (SVDD-PM \cite{Li2024Derivative-FreeDecoding} and BoN \cite{Gao2022ScalingOveroptimization}) for two scenarios, T2I and (T+I)2I, where in the latter the reference image is omitted. In both scenarios, we observe that SVDD-PM and SVDD-PM($\eta$) offer slightly higher compression reward score as compared to other baselines; however, \ourmethod($\eta$) offers better image (I-Gram) and text (T-CLIP) alignment and the least divergence from the base distribution (FID, CMMD) as compared to all other baselines. Most notably, Fig.~\ref{fig: comp_vs_kl} illustrates the reward vs. KL divergence for this scenario, demonstrating that in normal operating regimes (before reward over-optimization occurs, see appendix~\ref{sec:rhack}, Fig.~\ref{fig:svddpmrh}), \ourmethod($\eta$) offers almost the same reward as its special case of $B=1$ for SVDD-PM($\eta$) with less than half of its KL divergence. Here, different points on the curves represent sweeping on each method's main set of parameters ($N = [10,20,30,40,100]$ for \ourmethod($\eta$), BoN($\eta$) and $N = [2,3,5,7,10,20,30,40,100]$ for SVDD-PM($\eta$)). Details on the computation of KL divergence are mentioned in the appendix section \ref{sec:kl_comp_details}.

\vspace{-.1in}
\subsection{Differentiable Rewards}
\label{sec:diff}

\textbf{Style guidance.} We guide image generation based on a reference style image \citep{Bansal2024UniversalModels,he2024manifold,yu2023freedom}. 
\begin{wraptable}{r}{0.6\textwidth}
\vspace{-1em}
\caption{\small Quant. metrics ($\pm~ \textup{std.}$) for (T+I)2I differentiable scenarios.}\vspace{-0.5em}
\label{tab:noise_abl}
\def\arraystretch{1.2}
\resizebox{1\linewidth}{!}{
\small
\begin{tabular}
{l|c|c|c|c}
\hline
\textbf{Method} & \textbf{FID} ($\downarrow$) & \textbf{I-Gram} ($\uparrow$) & \textbf{T-CLIP} ($\uparrow$) &  \textbf{Runtime} ($\downarrow$) \\  \hline
SDEdit($\eta$) &  1.0 & 1.0 & 1.0 &  1.0 \\ \hline
BoN($\eta$)  &  1.06 & \clr{1.08 {\scriptsize ($\pm$ 0.002)}} & 0.98 {\scriptsize ($\pm$ 0.002)} &  23.62 {\scriptsize ($\pm$ 0.005)}\\ 
SVDD-PM ($\eta$) &  1.29 & \clr{1.64 {\scriptsize ($\pm$ 0.03)}} & 0.94 {\scriptsize ($\pm$ 0.002)} &  103.73 {\scriptsize ($\pm$ 0.05)} \\ \hline
DPS ($\eta$) &  1.01 & \clr{1.23 {\scriptsize ($\pm$ 0.04)}} & 0.96 {\scriptsize ($\pm$ 0.005)} &  6.07 {\scriptsize ($\pm$ 0.03)}\\ 
UG($\eta$) &  1.38 & \clr{1.31 {\scriptsize ($\pm$ 0.05)}} & 0.89 {\scriptsize ($\pm$ 0.002)} &  92.07 {\scriptsize ($\pm$ 0.04)}\\ \hline
\rowcolor{tabBlue} \textbf{\ourmethod($\eta$)} &  1.15 & \clr{1.60 {\scriptsize ($\pm$ 0.05)}} & 0.98 {\scriptsize ($\pm$ 0.006)} &  37.21 {\scriptsize ($\pm$ 0.03)}\\ \hline
\end{tabular}
}
\vspace{-0.8em}
\end{wraptable}
Following the reward model proposed in \cite{Bansal2024UniversalModels}, we use the CLIP image encoder to obtain embeddings for the reference style and the generated images. The cosine similarity between these embeddings is then used as the guidance signal. \textbf{Stroke guidance.} A closely related scenario to style guidance is stroke generation, where a high-level reference image containing only coarse colored strokes is used as reference  \citep{Cheng_2023,Meng2021SDEdit:Equations}. The objective in this setting is to produce images that remain \emph{faithful} to the reference strokes. To achieve this, similar to style guidance, we employ the CLIP image encoder to obtain embeddings from both the reference and generated images and compute the reward by measuring the cosine similarity between these embeddings. \textbf{Face guidance.} To guide the generation process to capture the face of a specific individual (as in \citep{he2024manifold,Bansal2024UniversalModels}), we employ a combination of multi-task cascaded convolutional network (MTCNN) \citep{zhang2016joint} for face detection and FaceNet \citep{schroff2015facenet} for facial recognition, which together produce embeddings for the facial attributes of the image. The reward is then computed as the negative \(\ell_1\) loss between feature embeddings of the reference and generated images.

\textbf{Qualitative Comparisons.} 
A comparative look across baselines, scenarios and settings is illustrated in in Figs.~\ref{fig:style_eta} and ~\ref{fig:stroke_eta} (and ~\ref{fig:style}~\ref{fig:face_eta},~\ref{fig:face} and~\ref{fig:stroke} in the appendix). Let us start with style guidance in Fig.~\ref{fig:style_eta}. As can be seen, \ourmethod($\eta$) shows arguably a better performance in capturing the style of the reference image, regardless of the text prompt. When it comes to alignment with the text prompt, UG($\eta$) seems to suffer to some extent with ``Eiffel tower'' and ``woman'' fading away in the corresponding images.

\textbf{Important Remark:} Note that by excluding noise conditioning from the original baselines (removing $\eta$, see Figs.~\ref{fig:style},~\ref{fig:style2}), they all suffer in capturing the style of the reference image, highlighting the importance of using noise-conditioning as is proposed for \ourmethod{} for all baselines operating in the (T+I)2I scenarios. Further qualitative results for stroke guidance are summarized in Fig.~\ref{fig:stroke_eta}, where the same narrative and observations extend. The results for face guidance are deferred to the Appendix~\ref{sec:ad}, Figs.~\ref{fig:face_eta} and~\ref{fig:face}.  

\begin{figure}[t]
    \centering
    \includegraphics[width=0.85\linewidth]{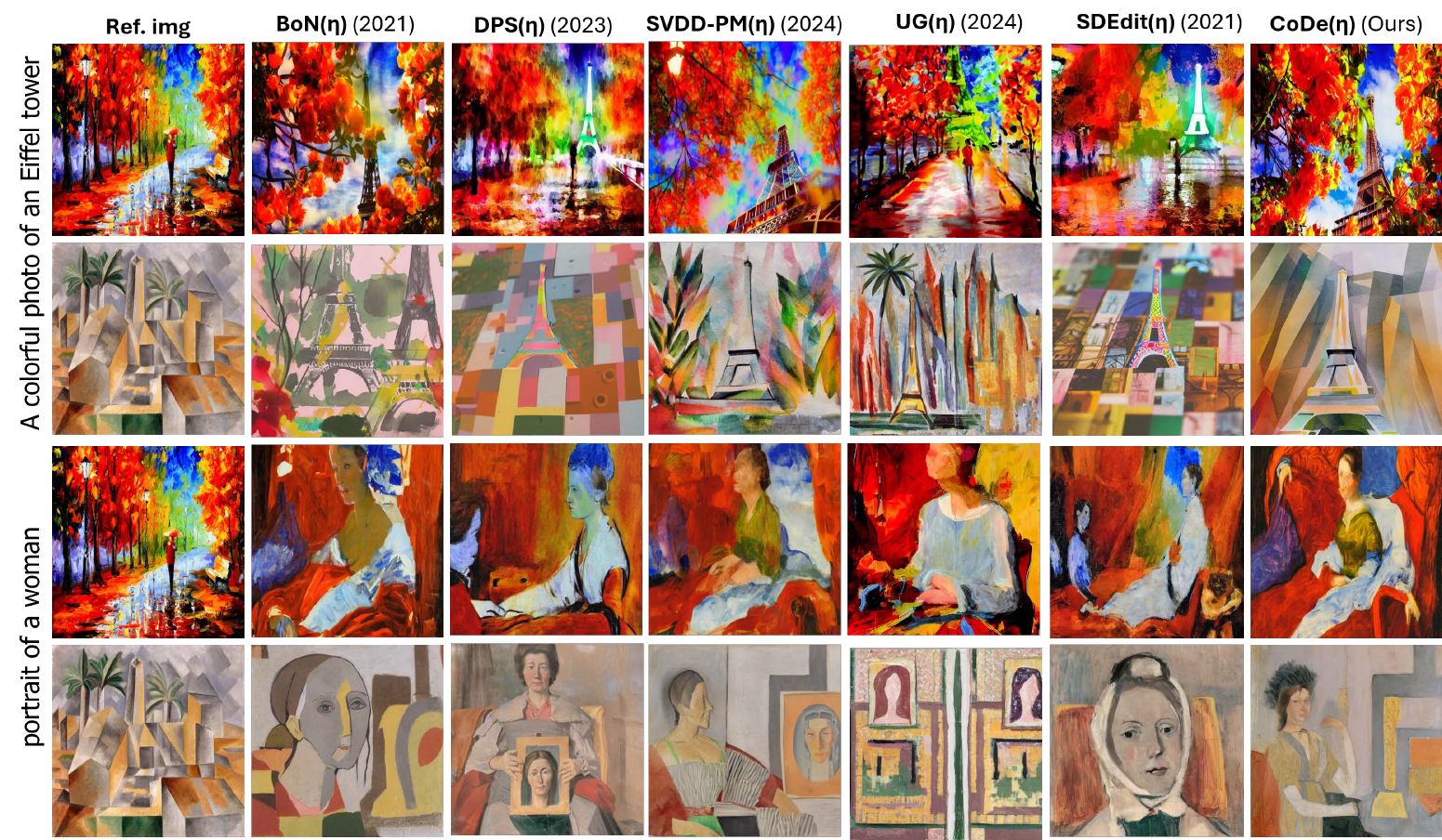}
    \vspace{-0.3cm}
    \caption{The style alignment offered by \ourmethod($\eta$) stands on par or outperforms other baselines in terms of quality and preserving nuances of the reference image, while adhering to the text-prompt.}
    \label{fig:style_eta}
    \vspace{-.2in}
\end{figure}
\begin{figure}[t]
    \centering
    \includegraphics[width=0.85\linewidth]{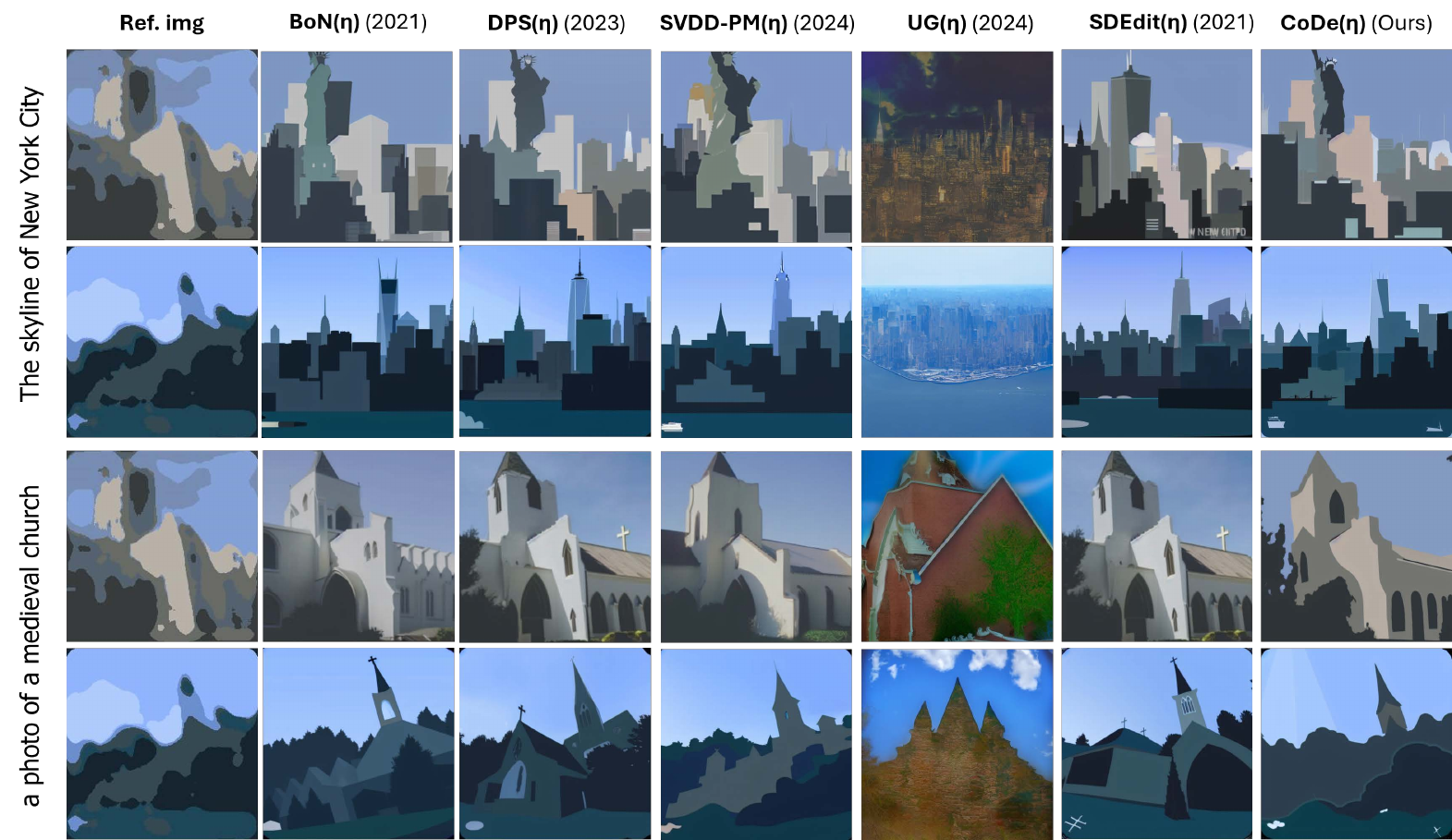}
    \vspace{-0.3cm}
    \caption{Same narrative as in Fig.~\ref{fig:style_eta} with \ourmethod{} outperforming UG($\eta$) in terms of quality and ref. image-alignment, while standing-on par with all other baselines.}
    \label{fig:stroke_eta}
    \vspace{-.1in}
\end{figure}

\textbf{Quantitative Evaluations.} Table~\ref{tab:noise_abl} summarizes the performance across all scenarios (including all settings) over four metrics: I-Gram, FID, T-CLIP and runtime (in second/image, and detailed Section~\ref{sec:compcomplex}). 
The reason why we use I-Gram (instead of expected reward per scenario) in our evaluations is because expected reward has been \textit{seen} by the model throughout the guidance process. 
\begin{figure}[t!]
    \centering
    \includegraphics[width=0.80\linewidth]{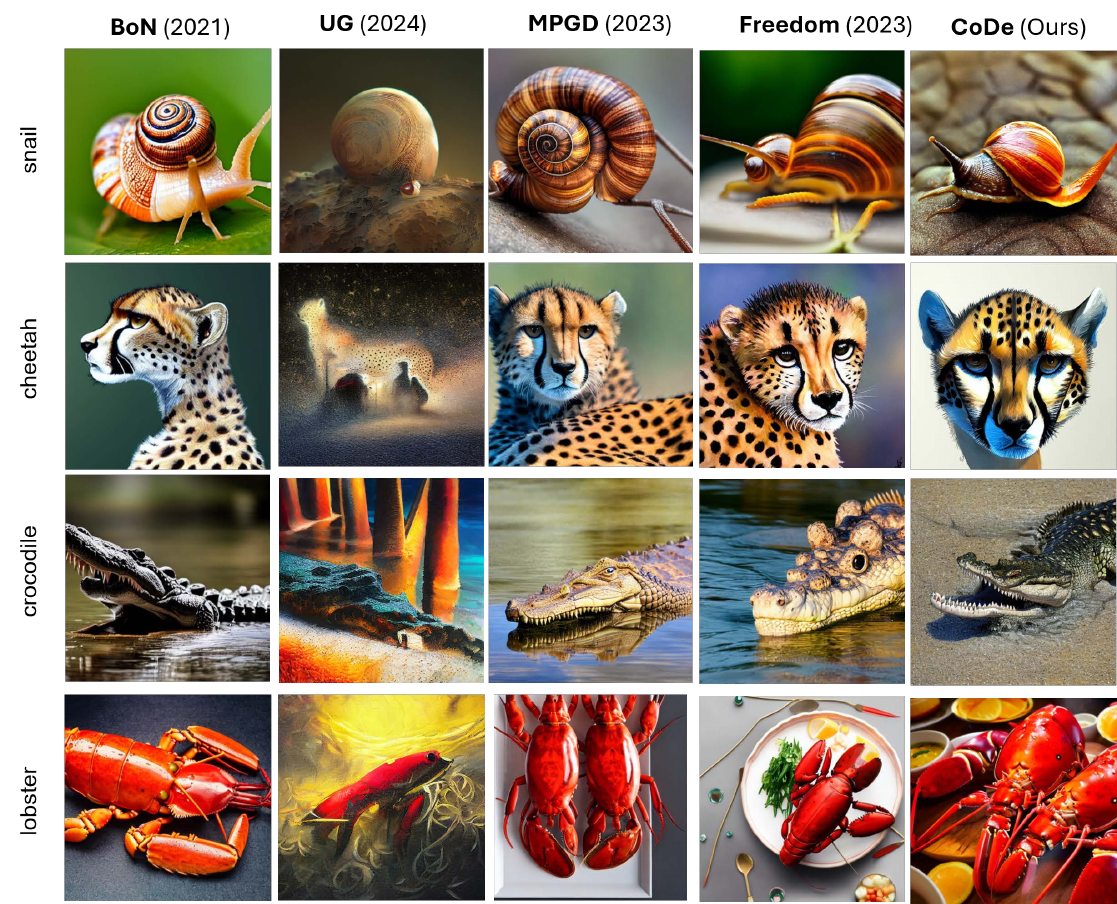}
    \vspace{-0.3cm}
    \caption{Qualitative evaluation across methods for aesthetic guidance.}
    \label{fig:aesthetics}
    \vspace{-.1in}
\end{figure}

\begin{wraptable}{r}{0.5\textwidth}
\vspace{-1em}
\caption{\small Quant. metrics ($\pm~ \textup{std.}$) for aesthetics guidance.}
\label{tab:aesth}
\def\arraystretch{1.2}
\resizebox{1\linewidth}{!}{
\small
\clr{\begin{tabular}
{l|c|c|c|c}
\hline
\multirow{2}{*}{\textbf{Method}} & \multicolumn{4}{c}{\textbf{Aesthetic Guidance - T2I}}\\ \cline{2-5}  
                                 & \scriptsize \textbf{Rew.} ($\uparrow$) & \scriptsize \textbf{FID} ($\downarrow$) & \scriptsize \textbf{CMMD} ($\downarrow$) & \scriptsize \textbf{T-CLIP} ($\uparrow$)\\ \hline
Base-SD (\citeyear{Rombach2021High-ResolutionModels}) & 1.0  & 1.0 & 1.0 & 1.0 \\ \hline
BoN (\citeyear{Gao2022ScalingOveroptimization}) & 1.10  & 1.98 & 6.41 & 0.99 \\
UG (\citeyear{Bansal2024UniversalModels}) & 1.30  & 7.53 & 65.05 & 0.86 \\
MPGD (\citeyear{he2024manifold}) & 1.22  & 6.55 & 57.63 & 0.93 \\
Freedom (\citeyear{yu2023freedom}) & 1.29  & 4.07 & 22.45 & 0.95 \\
\hline
\rowcolor{tabBlue} \textbf{\ourmethod} & 1.27  & 2.59 & 6.6 & 0.99 \\ \hline
\end{tabular}
}}
\end{wraptable}
The scores here are normalized with respect to SDEdit as the baseline, thus indicating the performance gain over that. We notice that SVDD-PM($\eta$) and \ourmethod($\eta$) perform on par in terms of offering the best image alignment (indicated by I-Gram), while being superior than all other baselines. However, \ourmethod($\eta$) offers a better trade-off between image, text-alignment and divergence as compared to SVDD-PM($\eta$), as indicated by its superior T-CLIP and FID scores. Note that here again by excluding noise-conditioning from the other baselines (as in their original proposition) the gain margin offered by \ourmethod($\eta$) would be considerably larger as is shown in our ablation studies. See Appendix~\ref{sec:ad} for further qualitative and quantitative results. 

\clr{\textbf{Aesthetic Guidance}.  \clr{To guide the diffusion denoising process towards generating aesthetically pleasing images, we employ the LAION aesthetic predictor V2 \citep{schuhmann2022laion}, which leverages a multi-layer perceptron (MLP) architecture trained atop CLIP embeddings. This model’s training data consists of 176,000 human image ratings, spanning a range from 1 to 10, with images achieving a score of 10 being considered art piece. Table~\ref{tab:aesth} shows the results for sampling based and gradient based inference-time guidance methods on the given T2I scenario. We observe that \ourmethod{} offers better rewards as compared to MPGD \citep{he2024manifold} and BoN while being second to best as compared to Freedom \cite{yu2023freedom} and UG \cite{Bansal2024UniversalModels}. However, \ourmethod{} offers better text alignment (T-CLIP) and lower divergence from the base distribution (FID, CMMD) as compared to all its gradient based counterparts. This can also be observed in Figs.~\ref{fig:aesthetic_trad}, where \ourmethod{} offers almost the same reward as MPGD, Freedom and UG, but at a lower divergence or higher T-CLIP. Additionally, we demonstrate a qualitative comparsion between all baselines in Fig.~\ref{fig:aesthetics}. It can be observed that UG generates aesthetic images that do not completely adhere to the text-prompt leading to reward over-optimization. This is not as prominent in Freedom, CoDe and MPGD where the generated images are of comparable aesthetic quality while significantly adhering to the text prompt of the animal.}
}
\subsection{Ablations} 
\label{sec:ablations}
\vspace{-0.1cm}
\begin{figure}
    \centering
    \begin{subfigure}{0.47\textwidth}
        \centering
        \includegraphics[width=\linewidth]{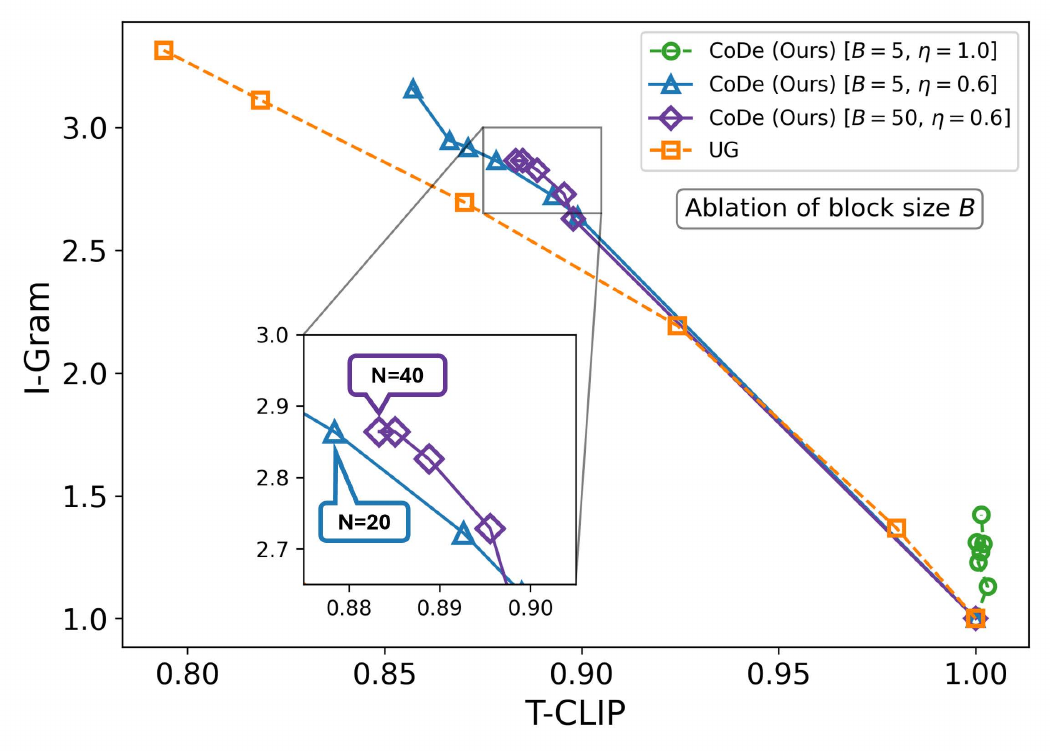}
    \end{subfigure}
    \begin{subfigure}{0.47\textwidth}
        \centering
        \includegraphics[width=\linewidth]{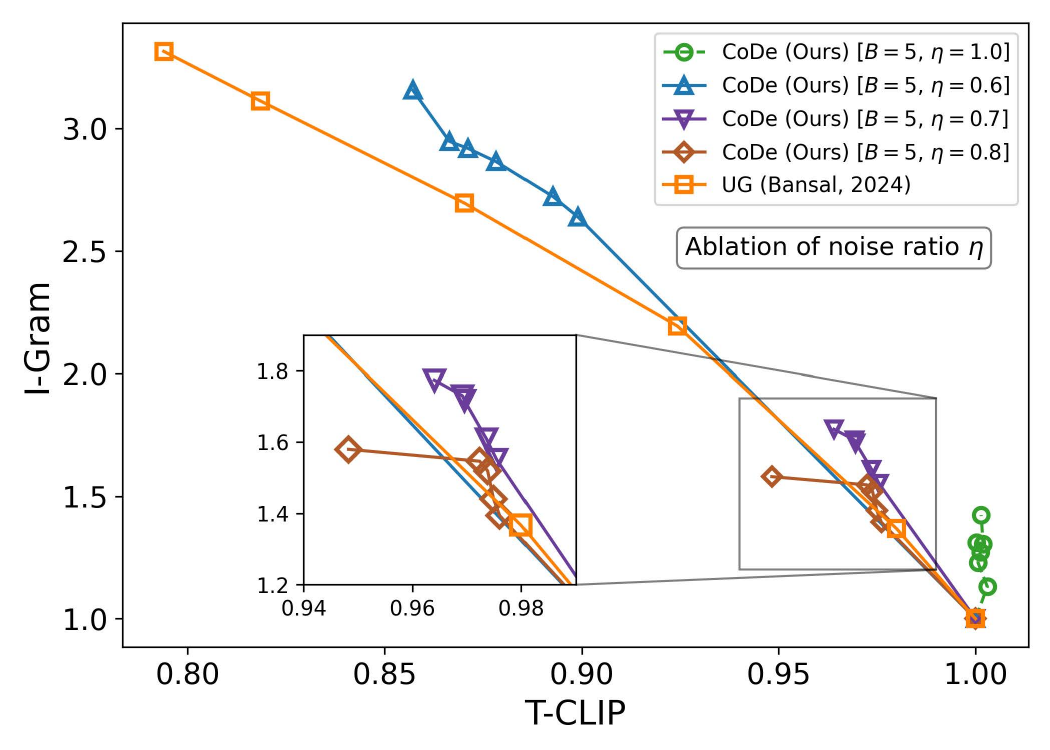}
    \end{subfigure}
    \vspace{-0.3cm}
    \caption{Ablation on the block size ($B$) and the noise ratio ($\eta$).}
    \label{fig: ablation}
    \vspace{-.15in}
    \end{figure}
Fig.~\ref{fig: ablation} investigates the impact of varying block size ($B$) and noise ratio ($\eta$) for \ourmethod{} on image (I-Gram) vs. text alignment (T-CLIP). For reference, \ourmethod{}($\eta=1$) (without image-conditioning) and UG are also depicted. Here, different points per curve represent sweeping on their main parameter ($N = [5, 10, 20, 30, 40, 100]$ for \ourmethod{}, and guidance scale of $[1, 3, 6, 12, 24]$ for UG). On the left image, increasing block size seems to limit the image alignment performance; or put differently, same performance at a much larger $N$. Regardless of block size, \ourmethod{} curves fall on top of UG indicating a superior overall performance. On the right, changing the noise ratio $\eta$ toward higher values, reduces the conditioning strength (as indicated also in \citep{Meng2021SDEdit:Equations,koohpayegani2023genie}) resulting in lower image alignment capacity (I-Gram). Yet again, \ourmethod{} variants fall on top of the UG curve suggesting better image vs. text alignment performance. More detailed ablation studies and reward vs alignment trade-off curves are provided in Appendix~\ref{sec:ad}. Further note that the operation points with very low T-CLIP scores on UG curves ended up degenerating to the extent that images did not have anything in common with the text prompt (see appendix \ref{sec:ugrh}, Fig.~\ref{fig:ug_rh}), which was another consideration for choosing the best trade-off point.
\begin{wraptable}{r}{0.6\textwidth}
\vspace{-0.5em}
\caption{\small Ablation on partial-noise conditioning.}\vspace{-0.5em}
\label{tab:quant}
\def\arraystretch{1.2}
\resizebox{1\linewidth}{!}{
\small
\begin{tabular}
{l|c|c|c|c}
\hline
\textbf{Method} & \textbf{FID} ($\downarrow$) & \textbf{I-Gram} ($\uparrow$) & \textbf{T-CLIP} ($\uparrow$) &  \textbf{Runtime} ($\downarrow$) \\  \hline
Base-SD (\citeyear{Rombach2021High-ResolutionModels}) & 1.0 & 1.0  & 1.0  & 1.0  \\ \hline
BoN (\citeyear{Gao2022ScalingOveroptimization}) &  1.19 & 1.07 {\scriptsize ($\pm$ 0.004)} & 0.99 {\scriptsize ($\pm$ 0.001)} &  18.90 {\scriptsize ($\pm$ 0.01)}\\
SVDD-PM (\citeyear{Li2024Derivative-FreeDecoding}) &  1.42 & 1.24 {\scriptsize ($\pm$ 0.02)} & 0.98 {\scriptsize ($\pm$ 0.004)} &  99.10 {\scriptsize ($\pm$ 0.08)} \\ \cdashline{1-5}
DPS (\citeyear{Chung2023DiffusionProblems}) &  1.14 & 1.12 {\scriptsize ($\pm$ 0.01)} & 0.98 {\scriptsize ($\pm$ 0.004)} &  5.82 {\scriptsize ($\pm$ 0.02)}\\
UG (\citeyear{Bansal2024UniversalModels}) &  2.91 & 1.86 {\scriptsize ($\pm$ 0.03)} & 0.85 {\scriptsize ($\pm$ 0.005)} &  87.92 {\scriptsize ($\pm$ 0.03)}\\ \hline
\rowcolor{tabBlue} \textbf{\ourmethod} &  1.17 & 1.30 {\scriptsize ($\pm$ 0.009)} & 0.99 {\scriptsize ($\pm$ 0.001)} &  34.63 {\scriptsize ($\pm$ 0.04)}\\
\rowcolor{tabBlue} \textbf{\ourmethod($\eta$)} &  3.00 & 3.19 {\scriptsize ($\pm$ 0.05)} & 0.87 {\scriptsize ($\pm$ 0.006)} &  23.82 {\scriptsize ($\pm$ 0.03)}\\ \hline
\end{tabular}
}
\vspace{-0.8em}
\end{wraptable}
We also study the impact of dropping the partial-noise conditioning on all baselines, including \ourmethod{} in Table \ref{tab:quant}. For reference, \ourmethod($\eta$) is also included where the best empirical value for $\eta$ is selected per scenario. We report scores across all metrics by normalizing them w.r.t. the base Stable Diffusion model (denoted by Base-SD). As can be seen, \ourmethod, i.e. without noise-conditioning, offers performance gains in terms of image alignment while staying competitive w.r.t. text alignment (I-Gram and T-CLIP scores) and deviating lesser from the base model (FID score), compared to all baselines except UG. Notably, \ourmethod{} is also considerably faster than both SVDD-PM and UG in terms of runtime. As stated earlier, here \ourmethod($\eta$), i.e. with noise-conditioning, offers a much more pronounced gain in terms of I-Gram in terms of the other baselines. \clr{We provide general guidelines on setting $N,B,\eta$ for \ourmethod{} in Appendix~\ref{app:guidelines} and an ablation on using adaptive guidance control for $N,B$ in Appendix~\ref{sec:dynamic}.}

\vspace{-.05in}
\subsection{Computational Complexity.} 
\vspace{-0.1cm}
\label{sec:compcomplex}
We provide a comparative look at the complexity of the proposed approach against other baselines. 
\begin{wraptable}{r}{0.5\textwidth}  
\vspace{-1em}
\caption{Computational complexity.}
\label{tab:complexity}
  \vspace{-0.5em}
  \centering
  \renewcommand{\arraystretch}{1.2}
  \resizebox{1\linewidth}{!}{
  \large
  \begin{tabular}{l|c|c|c}
    \hline
    \textbf{Methods} & \textbf{Inf. Steps} & \textbf{Rew. Queries} & \textbf{Runtime [sec/img]} \\
    \hline
    Base-SD (\citeyear{Rombach2021High-ResolutionModels}) & \(T\) & - & 14.12 \\ \hline
    BoN (\citeyear{Gao2022ScalingOveroptimization}) & $NT$ & \(N\) & 266.77 \\
    SVDD-PM (\citeyear{Li2024Derivative-FreeDecoding}) & $NT$ & \(NT\) & 1399.36\\
    \cdashline{1-4}
    DPS (\citeyear{Chung2023DiffusionProblems}) & $T$ & \(T\) & 82.19 \\
    UG (\citeyear{Bansal2024UniversalModels}) & $mKT$ & \(mKT\) & 1241.47\\
     \hline
    \rowcolor{tabBlue} \textbf{\ourmethod} & \(NT\) & \(NT/B\) & 489.00\\ \cdashline{1-4}
     \rowcolor{tabBlue} \textbf{\ourmethod{}($\eta$)} & \(N\eta \,T\) & \(N\eta\,T/B\) & 336.39\\
    \hline
  \end{tabular}
  }
  \vspace{-1em}
\end{wraptable}
To this aim, we consider two aspects: (i) the number of inference steps, (ii) the number of queries to the reward model. We then measure the overall runtime complexity in terms of time (in sec.) required to generate one image. This is summarized in Table~\ref{tab:complexity}. From a runtime perspective, within the gradient-based guidance group, DPS is relatively faster across all three generation scenarios. This is due to the \(m\) gradient and \(K\) refinement steps used in UG, which are not used in DPS. 
Within the sampling based group, SVDD-PM, imposing token-wise aggressive guidance ($B=1$), turns out to be an order of magnitude slower than BoN. \ourmethod, \ourmethod($\eta$) with its blockwise guidance remains to be faster and more efficient than BoN as well as UG, offering a $4\times$ faster runtime than UG. 

\clr{\subsection{Performance vs Efficiency Tradeoffs.}}
\clr{In addition to the breakdown of computational complexity, we also illustrate performance-efficiency tradeoff curves in terms of reward vs compute and divergence (FID) vs compute curves for the style guidance scenario in Fig.~\ref{fig:compute_trad}. Compute is calculated using the breakdown of the computational complexity in terms of the inference steps and reward queries as shown in Table~\ref{tab:complexity}. We illustrate both these curves since it is important to analyze both reward and divergence (FID) to get a holistic picture of performance and reward over-optimization. Here different points on the curve represent sweeping on their main parameters ($N=[5, 10,20,30,40,100]$ for \ourmethod{}, $N=[10,20,30,40]$ for SVDD-PM, BoN, gradient guidance scale = $[0.5, 0.7, 0.9, 1.1, 1.3]$) for DPS and $K = [1, 3, 6, 12, 24]$ for UG (with the best gradient scale = $6$). As can be observed, \ourmethod($\eta$) and SVDD-PM($\eta$) offer the best reward vs compute and FID vs compute tradeoffs as compared to all other baselines. Specifically, while SVDD-PM($\eta$) achieves higher rewards for the same compute as compared to \ourmethod{}($\eta$), it also deviates significantly more from the base distribution as compared to \ourmethod{}($\eta$). It is important to note that divergence captures preserving core capabilities not captured by reward, resulting in inferior reward vs divergence tradeoffs that are discussed in the previous sections. Thus, in terms of a tradeoff between performance (captured through reward and divergence) vs efficiency, \ourmethod($\eta$) still offers a better tradeoff as compared to SVDD-PM($\eta$) enabling performance points that are not even achievable by SVDD-PM($\eta$). On the other hand, UG offers high rewards but at the cost of either significantly higher compute or FID. 
}
\begin{figure}[t!]
    \centering
    \begin{subfigure}{0.49\textwidth}
        \centering
        \includegraphics[width=\linewidth]{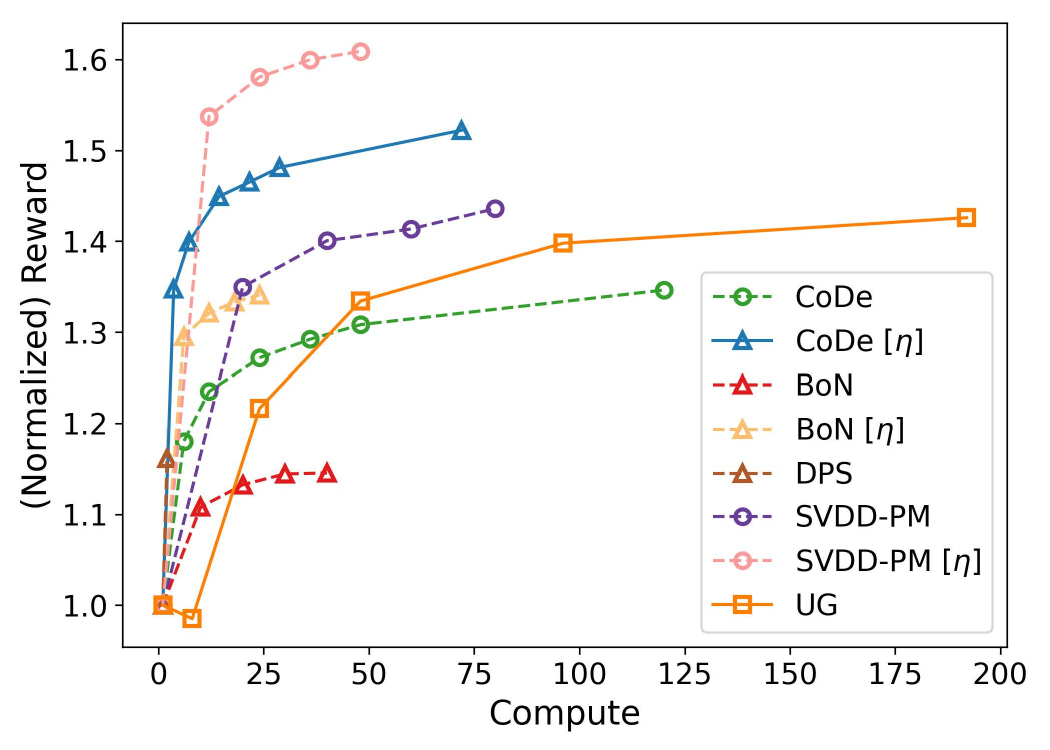}
    \end{subfigure}
    \begin{subfigure}{0.49\textwidth}
        \centering
        \includegraphics[width=\linewidth]{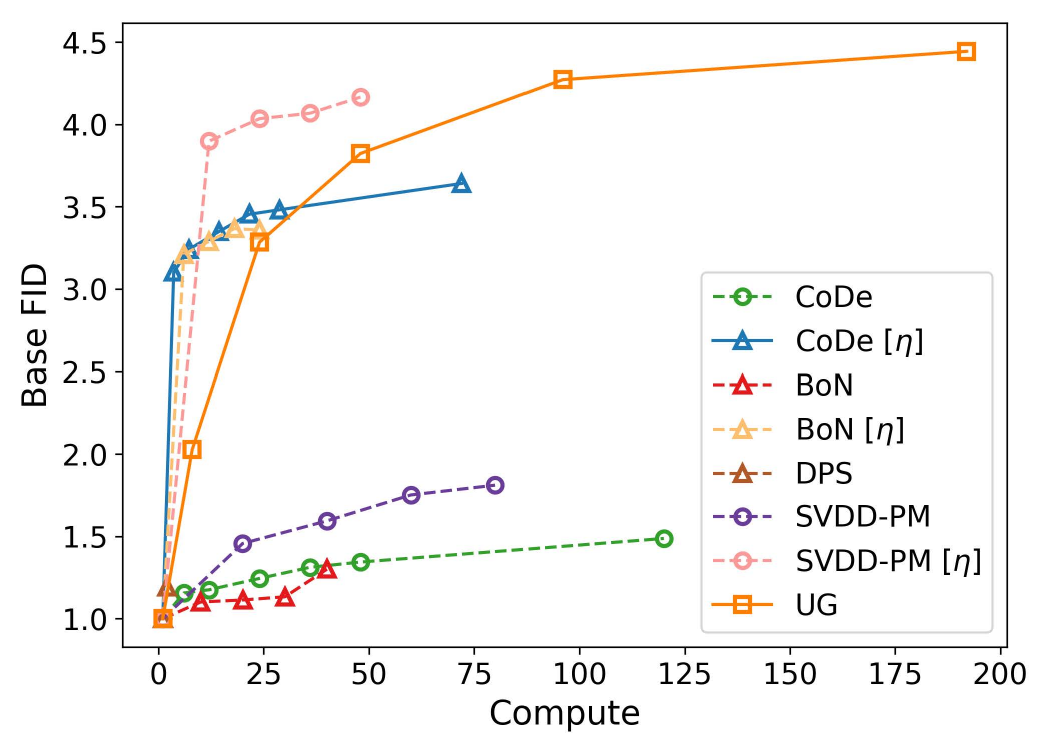}
    \end{subfigure}
    
    \caption{Reward vs. Compute and FID vs Compute trade-off curves for Style guidance.}
    \label{fig:compute_trad}
\end{figure}

\section{Related Work}
\label{sec:lit}


\textbf{Finetuning-based alignment.} Prominent methods in this category typically involve either training a diffusion model to incorporate additional inputs such as category labels, segmentation maps, or reference images \citep{Ho2021Classifier-FreeGuidance,li2023gligen,Zhang2023AddingModels,bansal2024cold,mou2024t2i,Ruiz_2023} or applying reinforcement learning (RL) to finetune a pretrained diffusion model to optimize for a downstream reward function \citep{Prabhudesai2023AligningBackpropagation,Fan2023DPOK:Models,Wallace2023DiffusionOptimization,Black2023TrainingLearning,gu2024diffusionrpo,lee2024direct,uehara2024understanding}. While these approaches have been successfully employed to satisfy diverse constraints, they are computationally expensive. Furthermore, finetuning diffusion models is prone to ``reward hacking'' or ``over-optimization'' \citep{clark2024directly,jena2024elucidating}, where the model loses diversity and collapses to generate samples that achieve very high rewards. This is often due to a mismatch between the intended behavior and what the reward model actually captures. In practice, a perfect reward model is extremely difficult to design. As such, here we focus on inference-time guidance-based alignment approaches where these issues can be circumvented.  Additionally, none of the fine-tuning based methods are built for image-to-image scenarios, which is the focus of this work, as we clarified earlier. To compare against them, a direct approach could be fine-tuning per reference image, which renders the process computationally infeasible, or taking a meta-learning approach to fine-tuning. However, such fundamental adjustments are beyond the current scope of our work.

\textbf{Gradient-based inference-time alignment.} 
There are two main divides within this category: (i) guidance based on a \emph{value} function, and (ii) guidance based on a downstream \emph{reward} function. In the first divide, a value function is trained offline using the noisy intermediate samples from the diffusion model. Then, during inference, gradients from the value function serve as signals to guide the generation process \citep{Dhariwal2021DiffusionSynthesis,Yuan2023Reward-DirectedImprovement}. A key limitation of such an approach is that the value functions are specific to the reward model and the noise scales used in the pretraining stage. Thus, the value function has to be retrained for different reward and base diffusion models. The second divide of methods successfully overcomes this by directly using the gradients of the reward function based on the approximation of fully denoised images using Tweedie's formula \citep{chung2022improving, Chung2023DiffusionProblems,yu2023freedom}. Interesting follow-up research has explored methods to reduce estimation bias \citep{zhu2023denoising,Bansal2024UniversalModels,he2024manifold} and to scale gradients for maintaining the latent structures learned by diffusion models \citep{Guo2024GradientPerspective}. Despite such advancements, the need for differentiable guidance functions can limit the broader applicability of the gradient-based methods.

\textbf{Gradient-free inference-time alignment.} Tree-search alignment has recently gained attention in the context of autoregressive language models (LMs), where it has been demonstrated that Best-of-$N$ (BoN) approximates sampling from a KL-regularized objective, similar to those used in reinforcement learning (RL)-based finetuning methods \citep{Gui2024BoNBoNSampling,Beirami2024TheoreticalPolicy,Gao2022ScalingOveroptimization}. This approach facilitates the generation of high-reward samples while maintaining closeness to the base model. \citet{Mudgal2024ControlledModels} demonstrate that the gap between Best-of-$N$ (BoN) and token-wise {\em value-based} decoding \citep{Yang2021FUDGE:Discriminators} can be bridged using a blockwise decoding strategy. Inspired by this line of research, we propose a simple blockwise alignment technique (tree search with a fixed depth) that offers key advantages: (i) it preserves latent structures learned by diffusion models without requiring explicit scaling adjustments, unlike gradient-based methods, and (ii) it avoids ``reward hacking'' typically associated with learning-based approaches. Concurrently, \citet{Li2024Derivative-FreeDecoding} propose a related method, called SVDD-PM, based on the well-known token-wise decoding strategy in the LM space. In contrast, we devise a blockwise sampling strategy because it allows further control on the level of intervention, and offers a trade-off between divergence and alignment, which is of primal interest in the context of guided generation. To enhance the sampling strategy in terms of efficiency, we apply adjustable noise-conditioning which also offers greater control over guidance signals and further improves alignment. Sequential Monte Carlo-based methods (SMC) for diffusion models \citep{Wu2023PracticalModels, Chung2023DiffusionProblems, Phillips2024ParticleSampler, Cardoso2023MonteProblems} share similarities with tree-search-based alignment methods such as ours, particularly in not requiring differentiable reward models. However, these methods were originally designed to solve conditioning problems rather than reward maximization. Crucially, they involve resampling across an entire batch of images, which can lead to suboptimal performance when batch sizes are small since the SMC theoretical guarantees hold primarily with large batch sizes. In contrast, our method performs sampling on a per-sample basis. Lastly, using SMC for reward maximization can also result in a loss of diversity, even with large batch sizes.

\vspace{-.1in}
\section{Concluding Remarks}
\label{sec:conc}
\vspace{-0.2cm}

We introduce a gradient-free blockwise inference-time guidance approach for diffusion models. By combining blockwise optimal sampling with an adjustable noise conditioning strategy, \ourmethod, \ourmethod($\eta$) offer a better reward vs. divergence trade-off compared to state-of-the-art baselines. 

\textbf{Limitations and future work.}  Diffusion models are still computationally intensive; as such, extracting quantitative results on the performance of (inference-time) guidance-based alignment methods calls for massive resources, especially when ablating across numerous design parameters. We have used up to $32$ NVIDIA A100's solely dedicated to the presented evaluation results. Yet, most commonly adopted settings we have experimented with to arrive at the numerical results in Tables~\ref{tab:compress} and~\ref{tab:quant} can be further expanded for the sake of better statistical significance in future work.

\textbf{Broader Impact.} We would like to caution against the blind usage of the proposed techniques as alignment methods are prone to reward over-optimization, which warrants care in socially consequential applications.

\bibliography{references_updated, biblio}
\bibliographystyle{tmlr}

\clearpage

\appendix

\clearpage

\section{Proof of Theorem~\ref{thm:optpol}}
\label{app:proofs}



\begin{proof}[Proof of Theorem~\ref{thm:optpol}]
\label{sec:proof}
\begin{align}
        J_{\lambda}(x_t, \pi, c) &= \mathbb{E}_{x_{t-1} \sim \pi} \left[ \lambda(V(x_{t-1}; p, c) - V(x_t; p, c)) + \log \frac{p(x_{t-1}|x_{t}, c)}{\pi(x_{t-1}|x_{t}, c)} \right]\\
        &= \mathbb{E}_{x_{t-1} \sim \pi} \left[ \log \frac{p(x_{t-1}|x_{t}, c)\;e^{\lambda(V(x_{t-1}; p, c) - V(x_t; p, c))}}{\pi(x_{t-1}|x_t, c)} \right]\\
        &= \mathbb{E}_{x_{t-1} \sim \pi} \left[ \log \frac{p(x_{t-1}|x_{t}, c)\;e^{\lambda V(x_{t-1}; p, c)}}{\pi(x_{t-1}|x_t, c)} + \log e^{\lambda V(x_t; p, c)} \right] \\
        &= \mathbb{E}_{x_{t-1} \sim \pi} \left[ \log \frac{p(x_{t-1}|x_{t}, c)\;e^{\lambda V(x_{t-1}; p, c)}}{\pi(x_{t-1}|x_t, c)} \right] + \lambda V(x_t; p, c) \label{eq:opt1}
\end{align}

Now, let
\begin{equation}
    p_{\lambda}(x_{t-1}|x_t, c) := \frac{p(x_{t-1}|x_{t}, c)e^{\lambda V(x_{t-1}; p, c)}}{Z_\lambda(x_t, c)},
\end{equation}
where the normalizing constant $Z_\lambda(x_t, c)$ is given by

\begin{equation}
    Z_\lambda(x_t, c) = \mathbb{E}_{x_{t-1} \sim p} \left[p(x_{t-1}|x_{t}, c)e^{\lambda V(x_{t-1}; p, c)}\right].
\end{equation}

Putting it back in Eq.~\ref{eq:opt1}, we get
\begin{align}
        J_{\lambda}(x_t, \pi, c) &= \mathbb{E}_{x_{t-1} \sim \pi} \left[ \log  \frac{p_{\lambda}(x_{t-1}|x_t, c)}{\pi(x_{t-1}|x_t, c)}Z_\lambda(x_t, c) \right] + \lambda V(x_t; p, c)\\
        &= \mathbb{E}_{x_{t-1} \sim \pi} \left[ \log  \frac{p_{\lambda}(x_{t-1}|x_t, c)}{\pi(x_{t-1}|x_t, c)} + \log Z_\lambda(x_t, c) \right] + \lambda V(x_t; p, c)\\
        &= \mathbb{E}_{x_{t-1} \sim \pi} \left[ \log  \frac{p_{\lambda}(x_{t-1}|x_t, c)}{\pi(x_{t-1}|x_t, c)} \right] + \log Z_\lambda(x_t, c) + \lambda V(x_t; p, c)\\
        &= - \mathbb{E}_{x_{t-1} \sim \pi} \left[ \log  \frac{\pi(x_{t-1}|x_t, c)}{p_{\lambda}(x_{t-1}|x_t, c)} \right] + \log Z_\lambda(x_t, c) + \lambda V(x_t; p, c)\\
        &= - \textit{KL}(\pi(x_{t-1}|x_{t}, c)\;\|\;p_{\lambda}(x_{t-1}|x_t, c)) + \log Z_\lambda(x_t, c) + \lambda V(x_t; p, c) \label{eq:obj}
\end{align}

Eq.~\ref{eq:obj} is uniquely maximized by $\pi_\lambda^*(x_{t-1}|x_{t}, c) = p_{\lambda}(x_{t-1}|x_t, c)$.
\end{proof}



\clearpage
\section{Sampling from Optimal Model using Langevin Dynamics}
\label{app:samld}

Given the optimal policy given in Eq.~\ref{eq:optpol}, our goal is to now sample from $\pi^*$ instead of $p$. However, given only $p$, it is difficult to sample from this optimal policy. To overcome this problem, we look at the score-based sampling approach as in NCSN \citep{Song2019GenerativeDistribution}. Starting from an arbitrary point $x_T$, we iteratively move in the direction of $\nabla_{x_t} \log \pi^*(x_t)$, which is equivalent to $\nabla_{x_t} \log p_{\lambda}(x_t)$. We can derive an equivalent form:

\begin{align}
    p_{\lambda}(x_t) &=  \frac{p(x_{t})e^{\lambda V(x_{t})}}{Z_\lambda}\\
    \log p_{\lambda}(x_t) &= \log p(x_{t}) + \lambda V(x_{t}) - \log Z_\lambda\\
    \nabla_{x_t} \log p_{\lambda}(x_t) &= \nabla_{x_t} \log p(x_{t}) + \nabla_{x_t} \lambda V(x_{t}) - \nabla_{x_t} \log Z_\lambda\\
    s_{\lambda}(x_t,t) &= s_\theta(x_t,t) + \lambda \nabla_{x_t} V(x_{t}).
\end{align}

As the above derivation is limited to stochastic diffusion sampling, we leverage the connection between diffusion models and score matching \citep{Song2019GenerativeDistribution}:
\begin{equation}
    \nabla_{x_t} \log p(x_t) = - \frac{1}{\sqrt{1 - \Bar{\alpha}_t}}\varepsilon_t.
\end{equation}

\textbf{Similarity with classifier guidance}. Starting from an arbitrary point $x_T$, we iteratively move in the direction of $\nabla_{x_t} \log p(x_t|y)$. We can derive an equivalent form:

\begin{align}
    p(x_t|y) &=  \frac{p(y|x_t)p(x_{t})}{Z}\\
    \log p(x_t|y) &= \log p(x_{t}) + \log p(y|x_t) - \log Z\\
    \nabla_{x_t} \log p(x_t|y) &= \nabla_{x_t} \log p(x_{t}) + \nabla_{x_t} \log p(y|x_t) - \nabla_{x_t} \log Z\\
    s_{\lambda}(x_t|y,t) &= s_\theta(x_t,t) + \nabla_{x_t} \log p(y|x_t).
\end{align}

\clearpage
\section{\ourmethod{} with Image-Conditioning: \ourmethod($\eta$)}
\label{sec: full_code}
\begin{wrapfigure}{r}{0.45\textwidth}
\vspace{-14pt}
\begin{minipage}{0.45\textwidth}
\IncMargin{1.6em} 
\begin{algorithm}[H]
    \scriptsize
    \SetAlgoLined
    \DontPrintSemicolon
    \SetNoFillComment
    \Indm
    \KwInput{\(p\), $T$, \(N\), \(B\), $x_{\textup{ref}}$, $c$, $\eta$}
    \Indp
        Sample conditional initial noise:\;
        \quad \(\tau = \eta \times T\)\\
        \quad \(x_\tau = \sqrt{\Bar{\alpha}_\tau}x_{\mathrm{ref}} + \sqrt{1 - \Bar{\alpha}_\tau}z\),\, \(z \sim \gN(0, I)\) \\
        {Initialize counter:} \(s = 1\)\\
        \For{\(t \in [\tau-1,\cdots, 0]\)}{
            \If{$\textup{\texttt{mod}}(s,B) = 0$}{
                {Sample \(N\) times over \(B\) steps:}
                \clr{{\quad \(\{x_{t-1}^{(n)}\}_{n=1}^N \overset {i.i.d.} \sim \, \prod_{i=t}^{t+B} p(x_{i-1}|x_i)\)}}\\
                {\clr{Compute values of all N samples:}}
                \clr{{\quad \(\{V(x_{t-1}^{(n)})\}_{n=1}^N = \, \{r(\mathbb{E}[x_0|x_{t-1}^{(n)}])\}_{n=1}^N\)}}\\
                {Select the sample with maximum value:}
                {\quad \(x_{t-1} \gets \underset{\{x_{t-1}^{(n)}\}_{n=1}^N}{\operatorname{argmax}} V(x_{t-1}^{(n)}; p, c)\)}\\
            }
            \(s \gets s + 1\)\\}

    \Indm
    \KwOutput{\(x_0\)}
    \Indp
    \caption{\ourmethod($\eta$)}
    \label{algo:full_code}
\end{algorithm}
\DecMargin{1.6em}
\end{minipage}
\vspace{-14pt}
\end{wrapfigure}

For (T+I)2I cases, where the reward depends on a target image, the reward distribution deviates significantly from the base distribution \(p\). Here, sampling-based approaches would require a relatively larger value of \(N\) to achieve alignment. To tackle this, a reference target image \(x_{\mathrm{ref}}\), such as a specific style or even stroke painting, is provided as an additional conditioning input. Inspired by image editing techniques using diffusion \citep{Meng2021SDEdit:Equations,koohpayegani2023genie}, we add partial noise corresponding to only $\tau = \eta \times T$ (with $\eta \in (0, 1]$) steps of the forward diffusion process, instead of the full noise corresponding to $T$ steps. This is illustrated in line $2$ and $3$ of Algorithm~\ref{algo:full_code}. Then, starting from this noisy version of the reference image $x_\tau$, \ourmethod($\eta$) progressively denoises the sample for only $\tau$ steps to generate the clean, reference-aligned image $x_0$ (lines $5$ to $10$). Specifically, for each block of \(B\) steps, we unroll the diffusion model \(N\) times independently (Algorithm~\ref{algo:full_code}, line $7$). Then, based on the value function estimation \clr{(line $8$)}, select the best sample (line $9$) to continue the reverse process until we obtain a clean image at \(t=0\). A key advantage of \ourmethod($\eta$) is its ability to achieve similar alignment-divergence trade-offs while using a significantly lower value of \(N\), as is demonstrated in Section~\ref{sec:toy}. Note that the inner loop of \ourmethod($\eta$) (lines $5$-$10$) runs for $\tau$ steps (instead of $T$) due to adjustable noise conditioning discussed in the following. For the sake of brevity, we assume $\tau$ to be divisible by $B$; otherwise, we apply the same steps on a last but smaller block. By conditioning the initial noise sample $x_\tau$ on the reference image $x_{\textup{ref}}$, we can generate images $x_0$ that better incorporate the characteristics and semantics of the reference image while adhering to the text prompt $c$. As we demonstrate in our experimentation, threshold $\eta$ provides an \emph{extra knob} built in \ourmethod($\eta$) allowing the user to efficiently trade off divergence for reward. Note that the reward-conditioning of the generated image is inversely proportional to the value of $\eta$. Setting $\eta = 1$ results in $\tau = T$ and fully deactivates the noise conditioning. A byproduct of this conditioning is compute efficiency, as is discussed in Section~\ref{sec:conc}

\clr{Given \cref{thm:optpol} and its proof in \cref{sec:proof}, we aim to sample from the reward-tilted posterior $\pi_\lambda^*(x_{t-1}|x_{t}, c)$ in order to optimize the KL-regularized reward maximization objective \cref{eq:klobj}. In order to perform \ourmethod's blockwise guidance, we:
\begin{enumerate}
    \item sample from the prior $p(x_{t-1}|x_{t}, c)$ using the denoising diffusion process, across all $N$ streams,
    \item and then compute the values of each of the $N$ samples using $V(x_{t-1} ; p)$
\end{enumerate}
By doing so, the probability of the selected sample $x_{t-1}$ with the highest value (in Alg.~\ref{algo:code} Line 8,~\ref{algo:full_code}, Line 9) implicitly incorporates the prior distribution $p(x_{t-1}|x_{t}, c)$ as a Monte-Carlo estimation technique. Additionally, selecting the highest value sample 
\begin{equation}
    x_{t-1} \gets \underset{\{x_{t-1}^{(n)}\}_{n=1}^N}{\operatorname{argmax}} V(x_{t-1}^{(n)}; p, c)
\end{equation} 
is mathematically equivalent to sampling from the categorical distribution 
\begin{equation}
    x_{t-1} \overset {i.i.d.} \sim Categorical(\{\texttt{softmax}[V({x^{n}_{t-1}})/\tau]\}), \forall n \in [1,N],
\end{equation} 
where the temperature $\tau \to 0$. 
}

\clearpage
\section{Additional Results for Case Study I}
\label{sec:toysc1}
For the sake of completeness, we also study a variant of the GMM setting as discussed in Section~\ref{sec:toy}, where the mean of the reward distribution is equal to the mean of one of the components in the prior distribution, as shown in Fig.~\ref{fig:toysc1}. The prior distribution \(p(\vx)\) is modelled as a \(2\)-dimensional Gaussian mixture model (GMM)  $p(\vx_0) = \sum_{i=1}^{3} w_i\mathcal{N}(\bm \mu_i, {\bm \sigma}^2{\bm I}_2)$, with $\sigma = 2$, $[{\bm \mu}_1, {\bm \mu}_2,{\bm \mu}_3] = [(5, 3), (3, 7), (7, 7)]$, and ${\bm I}_d$ is an $d$-dimensional identity matrix, as shown in Fig.~\ref{fig:toysc1}. All mixture components are equally weighted with, i.e., \(w_1 = w_2 = w_3 = 0.33\). In contrast to the previous setup, we define the reward distribution as \(p(r|\vx) = \gN({\bm \mu}_r, {\bm \sigma}_r^2{\bm I}_2)\) with \({\bm \mu}_r = [5,3]\) and \({\bm \sigma}_r = 2\). Based on this setup, we train a diffusion model \(p_{\theta}(x)\) to estimate the prior distribution \(p(\vx)\). For this we use a \(3\)-layer MLP that takes as input \((\vx_t, t)\) and predicts the noise \({\bm \varepsilon}_t\). It is trained over \(200\) epochs with \(T=1000\) denoising steps. Then, we implement \ourmethod to guide the trained diffusion model to generate samples with high likelihood under the reward distribution.

\begin{wrapfigure}{r}{0.5\textwidth}
    \vspace{-1.5em}
    \begin{minipage}{0.5\textwidth}
    \centering
    \begin{subfigure}{\textwidth}
        \centering
        \includegraphics[width=\linewidth]{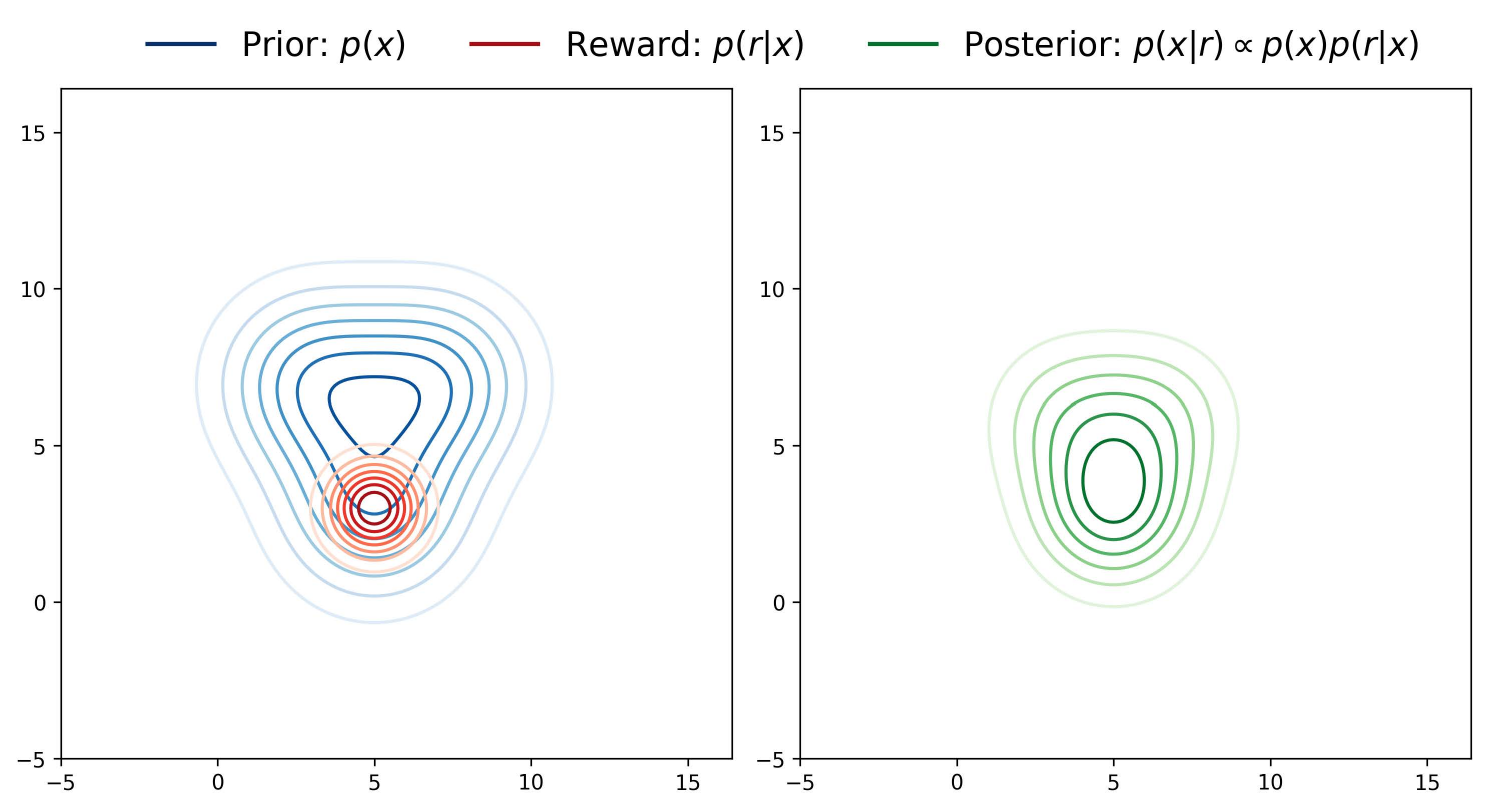}
    \end{subfigure}
    \begin{subfigure}{\textwidth}
        \centering
        \includegraphics[width=\linewidth]{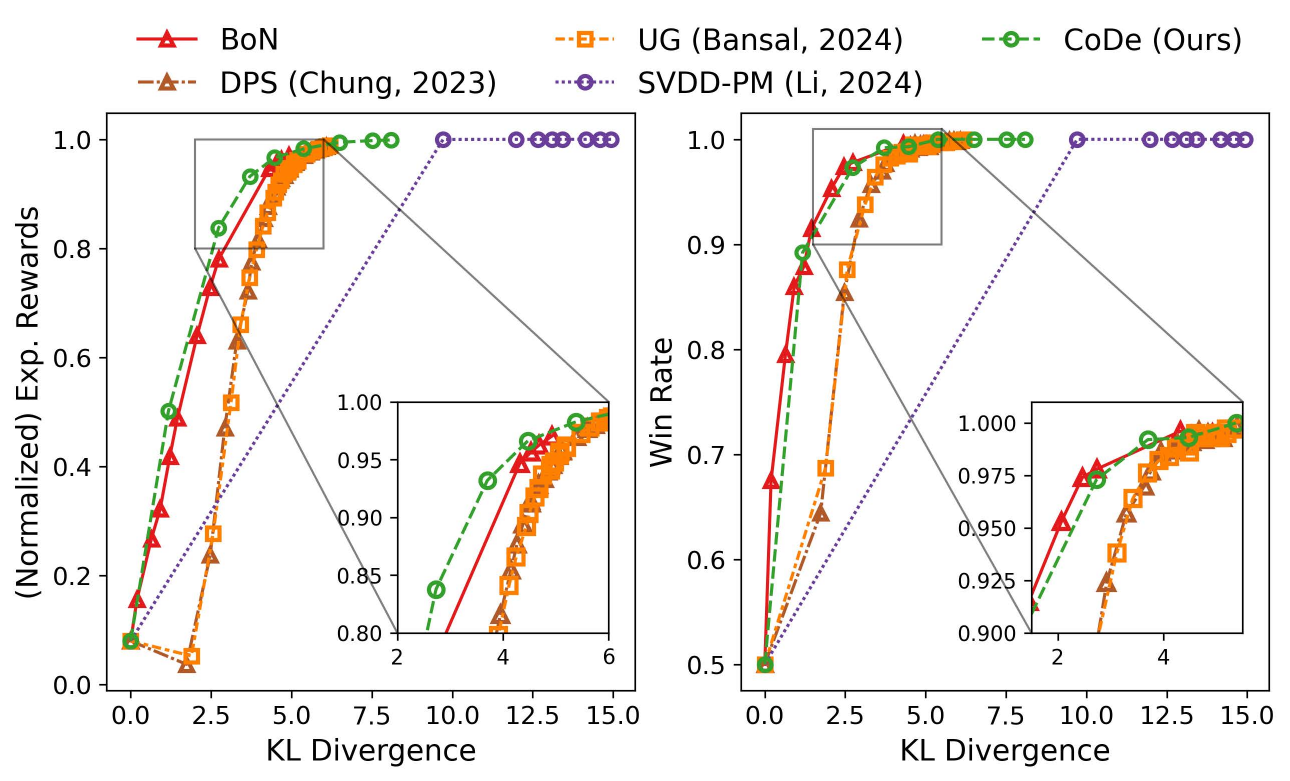}
    \end{subfigure}
    \caption{Setup (top row) and reward vs. divergence trade-off (bottom row). \ourmethod{} offers highest reward at lowest divergence with a much lower $N$ than BoN.}
    \label{fig:toysc1}
    \end{minipage}
    \vspace{-1em}
\end{wrapfigure}
In Fig.~\ref{fig:toysc1}, we present the trade-off curves for normalized expected reward (or win rate) versus KL divergence by adjusting the hyperparameters of the respective methods. For the guidance-based methods DPS and UG, the guidance scale is varied between \(1\) and \(50\), whereas for the sampling-based methods BoN, SVDD, and \ourmethod{}, the number of samples \(N\) is varied between \(2\) and \(500\). Similar to the results in Section~\ref{sec:toy}, we observe \ourmethod{} achieve the most favorable trade-off between normalized expected reward and KL divergence, with BoN performing closely behind. In the case of win rate vs. KL divergence, BoN demonstrates the best trade-off, consistent with findings from the literature on Language Model (LM) alignment \citep{mroueh2024information, Beirami2024TheoreticalPolicy,Gui2024BoNBoNSampling}. Furthermore, guidance-based methods tend to exhibit higher KL divergence, as they often collapse to the mode of the reward distribution when the guidance scale is increased, leading to a reduction in diversity among the sampled data points. For both performance metrics, SVDD-PM achieves a high expected reward or win rate but at the expense of significantly increased divergence, even for smaller values of \(N\). Whereas \ourmethod{} offers the widely sought-after flexibility, allowing users to balance the trade-off by adjusting parameters such as \(N\) and \(B\).

\clearpage

\section{Additional Results for Case Study II}
\label{sec:ad}
\vspace{-0.3cm}
Here, we provide further details about the differentiable reward-guidance scenarios' quantitative evaluations summarized in Table~\ref{tab:quant} and computational complexity analysis in Table~\ref{tab:complexity}. 

\textbf{Further details on evaluation metrics.} For computing I-Gram, we utilize VGG \citep{simonyan2014very} Gram matrices of the reference and generated images to measure image alignment across all scenarios/settings, as commonly followed in the literature \citep{Somepalli2024MeasuringModels, gatys2016image, yeh2020improving}. Specifically, these are computed using the last layer feature maps of an ImageNet-1k pretrained VGG backbone \citep{simonyan2014very}. Image alignment between a reference, generated image pair is then measured by computing the dot product of their gram matrices. Further, we report a recently proposed CLIP-based Maximum Mean Discrepancy (CMMD) \citep{jayasumana2024rethinking} as a divergence measure. It overcomes the drawback of FID stemming from the underlying Gaussian assumption in the representation space of the Inception model \citep{szegedy2015going}.

\textbf{Qualitative performance.} Let us start with style guidance in Fig.~\ref{fig:style}. As can be seen, \ourmethod{} either stands on-par or performs better as compared to all other baselines in terms of capturing both, the style of the reference image and the semantics of the text prompt. This can be seen in comparison with UG for the text prompt of ``portrait of a woman'', where UG fails to incorporate the text prompt, but latches onto the style of the reference image. The results for face guidance with and without noise-conditioning are illustrated in Figs.~\ref{fig:face_eta},~\ref{fig:face}, respectively. It can be noticed that the noise-conditioned baselines capture the reference face much better than their non noise-conditioned counterparts. Moreover, in the case of noise-conditioning, BoN($\eta$), SVDD-PM($\eta$) and UG($\eta$) fail to meaningfully capture the semantics of the text-prompt, particularly for ``Headshot of a woman made of marble''. However, \ourmethod($\eta$) captures both, the reference face and the text prompt. In the case of the other text prompt ``Headshot of a person with blonde hair with space background'', SVDD-PM($\eta$) and \ourmethod($\eta$) offer best results as compared to other baselines. Finally, the results for stroke guidance without noise-conditioning are illustrated in Fig.~\ref{fig:stroke}. It can be seen that none of the baselines capture the reference strokes or their color palettes successfully, but only adhere to the text-prompt. This empirically corroborates the need for using noise-conditioning for guidance, when the reward distribution (strokes in this scenario) differs significantly from the base diffusion model's distribution. 

\textbf{Quantitative performance.} In this section, we break down the quantitative performance of all methods across the three different differentiable reward scenarios of style, face and stroke guidance. We summarize the results in Tab.~\ref{tab:style}, \ref{tab:face}, \ref{tab:stroke} with the first row corresponding to the base Stable Diffusion model and R: indicating the reward metric used for guiding the diffusion model. \clr{For differentiable guidance scenarios, we observe best reward vs divergence/text-alignment tradeoffs using $N,B = 100,5$ for \ourmethod{}, $\eta = 0.6$ for \ourmethod($\eta$) in the style and stroke guidance scenarios, $\eta = 0.7$ for face guidance and $\eta=0.8$ for compression guidance. Similarly for SVDD-PM and BoN, using $N=100$ renders best results in terms of reward-aligned generated images without over-optimization (low text alignment or high divergence from base distribution as shown in Figs.~\ref{fig:svddpmrh},~\ref{fig:ug_rh}) for differentiable rewards. For compression guidance, we observe best results with $N=100$ for BoN and $N=40$ for SVDD-PM, \ourmethod{} for the T2I scneario, and $N=100$ for BoN($\eta$), $N=7$ for SVDD-PM($\eta$), $N=40$ for \ourmethod($\eta$) for the (T+I)2I scenario. We observe best results for UG with a guidance scale of $6$, $6$ forward gradient steps ($m$) and $12$ refinement steps ($K$). These have also been reported to work best for style and face guidance by the authors \citep{Bansal2024UniversalModels} and we observe these to work best for stroke guidance too. DPS uses a gradient guidance scale of $1.5$ for style and stroke guidance and a scale of $200$ for face guidance. In the case of MPGD and Freedom used for aesthetic guidance, we observe best results for $\rho = 12.5, 0.15$, respectively, and vary $\rho = [8.5, 10.5, 12.5, 15.5, 17.5]$ for MPGD and $\rho = [0.1, 0.15, 0.2, 0.25, 0.3]$ for Freedom.}

\begin{figure}[ht]
    \centering
    \includegraphics[width=0.85\linewidth]{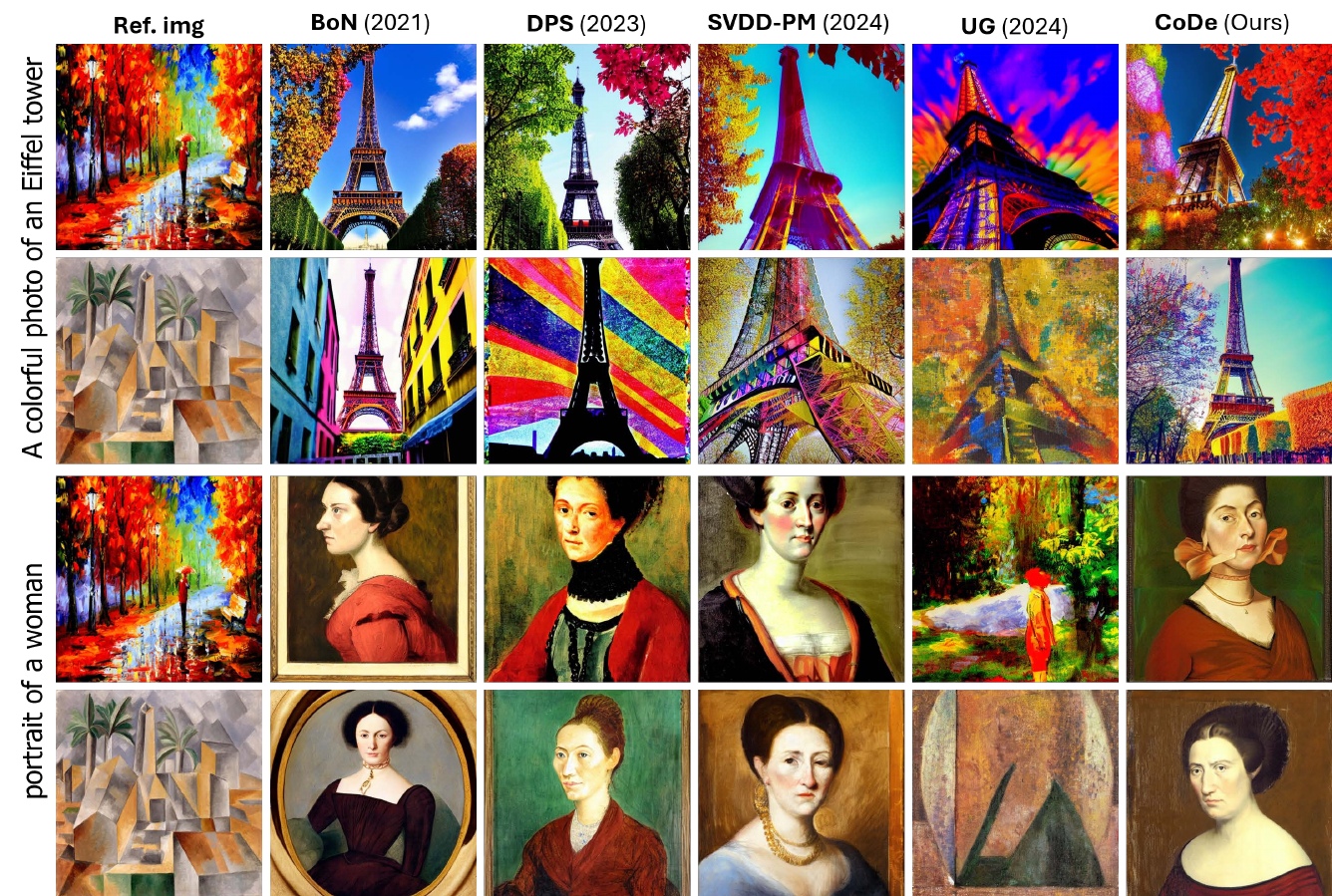}
    \caption{Quality evaluation across methods for style guidance without noise-conditioning.}
    \label{fig:style}
\end{figure}

\begin{figure}[t]
    \centering
    \label{fig:styleplots}
    \begin{subfigure}{0.49\textwidth}
        \centering
        \includegraphics[width=\linewidth]{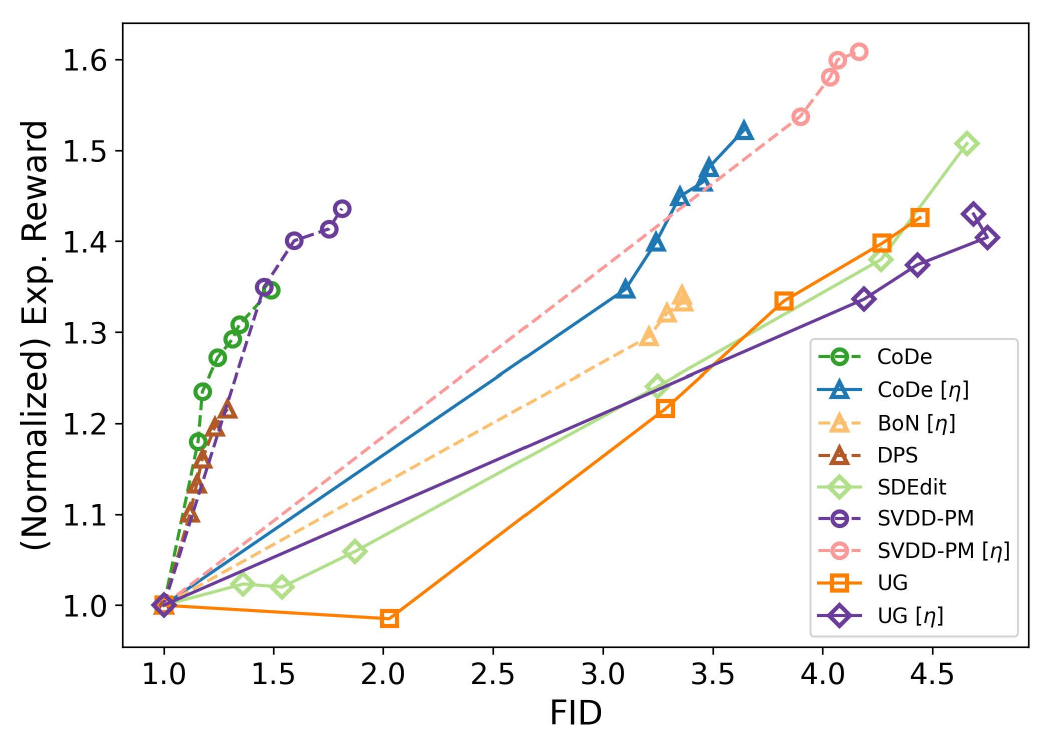}
    \end{subfigure}
    \begin{subfigure}{0.49\textwidth}
        \centering
        \includegraphics[width=\linewidth]{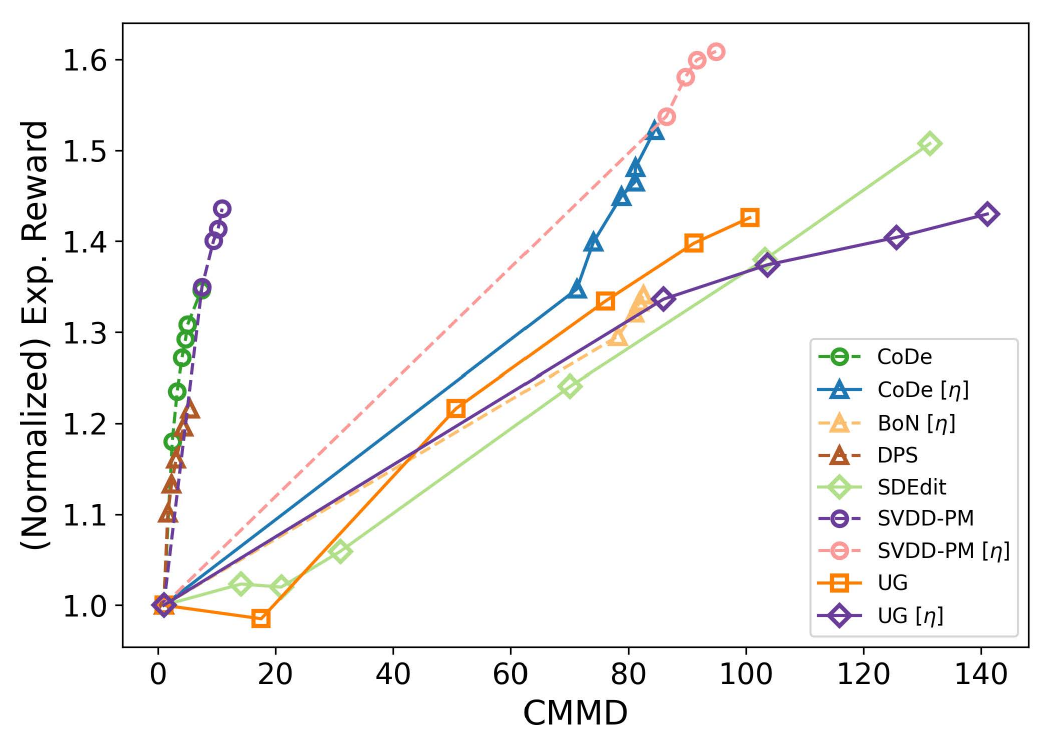}
    \end{subfigure}
    \begin{subfigure}{0.49\textwidth}
        \centering
        \includegraphics[width=\linewidth]{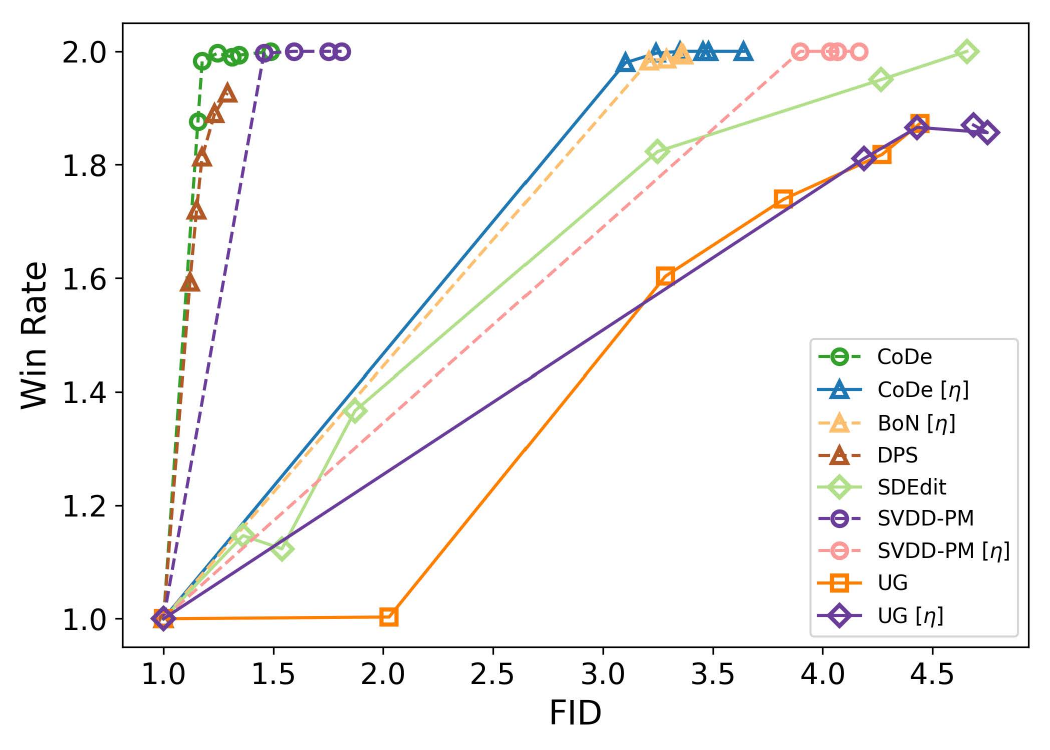}
    \end{subfigure}
    \begin{subfigure}{0.49\textwidth}
        \centering
        \includegraphics[width=\linewidth]{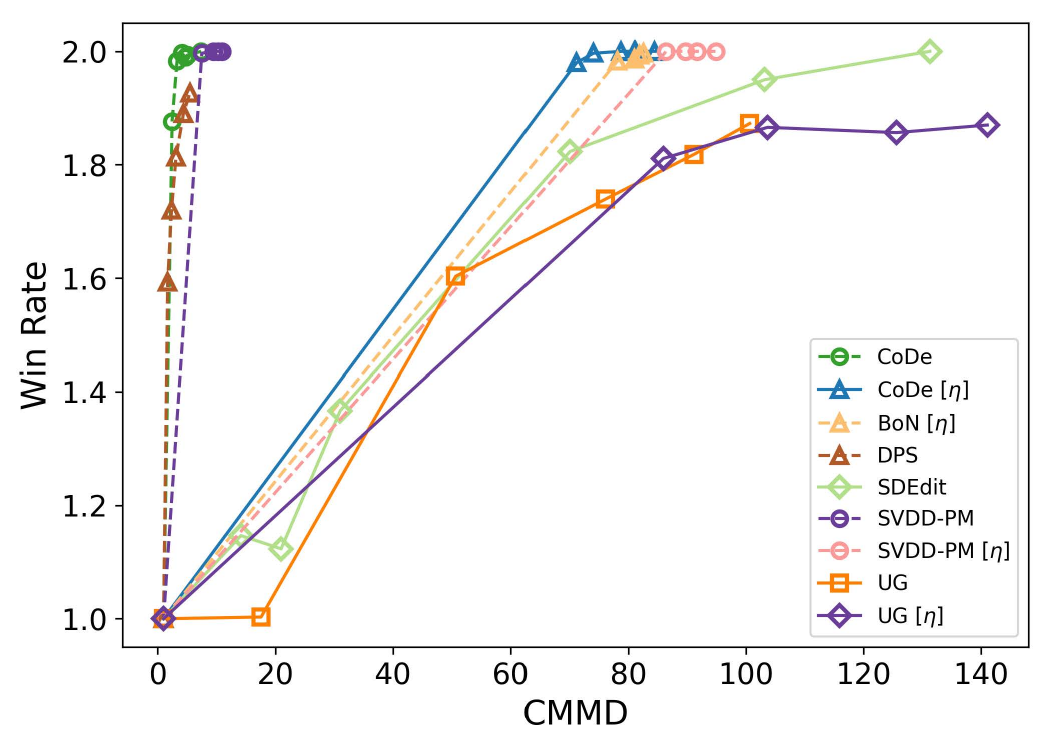}
    \end{subfigure}
    \vspace{-0.2cm}
    \caption{Reward vs. divergence trade-off curves for style guidance.}
    \label{fig: style-trad}
    \vspace{-12pt}
\end{figure}

\clearpage

\textbf{Style Guidance}. The results are summarized in Table~\ref{tab:style}. Compared to the sampling-based guidance counterparts BoN and BoN($\eta$), \ourmethod{} achieves a higher reward at the cost of slightly higher divergence (FID and CMMD), with and without the noise-conditioning. Yet, with a slightly smaller reward \ourmethod{}, \ourmethod($\eta$) offers a better performance than SVDD-PM, SVPP-PM($\eta$) across FID, CMMD and T-CLIP. Compared to guidance-based counterparts such as DPS, DPS($\eta$) and UG, UG($\eta$), \ourmethod{}, \ourmethod($\eta$) offer a better trade-off in terms of reward vs base distribution divergence and reward vs text, image alignment. This is also illustrated in Fig.~\ref{fig: style-trad} where \ourmethod($\eta$) consistently outperforms UG, UG($\eta$) in terms of image alignment (normalized expected reward as well as win rate), while offering lesser divergence w.r.t. both FID and CMMD.  

\begin{table}[h]
\centering
\caption{\small Quantitative metrics for style guidance.}\vspace{-1em}
\label{tab:style}
\def\arraystretch{1.15}
\resizebox{0.7\linewidth}{!}{
\begin{tabular}
{l|c|c|c|c|c}
\hline
\multirow{2}{*}{\textbf{Method}} & \multicolumn{4}{c}{\textbf{R1: Style Guidance}}\\ \cline{2-6}  
& \scriptsize \textbf{Rew.} ($\uparrow$) & \scriptsize \textbf{FID} ($\downarrow$) & \scriptsize \textbf{CMMD} ($\downarrow$) & \scriptsize \textbf{T-CLIP} ($\uparrow$) & \scriptsize \textbf{I-Gram} ($\uparrow$) \\  \hline
Base-SD (\citeyear{Rombach2021High-ResolutionModels}) & 1.0  & 1.0 & 1.0 & 1.0 & 1.0 \\ \hline
SDEdit (\citeyear{Meng2021SDEdit:Equations}) & 1.22  & 3.25 & 67.75 & 0.90 & \clr{1.51} \\
BoN (\citeyear{Gao2022ScalingOveroptimization}) & 1.14  & 1.30 & 2.25 & 0.99 & 1.1\\
BoN ($\eta=0.6$) & 1.34  & 3.36 & 84.02 & 0.87 & 1.57\\
SVDD-PM (\citeyear{Li2024Derivative-FreeDecoding}) & 1.44  & 1.81 & 10.93 & 0.99 & 1.6 \\ 
SVDD-PM ($\eta=0.6$)(\citeyear{Li2024Derivative-FreeDecoding}) & 1.60  & 4.16 & 96.52 & 0.82 & 3.5 \\ \cdashline{1-6}
DPS (\citeyear{Chung2023DiffusionProblems}) & 1.22  & 1.29 & 5.46 & 0.99 & 1.2\\
DPS ($\eta=0.6$)(\citeyear{Chung2023DiffusionProblems}) & 1.29  & 3.31 & 90.06 & 0.83 & 2.5\\
UG (\citeyear{Bansal2024UniversalModels}) & 1.39  & 4.27 & 91.13 & 0.82 & 2.9 \\
UG ($\eta=0.7$) & 1.37  & 4.43 & 103.6 & 0.79 & 3.5 \\
\hline
\rowcolor{tabBlue} \textbf{\ourmethod} & 1.34  & 1.49 & 7.40 & 1.0 & 1.6 \\ 
\rowcolor{tabBlue} \textbf{\ourmethod($\eta=0.6$)} & 1.52  & 3.64 & 84.45 & 0.86 & 3.2 \\ \hline
\end{tabular}
}
\vspace{-0.8em}
\end{table}

\begin{figure}[t]
    \centering
    \begin{subfigure}{0.49\textwidth}
        \centering
        \includegraphics[width=\linewidth]{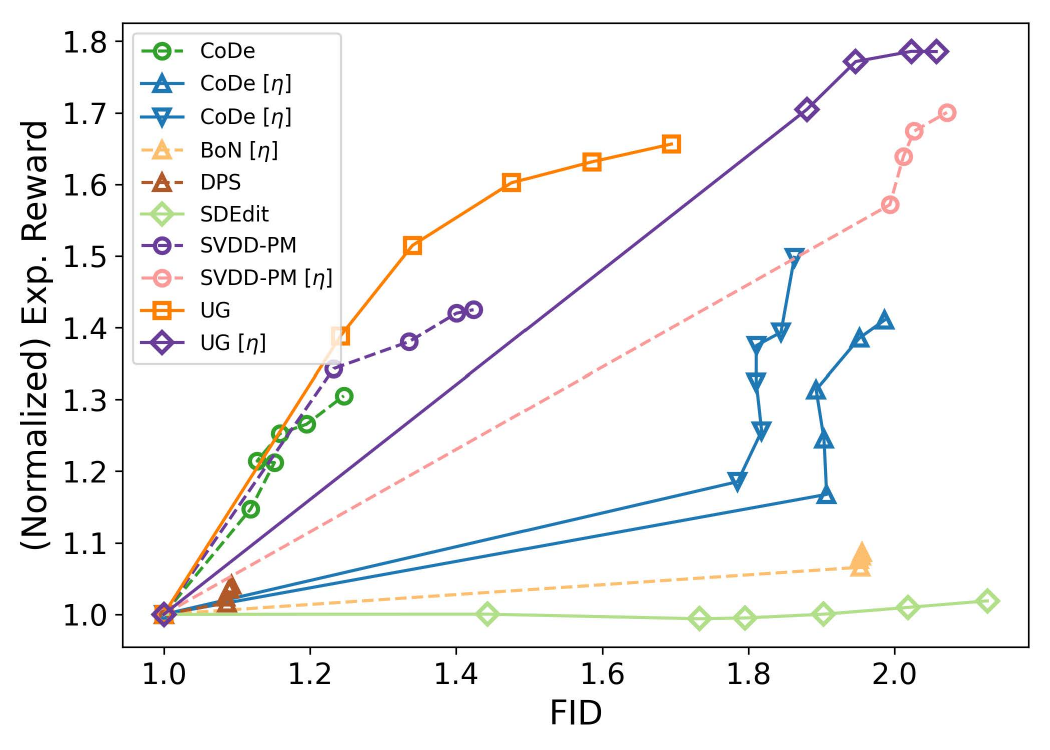}
    \end{subfigure}
    \begin{subfigure}{0.49\textwidth}
        \centering
        \includegraphics[width=\linewidth]{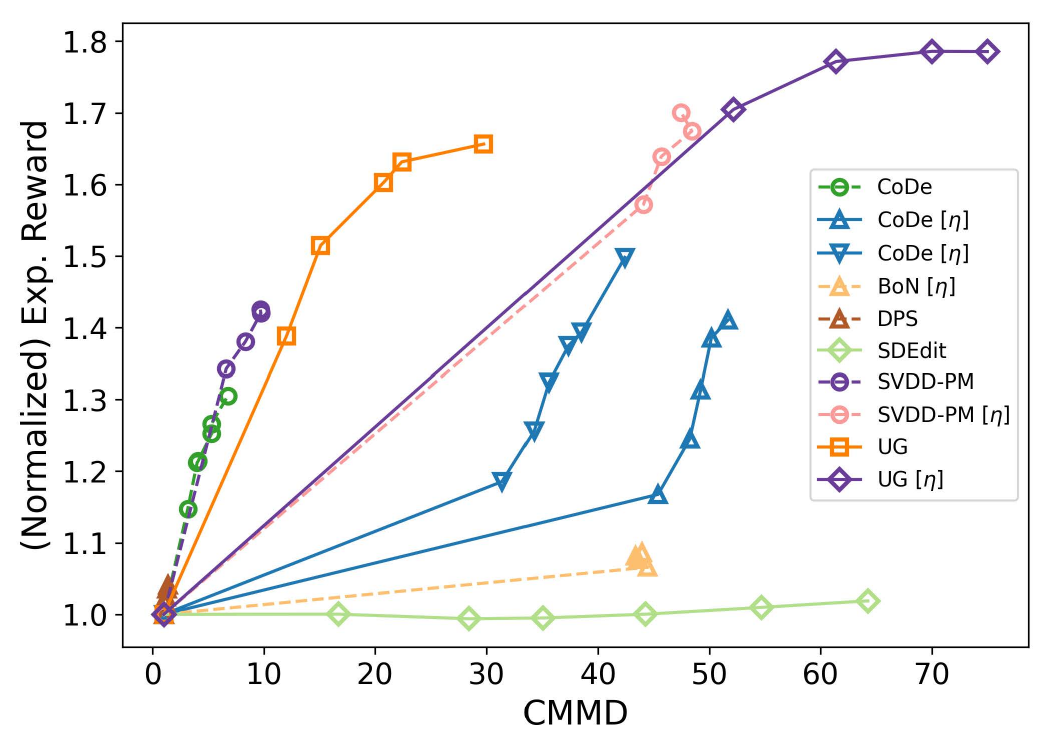}
    \end{subfigure}
    \begin{subfigure}{0.49\textwidth}
        \centering
        \includegraphics[width=\linewidth]{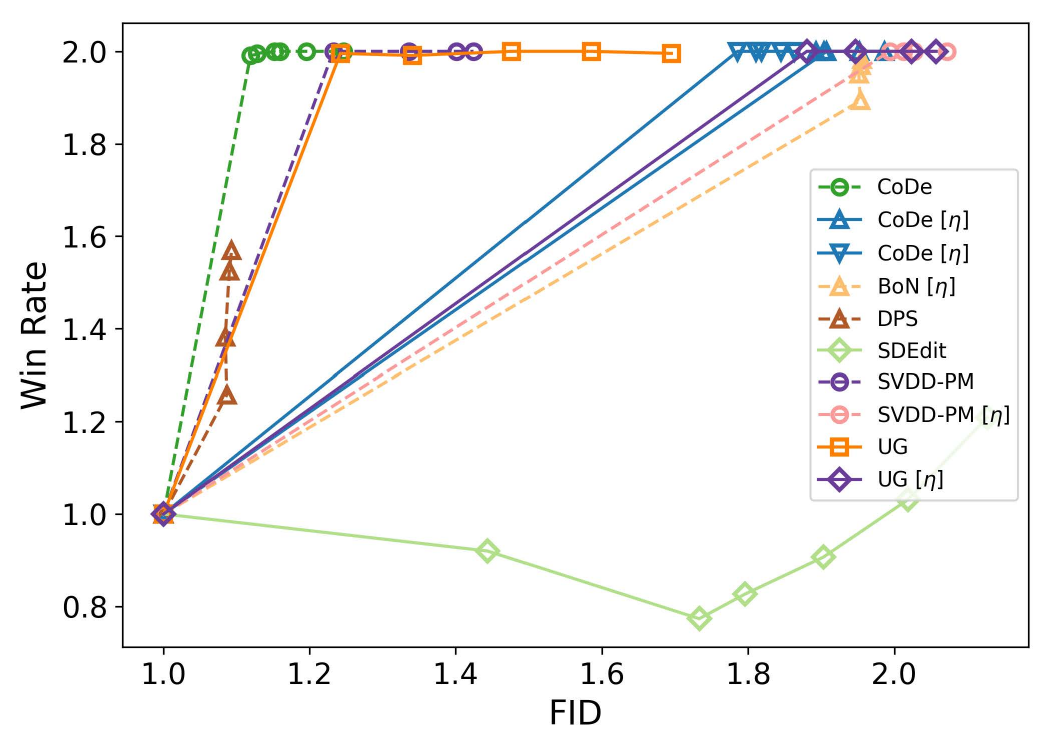}
    \end{subfigure}
    \begin{subfigure}{0.49\textwidth}
        \centering
        \includegraphics[width=\linewidth]{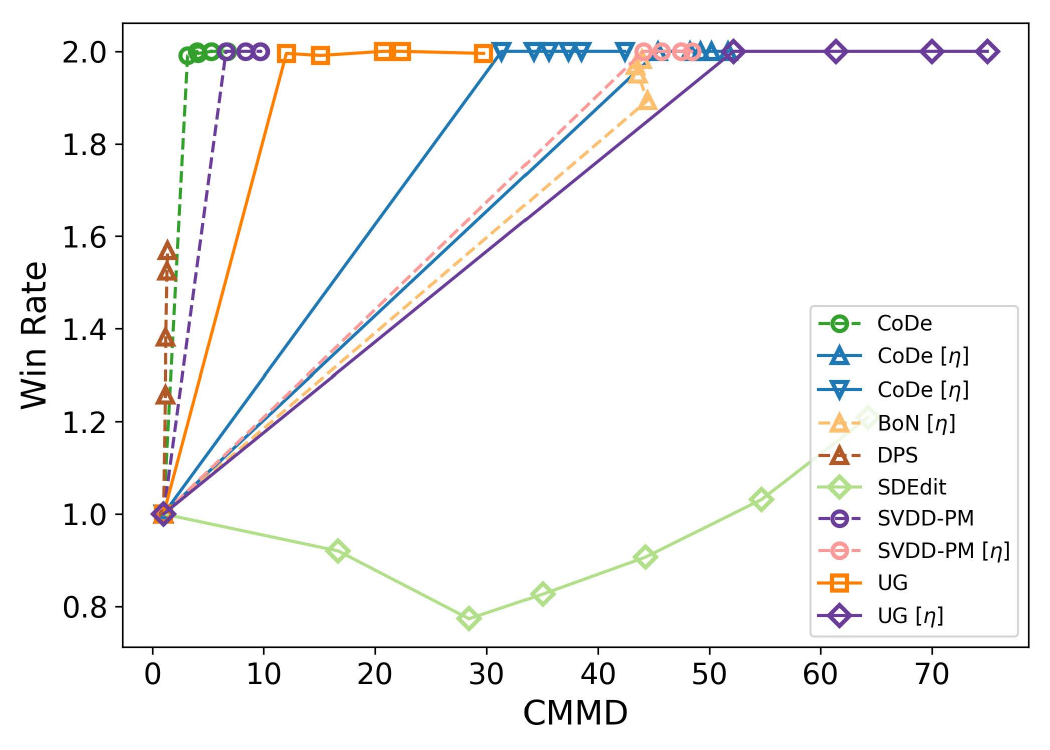}
    \end{subfigure}
    \caption{Reward vs. divergence trade-off curves for face guidance.}
    \label{fig:face_trad}
    \vspace{-3pt}
\end{figure}

\begin{figure}[ht]    
    \centering
    \includegraphics[width=\linewidth]{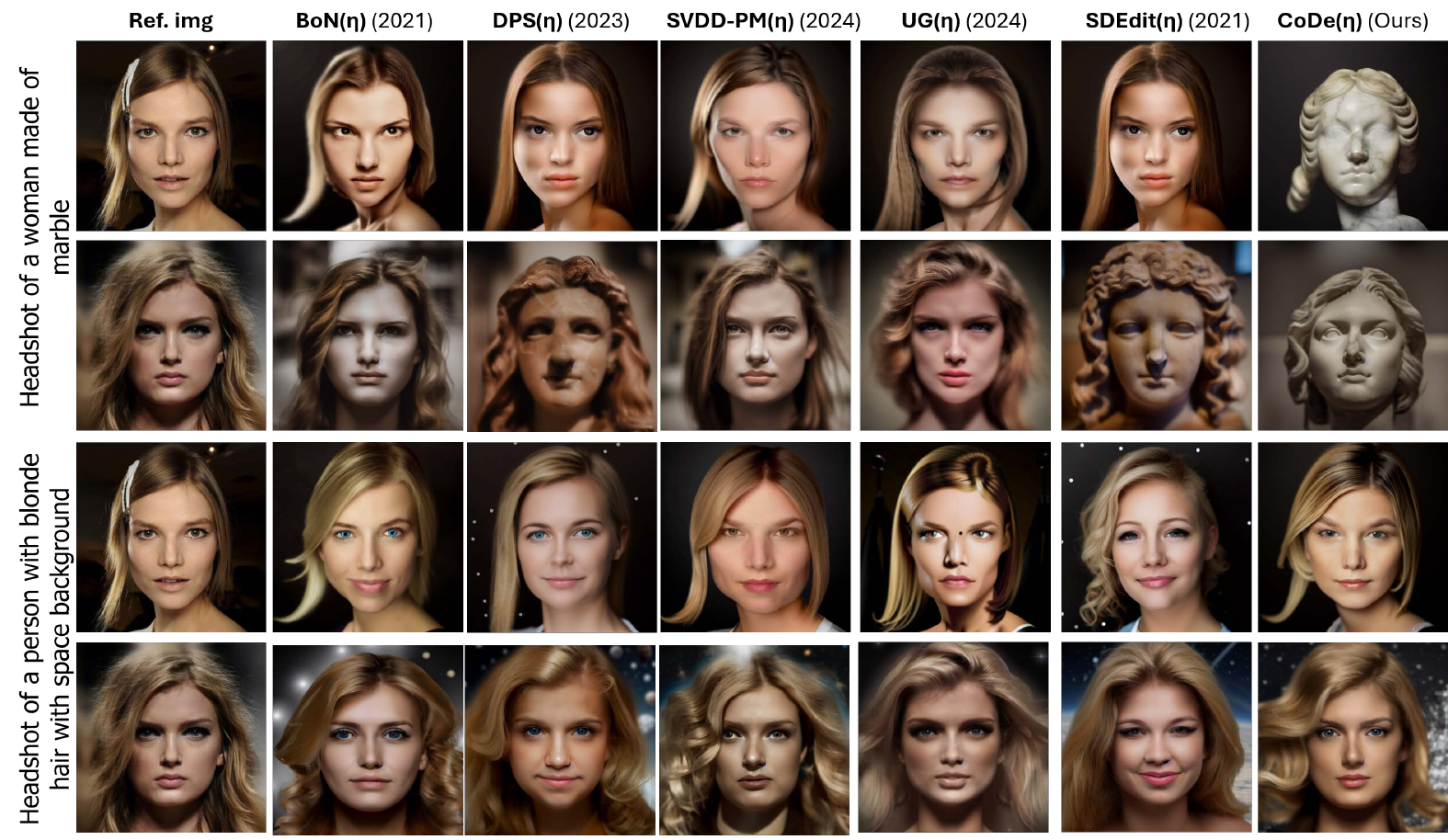}
    \vspace{-0.3cm}
    \caption{Quality evaluation across methods for face guidance.}
    \label{fig:face_eta}
    
    \vspace{-12pt}
\end{figure}

\clearpage
\textbf{Face Guidance}. We summarize the results in Table~\ref{tab:face}. As the rewards are negative, we first compute the negative log of the reward values and then normalize it with respect to the base. 
\begin{figure}[t]    
    \centering
    \includegraphics[width=0.85\linewidth]{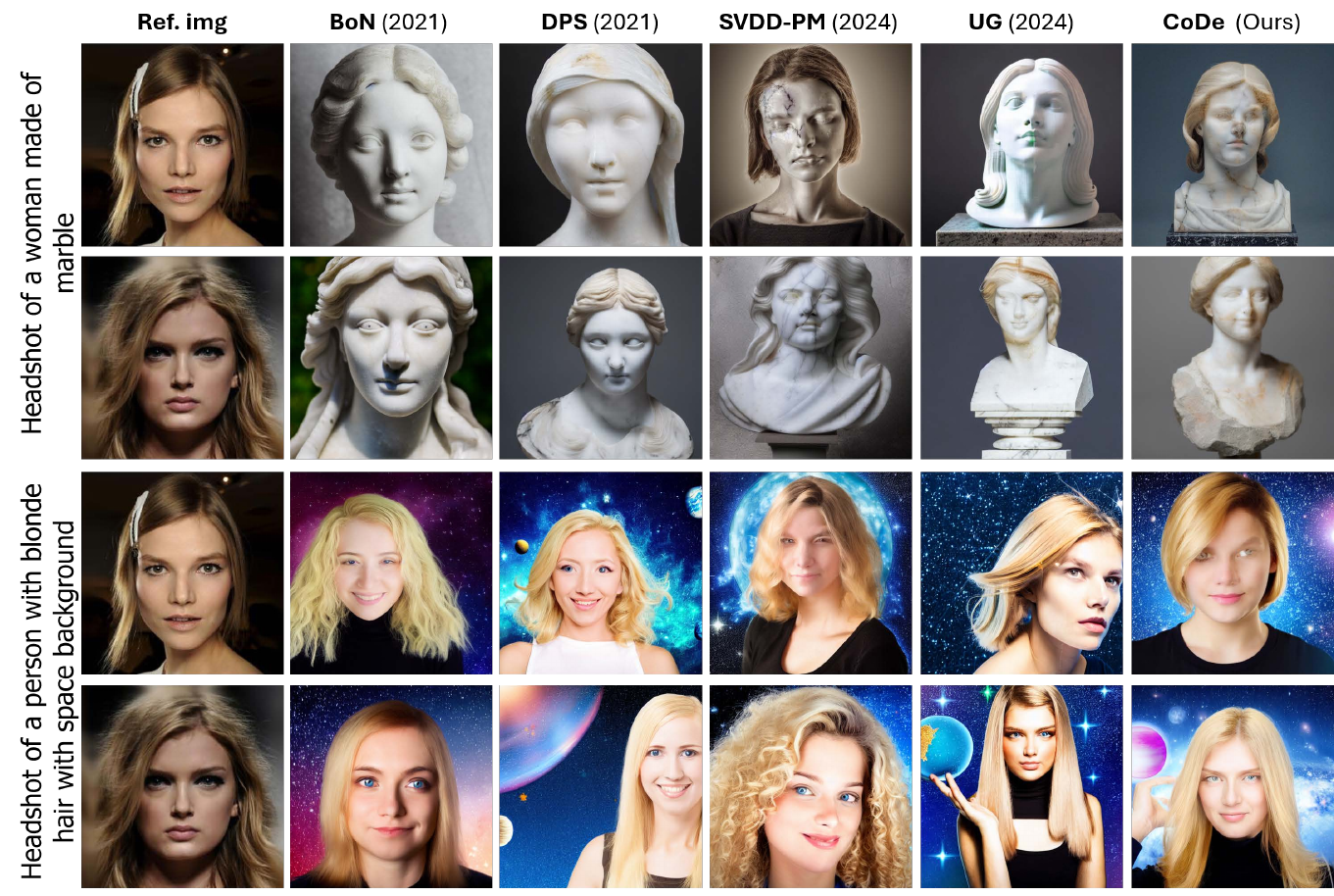}
    \vspace{-0.3cm}
    \caption{Quality evaluation across methods for face guidance without noise-conditioning.}
    \label{fig:face}
    
    \vspace{-12pt}
\end{figure}
\begin{table}[th]
\centering
\caption{\small Quantitative metrics for face guidance.}\vspace{-1em}
\label{tab:face}
\def\arraystretch{1.15}
\resizebox{0.7\linewidth}{!}{
\begin{tabular}
{l|c|c|c|c|c}
\hline
\multirow{2}{*}{\textbf{Method}} & \multicolumn{4}{c}{\textbf{R2: Face Guidance}}\\ \cline{2-6}  
                                 & \scriptsize \textbf{Rew.} ($\uparrow$) & \scriptsize \textbf{FID} ($\downarrow$) & \scriptsize \textbf{CMMD} ($\downarrow$) & \scriptsize \textbf{T-CLIP} ($\uparrow$) & \scriptsize \textbf{I-Gram} ($\uparrow$) \\ \hline
Base-SD (\citeyear{Rombach2021High-ResolutionModels})                          & 1.0                               & 1.0                      & 1.0                         & 1.0 & 1.0 \\ \hline
SDEdit (\citeyear{Meng2021SDEdit:Equations}) & 0.99  & 1.79 & 34.91 & 0.89 & 1.74 \\
BoN (\citeyear{Gao2022ScalingOveroptimization})                             & 1.08                            & 1.22                       & 2.52                         & 0.99 & 1.0 \\
BoN ($\eta=0.7$)                             & 1.08                            & 1.82                       & 35.3                         & 0.88 & 1.8 \\
SVDD-PM (\citeyear{Li2024Derivative-FreeDecoding})                         & 1.42                            & 1.42                       & 9.67                         & 0.97 & 0.74 \\
SVDD-PM ($\eta=0.7$) (\citeyear{Li2024Derivative-FreeDecoding})                         & 1.70                            & 2.07                       & 48.22                         & 0.86 & 1.77 \\ \cdashline{1-6}
DPS (\citeyear{Chung2023DiffusionProblems})                             & 1.04                            & 1.09                       & 1.36                         & 0.99 & 1.03 \\
DPS ($\eta=0.7$) (\citeyear{Chung2023DiffusionProblems})                             & 1.21                            & 1.71                       & 33.21                         & 0.86 & 1.68 \\
UG (\citeyear{Bansal2024UniversalModels})              & 1.66                             & 1.69                       & 29.76                         & 0.86 & 1.06 \\
UG ($\eta=0.7$) & 1.77 & 1.94 & 61.27 & 0.85 & 1.78 \\
 \hline
\rowcolor{tabBlue} \textbf{\ourmethod}       & 1.30                            & 1.25                       & 6.76                         & 0.98 & 0.91 \\
\rowcolor{tabBlue} \textbf{\ourmethod($\eta=0.7$)}   & 1.5                            & 1.86                       & 42.40                         & 0.88 & 1.91 \\ \hline
\end{tabular}
}
\vspace{-12pt}
\end{table}
Compared to BoN, BoN($\eta$) and DPS, DPS($\eta$), \ourmethod{}, \ourmethod($\eta$) provides higher rewards but also with higher divergence (FID and CMMD). Although SVDD-PM, SVDD-PM($\eta$) and UG, UG($\eta$) achieve higher rewards, \ourmethod{}, \ourmethod($\eta$) offer a better trade-off in terms of FID, CMMD and T-CLIP. Moreover, \ourmethod($\eta$) offers the best image-alignment in terms of I-Gram as compared to all other baselines. 

Additionally, \ourmethod{}($\eta$) provides competitive results as compared to UG, which is the second-best method while offering better prompt alignment as reflected in a higher T-CLIP score. We draw similar conclusions from the reward vs. divergence curves presented in Fig.~\ref{fig:face_trad}, where \ourmethod($\eta$) achieves competitive rewards as compared to UG, UG($\eta$), SVDD-PM, SVDD-PM($\eta$), but on-par win rates as compared to UG, at the cost of slightly higher FID and CMMD scores.

\clearpage
\textbf{Stroke}. As shown in Table.~\ref{tab:stroke}, among the sampling-based methods, \ourmethod{} provides better results than BoN in terms of expected reward and FID while maintaining the same T-CLIP score. Although UG and SVDD-PM offer higher rewards, \ourmethod{} offers lower divergence (FID and CMMD) and better T-CLIP scores. Overall, we observe that \ourmethod($\eta$) has the highest rewards while offering competitive FID, CMMD and T-CLIP. 

\begin{table}[h]
\small
\centering
\caption{\small Quantitative metrics for stroke generation.}\vspace{-1em}
\label{tab:stroke}
\def\arraystretch{1.15}
\resizebox{0.7\linewidth}{!}{
\begin{tabular}
{l|c|c|c|c|c}
\hline
\multirow{2}{*}{\textbf{Method}} & \multicolumn{4}{c}{\textbf{R3: Stroke Generation}}\\ \cline{2-6}  
                                 & \scriptsize \textbf{Rew.} ($\uparrow$) & \scriptsize \textbf{FID} ($\downarrow$) & \scriptsize \textbf{CMMD} ($\downarrow$) & \scriptsize \textbf{T-CLIP} ($\uparrow$)
                                 & \scriptsize \textbf{I-Gram} ($\uparrow$)\\ \hline
Base-SD (\citeyear{Rombach2021High-ResolutionModels}) & 1.0  & 1.0 & 1.0 & 1.0 & 1.0 \\ \hline
SDEdit (\citeyear{Meng2021SDEdit:Equations}) & 1.38 & 2.79 & 145.6 & 0.90 & 2.64 \\
BoN (\citeyear{Gao2022ScalingOveroptimization}) & 1.25  & 1.05 & 4.5 & 0.99 & 1.12 \\
BoN ($\eta=0.6$) & 1.55  & 3.12 & 170 & 0.89 & 3.05 \\
SVDD-PM (\citeyear{Li2024Derivative-FreeDecoding}) & 1.56  & 1.04 & 12.0 & 0.99 & 1.38 \\
SVDD-PM ($\eta=0.6$) (\citeyear{Li2024Derivative-FreeDecoding}) & 1.83  & 3.87 & 187.1 & 0.85 & 4.4 \\ \cdashline{1-6}
DPS (\citeyear{Chung2023DiffusionProblems}) & 1.34  & 1.04 & 14.0 & 0.97 & 1.13 \\
DPS ($\eta=0.6$) (\citeyear{Chung2023DiffusionProblems}) & 1.45  & 2.81 & 195.0 & 0.88 & 2.83 \\
UG (\citeyear{Bansal2024UniversalModels}) & 1.55  & 2.78 & 78.0 & 0.88 & 1.63 \\
UG ($\eta=0.6$) & 1.66  & 4.45 & 236.5 & 0.78 & 1.21 \\
\hline
\rowcolor{tabBlue} \textbf{\ourmethod} & 1.41  & 0.78 & 6.5 & 0.99 & 1.38 \\
\rowcolor{tabBlue} \textbf{\ourmethod($\eta=0.6$)} & 1.75  & 3.50 & 178.5 & 0.87 & 4.25 \\ \hline
\end{tabular}
}
\vspace{-12pt}
\end{table}
\begin{figure}[t]
    \centering
    \includegraphics[width=0.85\linewidth]{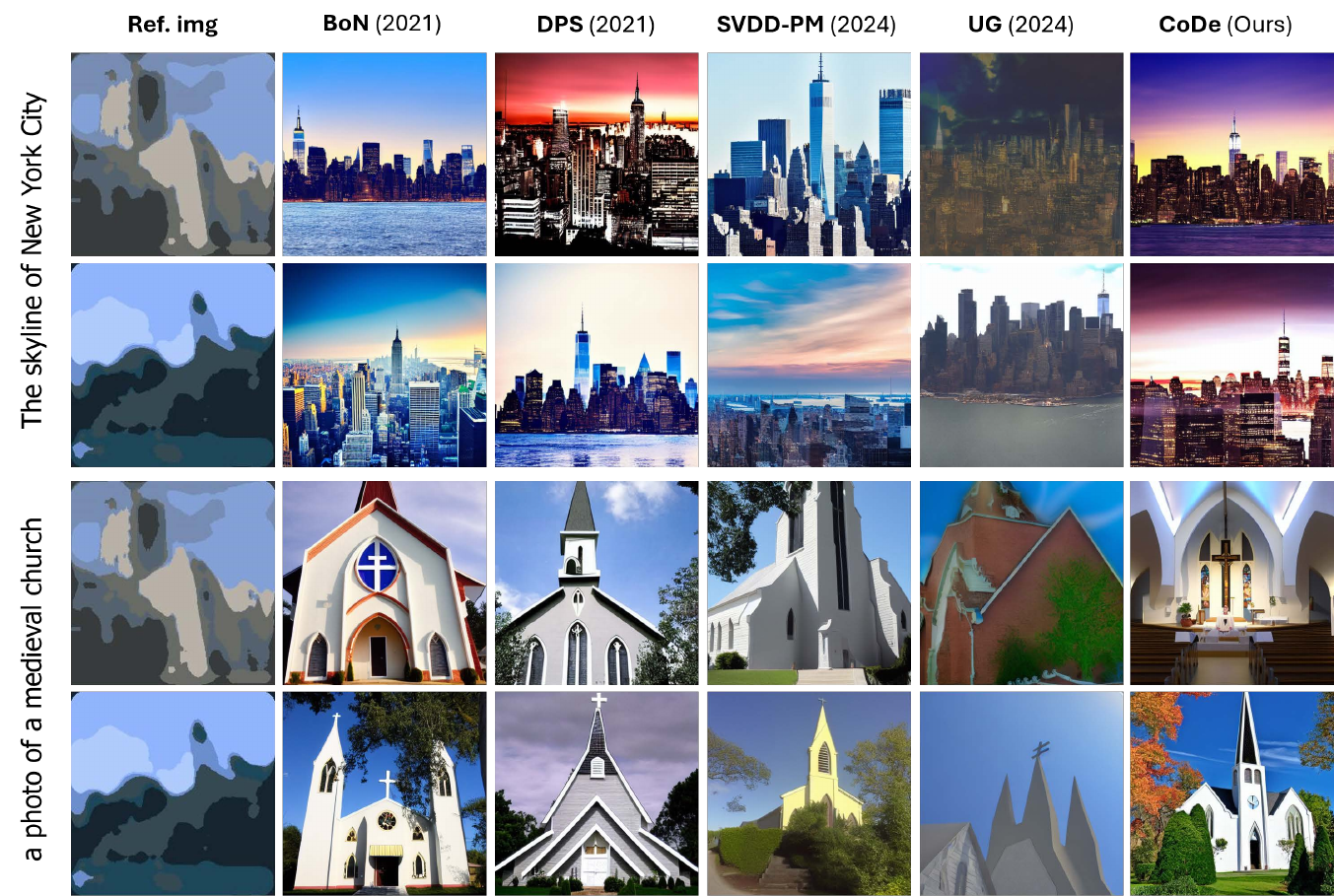}
    \vspace{-0.3cm}
    \caption{Quality evaluation across methods for stroke guidance without noise-conditioning.}
    \label{fig:stroke}
    \vspace{-.1in}
\end{figure}
%

%

\clearpage

\begin{figure}[t!]
    \centering
    \begin{subfigure}{0.49\textwidth}
        \centering
        \includegraphics[width=\linewidth]{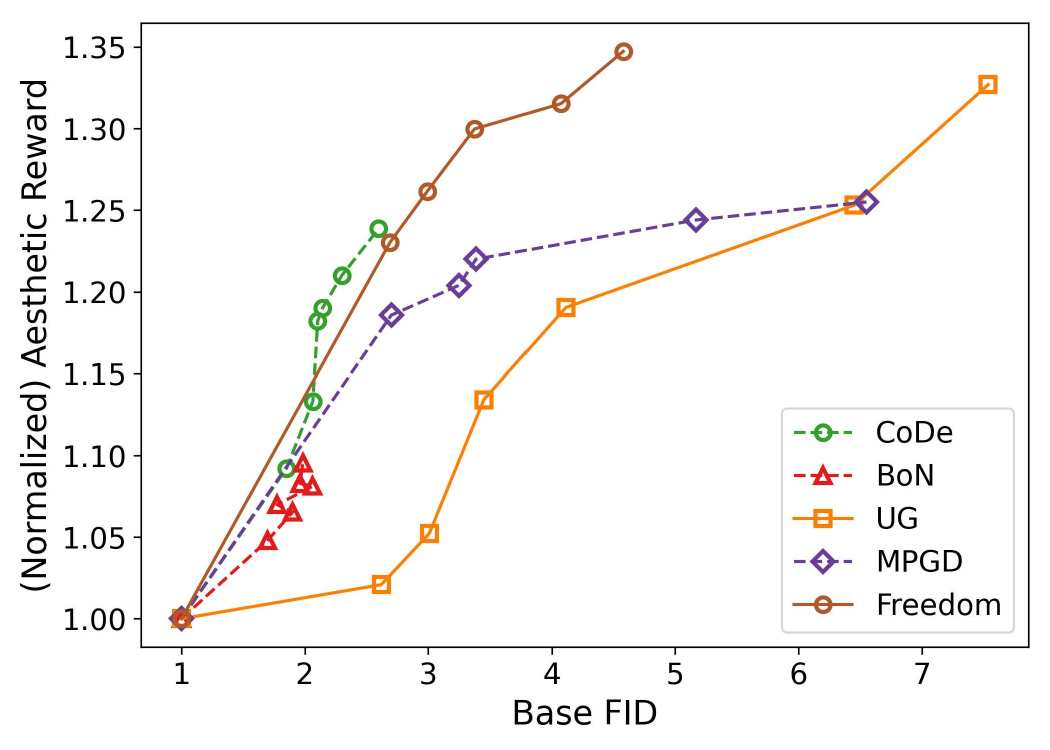}
    \end{subfigure}
    \begin{subfigure}{0.49\textwidth}
        \centering
        \includegraphics[width=\linewidth]{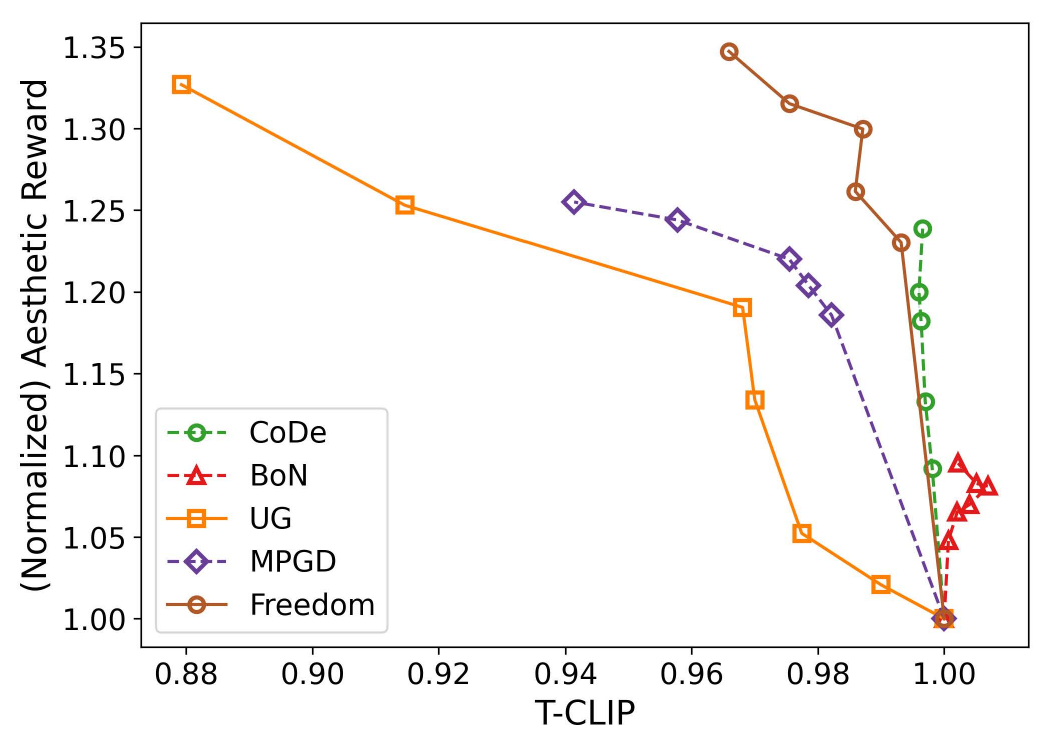}
    \end{subfigure}
    \begin{subfigure}{0.49\textwidth}
        \centering
        \includegraphics[width=\linewidth]{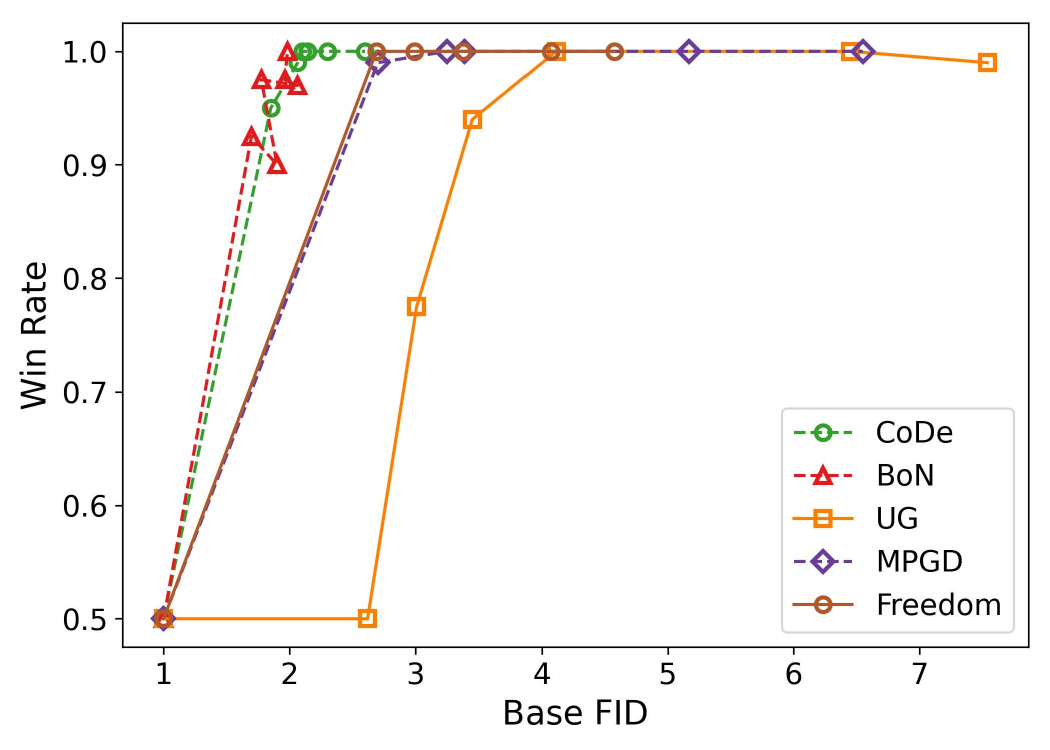}
    \end{subfigure}
    \begin{subfigure}{0.49\textwidth}
        \centering
        \includegraphics[width=\linewidth]{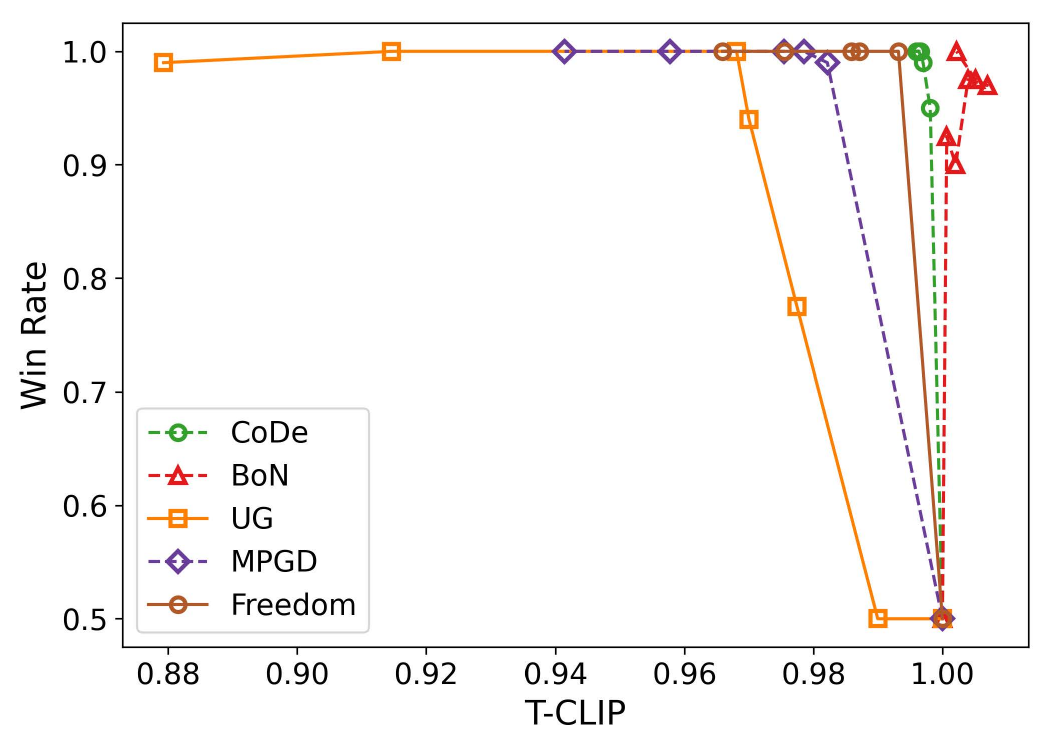}
    \end{subfigure}
    \caption{Reward vs. divergence trade-off curves for aesthetic guidance.}
    \label{fig:aesthetic_trad}
    \vspace{-3pt}
\end{figure}

\clr{\textbf{Aesthetic Guidance}.  \clr{Given the tradeoff curves in Figs.~\ref{fig:aesthetic_trad}, we observe that \ourmethod{} offers better or on par rewards as compared to MPGD \citep{he2024manifold} for smaller FID and higher T-CLIP scores, thus offering a better reward vs divergence tradeoff. When compared to Freedom \cite{yu2023freedom} and UG \cite{Bansal2024UniversalModels}, \ourmethod{} achieves competitive or lesser rewards but offers better text alignment (T-CLIP) and lower divergence from the base distribution (FID, CMMD). This also corroborates in Fig.~\ref{fig:aesthetics} where UG generates aesthetic images that do not completely adhere to the text-prompt leading to reward over-optimization.}

}
\clearpage

\textbf{Computation Complexity.} We present a breakdown of the computational complexities of all baselines across each of the guidance scenarios. DPS is considerably faster across all three generation scenarios among the gradient-based guidance methods. This is due to the \(m\) gradient and \(K\) refinement steps used in UG, which are not used in DPS. The difference is more pronounced in the case of style- and stroke guidance as UG uses a higher number of gradient steps \(m\). Further, among the sampling-based approaches, SVDD-PM is an order of magnitude slower than BoN as it applies token-wise guidance. On the contrary, our blockwise approaches \ourmethod{}, \ourmethod($\eta$) are more efficient than UG and SVDD-PM and closely follow BoN. 

\begin{table}[h]
\caption{Computational Complexity} \vspace{-0.5em}
\label{tab:complexity-appendix}
  \centering
  \renewcommand{\arraystretch}{1.2}
  \resizebox{0.7\linewidth}{!}{
  \small
  \begin{tabular}{l|c|c|ccc}
    \hline
    \multirow{2}{*}{\textbf{Methods}} & \multirow{2}{*}{\textbf{Inf. Steps}} & \multirow{2}{*}{\textbf{Rew. Queries}} & \multicolumn{3}{c}{\textbf{Runtime [sec/img]}} \\
    \cline{4-6}
    & & & Style & Face & Stroke \\
    \hline
    Base-SD \citeyear{Rombach2021High-ResolutionModels} & \(T\) & - & 14.12 & 14.12 & 14.12 \\ \hline
    BoN \citeyear{Gao2022ScalingOveroptimization} & $NT$ & \(N\) & 266.02 & 268.43 & 265.86 \\
    SVDD-PM \citeyear{Li2024Derivative-FreeDecoding} & $NT$ & \(NT\) & 1168.74 & 1859.67 & 1169.68\\
    \cdashline{1-6}
    DPS \citeyear{Chung2023DiffusionProblems} & $T$ & \(T\) & 62.52 & 122.21 & 61.83\\
    UG \citeyear{Bansal2024UniversalModels} & $mKT$ & \(mKT\) & 1588.41 & 543.12 & 1592.89\\
     \hline
    \rowcolor{tabBlue}CoDe & \(NT\) & \(NT/B\) & 441.81 & 583.12 & 442.08\\
     \rowcolor{tabBlue}CoDe($\eta$) & \(N\eta\,T\) & \(N\eta\,T/B\) & 331.42 & 403.19 & 274.56\\
    \hline
  \end{tabular}
  }
\end{table}
\clearpage 

\section{Details for Estimating KL Divergence }
\label{sec:kl_comp_details}
To compute the KL divergence between the guided and the base diffusion model, we draw on some existing results that give us an upper bound on the KL divergence between \ourmethod{} and the base diffusion model, which is given by the following:
\begin{lemma}
    \label{lem:KL-bound}
    \begin{equation}
    \textit{KL}(\ourmethod(N,B) \, \| \, \text{Base}) \leq  \left(\log N - \frac{N-1}{N}\right) \times \frac{T}{B}.    
\end{equation}
\end{lemma}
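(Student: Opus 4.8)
The plan is to decompose the KL divergence along the $T/B$ blocks of the denoising chain and to control each block's contribution by the known cost of best-of-$N$ selection. Assume, as in Algorithm~\ref{algo:code}, that $B$ divides $T$, and let $z_0 = x_T, z_1, \dots, z_{T/B} = x_0$ denote the states at the block boundaries, so that $z_j$ is the state reached after $jB$ denoising steps. Under the base model, $z_0 \sim \gN(0,I)$ and $z_j \mid z_{j-1}$ follows the $B$-step marginal transition kernel $K_j^{\mathrm{base}}(\cdot\mid z_{j-1})$ of the reverse process. Under \ourmethod{}, $z_0 \sim \gN(0,I)$ as well; and, by the Markov property of the reverse process together with the fact that the value $V(x_{t-1};p,c)$ depends only on $x_{t-1}$, the conditional law of $z_j$ given $z_{j-1}$ is obtained by drawing $N$ i.i.d.\ block rollouts from $K_j^{\mathrm{base}}(\cdot\mid z_{j-1})$ and retaining the endpoint with largest $V$ --- that is, $z_j \mid z_{j-1}$ follows the \emph{best-of-$N$} kernel built from $K_j^{\mathrm{base}}(\cdot\mid z_{j-1})$ with selection reward $V$.

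The first step is a reduction: since the returned image $x_0 = z_{T/B}$ is a deterministic function of the boundary chain $(z_0,\dots,z_{T/B})$, non-increase of KL under marginalization (the data-processing inequality) gives
\begin{equation}
    \textit{KL}\big(\ourmethod(N,B) \,\|\, \mathrm{Base}\big)
    \;\le\;
    \textit{KL}\big(P^{\ourmethod{}}_{z_{0:T/B}} \,\big\|\, P^{\mathrm{base}}_{z_{0:T/B}}\big).
\end{equation}
The second step applies the chain rule for KL divergence. Since both boundary chains are Markov and share the initial law $\gN(0,I)$, the chain rule collapses to
\begin{equation}
    \textit{KL}\big(P^{\ourmethod{}}_{z_{0:T/B}} \,\big\|\, P^{\mathrm{base}}_{z_{0:T/B}}\big)
    \;=\;
    \sum_{j=1}^{T/B} \mathbb{E}_{z_{j-1}\sim P^{\ourmethod{}}}\!\Big[\textit{KL}\big(K_j^{\ourmethod{}}(\cdot\mid z_{j-1}) \,\big\|\, K_j^{\mathrm{base}}(\cdot\mid z_{j-1})\big)\Big].
\end{equation}

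The third step bounds each summand. For any fixed $z_{j-1}$, the conditional $K_j^{\ourmethod{}}(\cdot\mid z_{j-1})$ is exactly the best-of-$N$ distribution associated with $K_j^{\mathrm{base}}(\cdot\mid z_{j-1})$ under the reward $V$, so the standard bound on the KL cost of best-of-$N$ selection \citep{Beirami2024TheoreticalPolicy} --- which holds for an arbitrary reward function, ties only decreasing the divergence, so that no continuity assumption on $V$ is required --- gives, uniformly in $z_{j-1}$,
\begin{equation}
    \textit{KL}\big(K_j^{\ourmethod{}}(\cdot\mid z_{j-1}) \,\big\|\, K_j^{\mathrm{base}}(\cdot\mid z_{j-1})\big) \;\le\; \log N - \frac{N-1}{N}.
\end{equation}
Summing over the $T/B$ blocks yields the claimed bound $\big(\log N - \tfrac{N-1}{N}\big)\cdot\tfrac{T}{B}$. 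I expect the main obstacle to be this third step: one must argue cleanly that retaining only the block endpoint (rather than the whole intra-block rollout) still defines a genuine best-of-$N$ kernel --- which is exactly where the Markov property of the reverse diffusion and the dependence of $V$ on $x_{t-1}$ alone are used --- and one must invoke the best-of-$N$ KL bound in its general-reward form rather than only the continuous-reward (equality) case. As consistency checks, $B = T$ recovers the single-selection bound $\log N - \tfrac{N-1}{N}$ of BoN, while $B = 1$ gives the much larger $T\big(\log N - \tfrac{N-1}{N}\big)$, in line with the stronger over-optimization observed for SVDD-PM.
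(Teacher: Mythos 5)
Your proposal is correct and follows essentially the same route as the paper, whose proof is a one-line citation deferring the blockwise chain-rule decomposition to \citep[Theorem B.1]{Beirami2024TheoreticalPolicy} and the per-intervention bound $\log N - \tfrac{N-1}{N}$ to \citep[Theorem 1]{mroueh2024information}. Your write-up simply makes explicit the steps those references supply --- the reduction to the boundary chain, the Markov chain rule, and the best-of-$N$ KL bound applied uniformly in $z_{j-1}$ --- including the correct observation that ties in $V$ only help, so no continuity assumption is needed.
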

\begin{proof}
    The proof follows the same lines as \citep[Theorem B.1]{Beirami2024TheoreticalPolicy}, with the exception that we need to resort to \citep[Theorem 1]{mroueh2024information} to bound the KL divergence of each intervention.
\end{proof}
For BoN where the block size $B = T$, the KL divergence is upper bounded by $$\textit{KL}(\text{BoN}(N) \, \| \, \text{Base}) \leq \log N  - \frac{N-1}{N},$$ which is directly implied by \citep[Theorem 1]{mroueh2024information} as well.  For SVDD-PM where $B=1$, the KL divergence is upper bounded by $$\textit{KL}(\text{SVDD-PM}(N) \, || \, \text{Base}) \leq \big(\log N - \frac{N-1}{N}\big)\times T.$$ Since the noise-conditioned variants of these methods only denoise for $\eta T$ steps instead of the full $T$ steps, the KL divergences are upper bounded using 
\begin{align}
\label{eq:kllog}
    \textit{KL}(\ourmethod(N,B,\eta) \, \| \, \text{Base}) &\leq  \left(\log N - \frac{N-1}{N}\right) \times \frac{\eta T}{B}  ,  \\
    \textit{KL}(\text{BoN}(N,\eta) \, \| \, \text{Base}) &\leq \log N - \frac{N-1}{N} ,\\
    \textit{KL}(\text{SVDD-PM}(N, \eta) \, \| \, \text{Base}) &\leq \left(\log N - \frac{N-1}{N}\right) \times \eta T.
\end{align}

\subsection{Numerical computation of KL divergence for Gaussian models (Case Study I)}
In Section~\ref{sec:toy}, to estimate the KL divergence between the base and guided models, we first generate $1000$ samples from the base diffusion model and the reward guided model each. Then assuming Gaussian densities for both, we compute the mean and variance for each of the distributions and then use the closed-form expression to calculate the KL divergence between two Gaussians. We notice that in this setting when we reach the degeneracy limit, the bounds suggested by Lemma~\ref{lem:KL-bound} are loose, particularly for all SVDD-PM experiments in Section~\ref{sec:toy}. This is a known issue with these KL bounds and has been discussed by~\citet{Beirami2024TheoreticalPolicy}.

\clearpage

\section{\clr{General Guidelines for Setting \ourmethod's $N, B, \eta$}}
\label{app:guidelines}
\clr{\ourmethod{} utilizes three parameters $N, B, \eta$ in order to guide the diffusion denoising process towards a reward-tilted posterior. The interplay between these three parameters has been demonstrated through various reward vs divergence tradeoff curves, reward vs text alignment tradeoff curves (Figs.~\ref{fig: style-trad},~\ref{fig:face_trad},~\ref{fig:aesthetic_trad}) and performance vs efficiency tradeoff curves (Fig.~\ref{fig:compute_trad}). In this section, we discuss the impact of each parameter on the guidance process and then provide a general set of guidelines on how to choose these values based on different tasks.
\begin{itemize}

\item $\bm{N, B}$: 
 
      Intuitively, $N$ and $B$ impact the exploration of the prior distribution thus controlling the chances of sampling from a higher reward region (modes of the reward-distribution) while denoising.
      
      Practically, increasing $N$ leads to higher rewards or more reward-aligned generated images. However, increasing $N$ also leads to a higher divergence from the base distribution (FID, CMMD, KL Divergence), lower text-alignment (T-CLIP) and a linear increase in computational complexity (inference steps and reward queries). 
      
      On the other hand, reducing $B$ increases the number of times the denoising process is diverted towards a high reward region in its distribution thus increasing reward-alignment in generated images. Reducing $B$ also leads to a higher divergence from the base distribution (FID, CMMD, KL Divergence), lower text-alignment (T-CLIP) and a exponential increase in computational complexity (inference steps and reward queries). 
      
      The divergence increases logarithmically in $N$ (\cref{eq:kllog}) and the compute increases linearly in $N$ (Tab.~\ref{tab:complexity-appendix}).

      On the other hand, divergence and compute both increase exponentially as $B$ reduces (\cref{eq:kllog}, Tab.~\ref{tab:complexity-appendix}).

\item $\bm{\eta}$ : 
  
  Intuitively, $\eta$ controls the degree of conditioning of the input reference image on the generated image. For a smaller $\eta$, the denoising process starts from a slightly noised version of the input reference image and only denoises the image for $\eta T$ steps instead of the full $T$ steps, thus also reducing the total number of steps that could lead to reward-alignment (Section~\ref{sec: full_code} Alg.~\ref{algo:code},~\ref{algo:full_code}).

  Thus, in practice, reducing $\eta$ leads to an increase in reference image alignment and a reduction in reward and text alignment. In cases where the reference image is sampled from the reward distribution (style, face and stroke guidance in (T+I)2I settings with \ourmethod{}($\eta$)), reducing $\eta$ leads to an increase in reward alignment.

  The computational complexity varies linearly with $\eta$.   

\end{itemize}

Depending on the nature of the task and the divergence of the reward distribution from the prior, the guidelines mentioned above can be used to increase/decrease $N$, $B$, $\eta$ for the desired tradeoffs.
}
\clearpage

\section{\clr{Adaptive Control on $N, B$ for \ourmethod}}
\label{sec:dynamic}

\clr{Various related works \citep{raya2023spontaneous, kim2023leveraging, sclocchi2025phase} have empirically demonstrated that different denoising steps have varying impacts on the final output. The early stages primarily determine the overall structure of the image, whereas the latter stages refine the fine-grained details. Given this observation, \ourmethod{} could potentially improve its performance by incorporating an adaptive control mechanism that adjusts its blockwise sampling strategy based on the denoising stage. In order to asses whether this adaptive strategy could help \ourmethod{} further improve its performance, we perform an ablation experiment by varying $N, B$ across the first half $T_1 \in [500,250]$ and second half $T_2 \in (250,0]$ of the diffusion denoising process, for the T2I compression guidance scenario. We experiment on a total of four cases by varying $N, B$ across the two halves of denoising. In the first two cases, $N=40$ throughout denoising and $B=1$ for $T_1$, $B=5$ for $T_2$ or $B=5$ for $T_1$, $B=1$ for $T_2$ (rows with $N=40, B_1=1, B_2=5$ and $N=40, B_1=5, B_2=1$ in Tab.~\ref{tab:dyn}). In the next two cases, $B=5$ throughout denoising and $N=40$ for $T_1$, $N=20$ for $T_2$ or $N=20$ for $T_1$, $N=40$ for $T_2$ (rows with $N_1=40, N_2=20, B=5$ and $N_1=20, N_2=40, B=5$ in Tab.~\ref{tab:dyn}). We observe that hyperparameters that enforce stronger reward-guidance in the second half of denoising ($T_2$) offer higher rewards as compared to the ones which enforce it in the first half ($T_1$). $N_1=40, N_2=20, B=5$ offers lower rewards, lower divergence (FID, CMMD) and on-par text alignment (T-CLIP) as compared to $N=[20,40], B=5$. Similar observations extend for ablating on $B$ where $N=40, B_1=1, B_2=5$ offers lower reward, lower divergence (FID, CMMD) and slightly lower text alignment (T-CLIP) as compared to $N=40, B_1=5, B_2=1$. We would like to clarify that except for the experiments where we swap $N_1$ and $N_2$ or $B_1$ and $B_2$, the ratio of $N/B$ does not remain the same. Note that using a higher $N$ or lower $B$ throughout denoising to exert stronger inference-time guidance results in higher rewards and higher divergence as compared to their adaptive control counterparts. These observations for using stronger guidance in the second half of denoising intuitively align with the task of generating compressible images, where the fine-grained details and the texture of the image is to be controlled. But for other rewards that demand control over the structure and positioning of objects in generated images, a stronger reward-guidance signal in the first half of denoising might be more effective.     
}
\begin{table}[h]
\small
\caption{\small \clr{Quantitative metrics for compression reward with adaptive control.}}\vspace{-1em}
\label{tab:dyn}
\centering
\def\arraystretch{1.15}
\resizebox{0.7\linewidth}{!}{
\clr{
\begin{tabular}
{l|c|c|c|c|c}
\hline
\multirow{2}{*}{\textbf{Method}} & \multicolumn{5}{c}{\textbf{Compressibility Reward - T2I}}\\ \cline{2-6}  
                                 & \scriptsize \textbf{H.P.} & \scriptsize \textbf{Rew.} ($\uparrow$) & \scriptsize \textbf{FID} ($\downarrow$) & \scriptsize \textbf{CMMD} ($\downarrow$) & \scriptsize \textbf{T-CLIP} ($\uparrow$)\\ \hline
Base-SD & - & 1.0  & 1.0 & 1.0 & 1.0 \\ \hline
BoN & $N=100, B=T$ & 1.23  & 1.10 & 1.70 & 0.99 \\
SVDD-PM & $N=40, B=1$ & 1.83  & 2.86 & 61.75 & 0.88 \\
\hline
\rowcolor{tabBlue} \textbf{\ourmethod} & $N=20, B=5$ & 1.54 &	2.01 &	12.83 &	0.97 \\
\rowcolor{tabBlue} \textbf{\ourmethod} & $N=40, B=5$ & 1.65 &	2.12 &	32.70 &	0.95 \\ \hline
\rowcolor{tabBlue} \textbf{\ourmethod} & $N=40, B_1=1, B_2=5$ & 1.69 &	2.36 &	42.51 &	0.94 \\
\rowcolor{tabBlue} \textbf{\ourmethod} & $N=40, B_1=5, B_2=1$ & 1.76 &	2.41 &	48.30 &	0.95 \\
\rowcolor{tabBlue} \textbf{\ourmethod} & $N_1=40, N_2=20, B=5$ & 1.58 &	2.08 &	15.35 &	0.97 \\
\rowcolor{tabBlue} \textbf{\ourmethod} & $N_1=20, N_2=40, B=5$ & 1.60 &	2.11 &	17.97 &	0.97 
 \\ \hline
\end{tabular}
}
}
\end{table}

\clearpage

\section{Reward Over-Optimization in Compression Guidance for SVDD-PM($\bf \eta$)}
\label{sec:rhack}
Following section \ref{sec:nondiff}, Fig.~\ref{fig: comp_vs_kl}, we demonstrate a few images generated in the compressibility guidance scenario with SVDD-PM($\eta$) for $N=[20,30,40,100]$, where reward over-optimization occurs. As can be seen in Fig.~\ref{fig:svddpmrh}, higher values of $N$ for SVDD-PM($\eta$)'s guidance lead to degenerate generation of images, where the text prompt and reference image alignment is compromised at the cost of high compressibility reward. The generated images roughly follow the color palette of the reference image but fail to meaningfully incorporate the style and aesthetics of the reference image. Moreover, the images also do not resemble natural images, empirically corroborating the high KL-divergence w.r.t. the base distribution in Fig.~\ref{fig: comp_vs_kl}.
\begin{figure}[ht]
    \centering
    \includegraphics[width=0.85\linewidth]{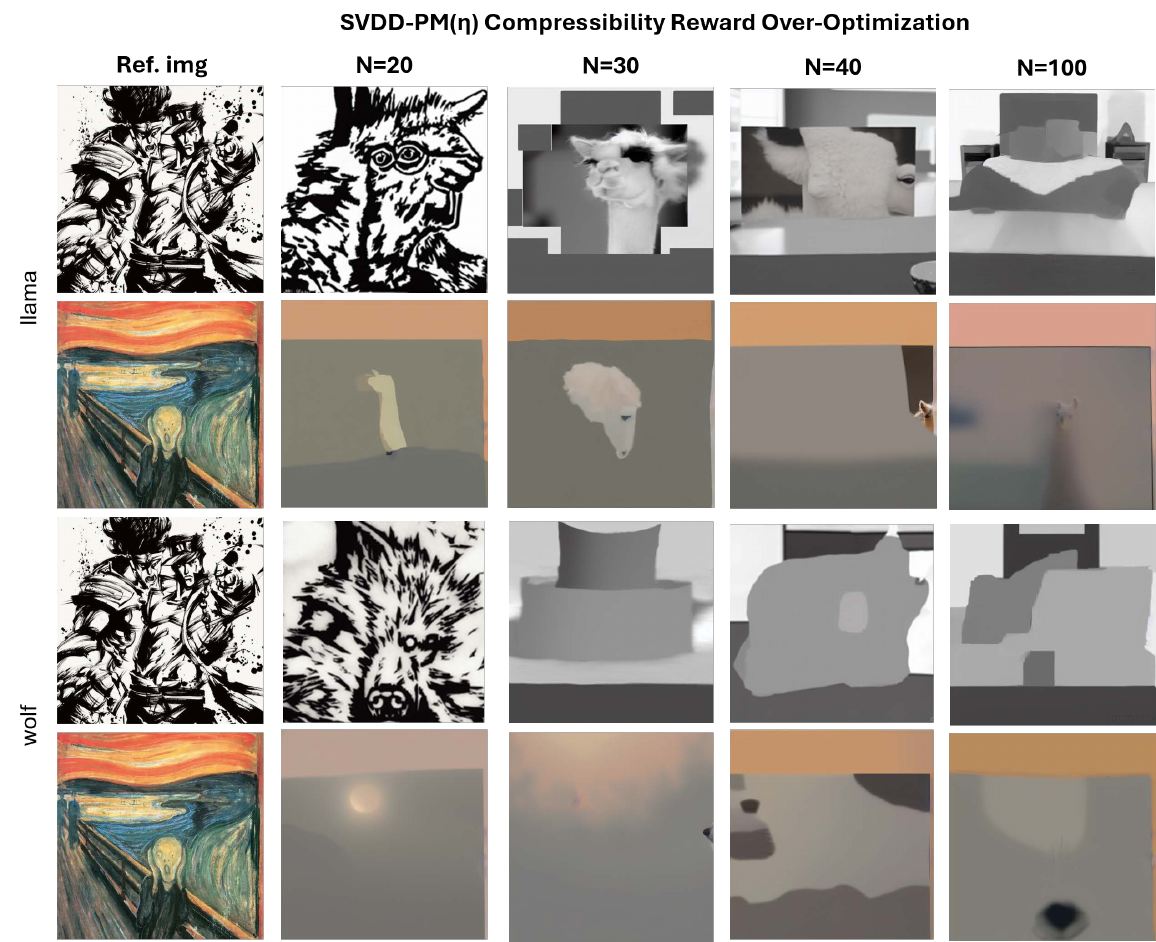}
    \caption{Qualitative examples of reward over-optimized images from SVDD-PM($\eta$) for $N=[20,30,40,100]$, in the compression guidance scenario.}
    \label{fig:svddpmrh}
\end{figure}

\clearpage

\section{UG with a high guidance scale offers low text alignment}
\label{sec:ugrh}
Following section \ref{sec:ablations}, Fig.~\ref{fig: ablation}, we illustrate a few generated samples of UG across four settings for style guidance with higher guidance scales of $12$ and $24$ to qualitatively corroborate their low text-alignment. As can be seen in Fig.~\ref{fig:ug_rh}, the generated images offer high alignment with respect to the reference image but fail to incorporate any meaningful features of the text prompts. None of the generated images resemble the Eiffel tower or the portrait of a woman.

\begin{figure}[ht]
    \centering
    \includegraphics[width=0.85\linewidth]{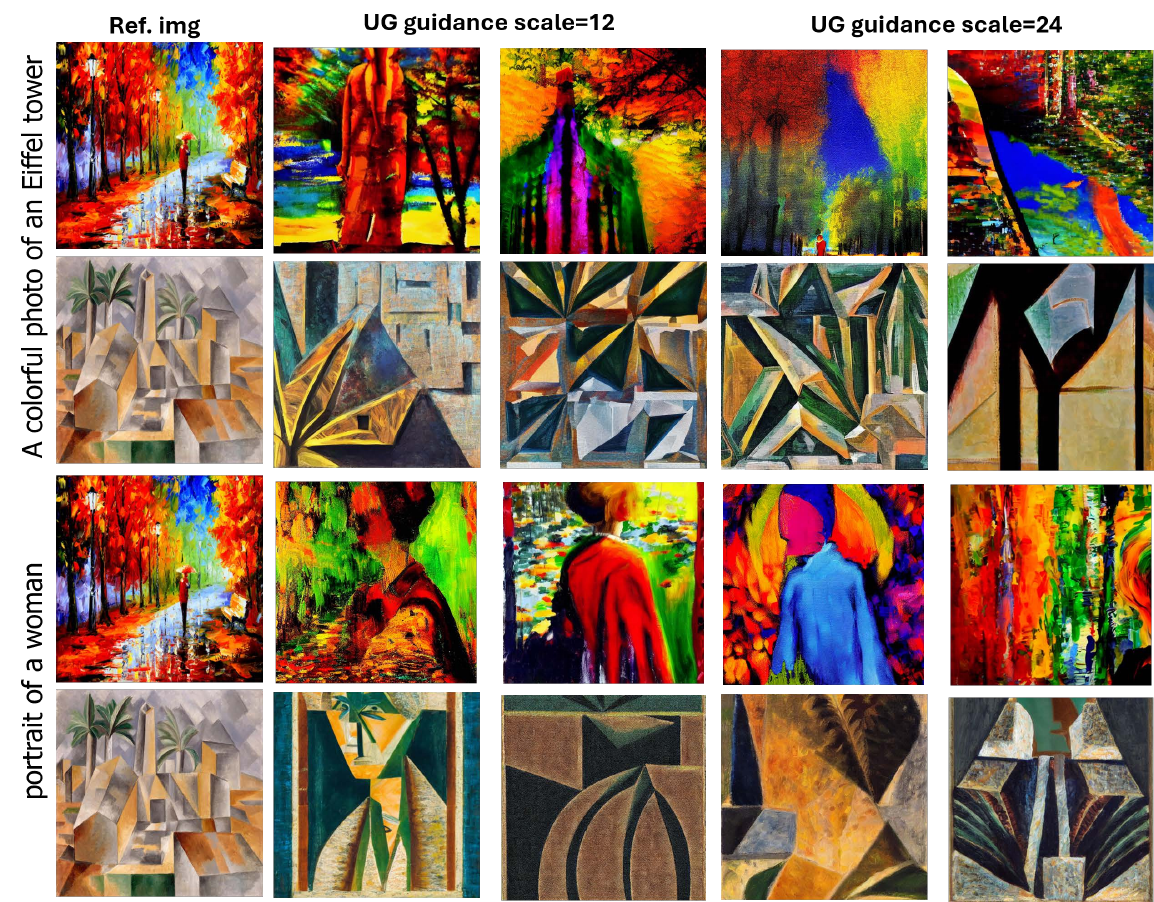}
    \caption{Qualitative examples of low text-alignment (T-CLIP) for UG with higher guidance scales, in the style guidance scenario.}
    \label{fig:ug_rh}
\end{figure}
\clearpage

\clr{\section{Correlation of I-GRAM with I-CLIP Reward}}
\clr{The reason why we also use I-Gram to indicate reward-alignment (instead of only expected reward per scenario) in our evaluations is because the expected reward has already been \emph{seen} by the model throughout the guidance process. Thus, we test all guidance methods on the \emph{unseen} I-GRAM reward-alignment metric to provide a holistic evaluation. That being said, we do notice slight discrepancies in the behavior of the I-GRAM scores and the CLIP-image similarity reward due to the differences in what each of these metrics capture. The I-GRAM score has been shown to capture similarities in style and texture of two images \citep{gatys2016image} whereas the CLIP-image similarity score measures semantic similarity between any two images. To analyze the correlation between these two metric qualitatively, we present a few generated images for the style guidance scenario in Fig.~\ref{fig:rewvsig}. As can be seen, for the first and second generated images, the reward and I-GRAM for the second image is higher than the first, indicating alignment between the two metrics. However, for the first and third image, the reward is higher for the third image but vice-versa for I-GRAM. Similar alignment and misalignment patterns can be observed for pairs of other images. We attribute this discrepancy to the texture and semantic differences of the generated and reference images. Additionally, to confirm that I-GRAM can be used as an evaluation metric for reward guidance using CLIP similarity, we computed the Pearson Correlation coefficient between the two metrics across a subset of all generated images in the style guidance scenario. We observe a correlation coefficient of $0.87$ between Reward and I-GRAM, indicating a positive correlation / direct proportionality between the two.

\begin{figure}[ht]
    \centering
    \includegraphics[width=\linewidth]{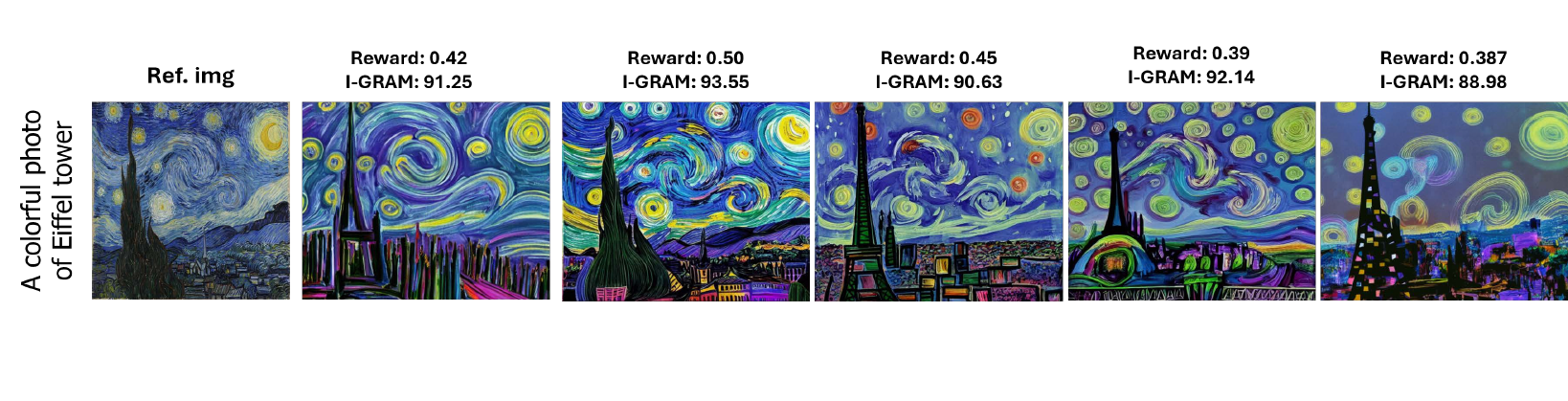}
    \caption{Qualitative demonstration of Reward vs I-GRAM for different style guidance generated images.}
    \label{fig:rewvsig}
\end{figure}

}
\clearpage

\section{Miscellaneous Results}
In this section, we illustrate several additional generated images across all baselines and guidance scenarios. We also provide additional results for \ourmethod, \ourmethod($\eta$) across various different reference images and text prompt pairs, that are different from the ones already explored in the main manuscript, in Figs.~\ref{fig:style2},~\ref{fig:extra1},~\ref{fig:extra2}.

\begin{figure}[ht]
    \centering
    \includegraphics[width=0.85\linewidth]{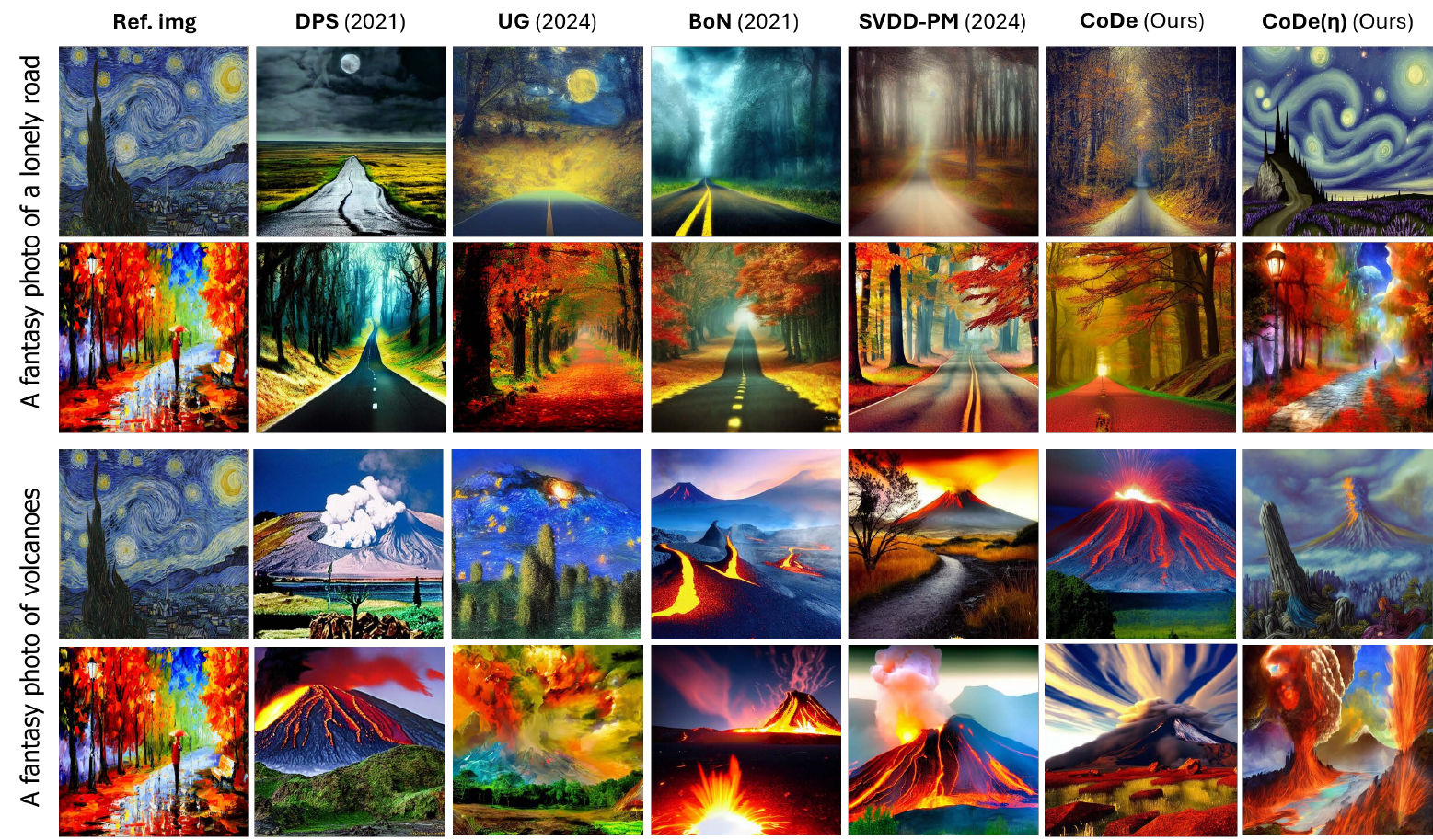}
    \caption{Quality evaluation across methods for style guidance on additional settings without noise-conditioning.}
    \label{fig:style2}
\end{figure}




\begin{figure}[h]
    \centering
    \includegraphics[width=0.8\linewidth]{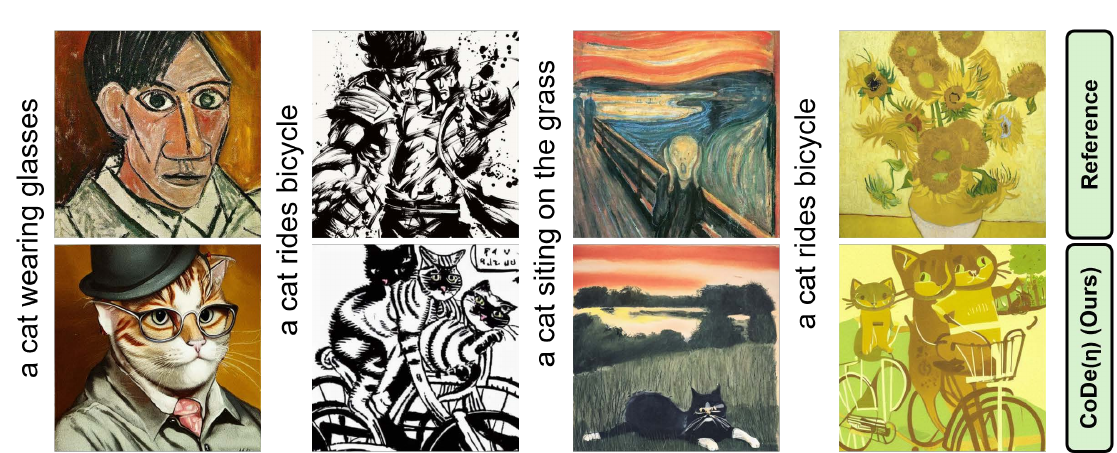}
    \caption{Quality evaluation of \ourmethod{} for style guidance on additional settings.}
    \label{fig:extra1}
\end{figure}

\begin{figure}[h]
    \centering
    \includegraphics[width=0.7\linewidth]{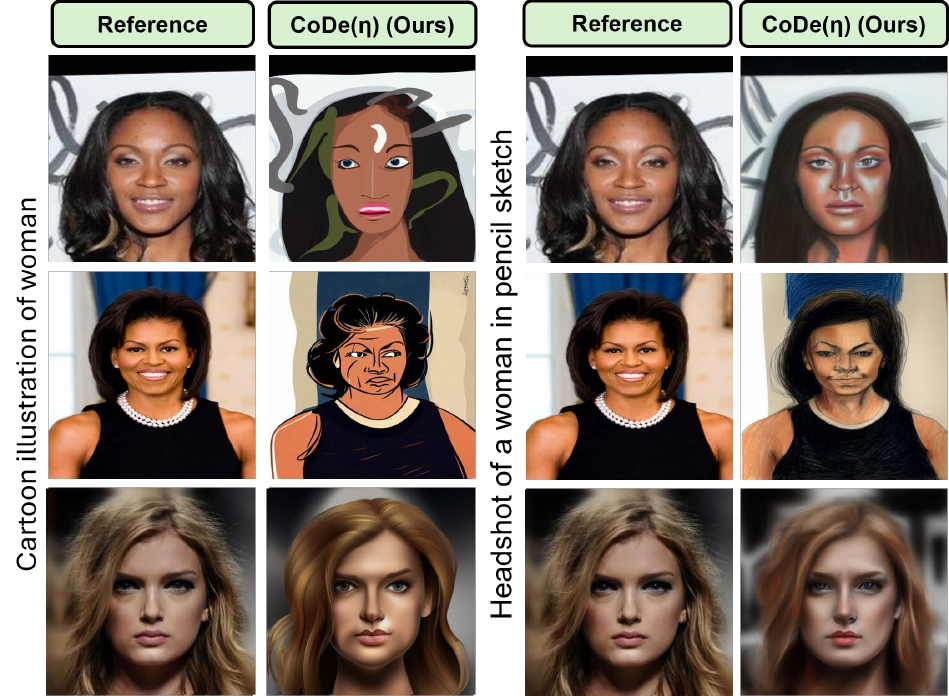}
    \caption{Quality evaluation of \ourmethod{} for face guidance on additional settings.}
    \label{fig:extra2}
\end{figure}

\end{document}